\newtheorem{theorem}{Theorem}[section]
\newtheorem{proposition}[theorem]{Proposition}
\newtheorem{lemma}[theorem]{Lemma}
\newtheorem{corollary}[theorem]{Corollary}
\newtheorem{definition}[theorem]{Definition}
\newtheorem{assumption}[theorem]{Assumption}
\newcommand{\reals}{\mathbb{R}}
\newcommand{\E}{\mathbb{E}}
\newcommand{\sign}{\mathrm{sign}}
\newcommand{\argmax}[1]{\underset{#1}{\mathrm{argmax}}}
\newcommand{\relu}[1]{\left[ #1 \right]_+}
\newcommand{\set}[1]{\left\{#1\right\}}
\newcommand{\ba}{\mathbf{a}}
\newcommand{\be}{\mathbf{e}}
\newcommand{\bx}{\mathbf{x}}
\newcommand{\bX}{\mathbf{X}}
\newcommand{\bw}{\mathbf{w}}
\newcommand{\bb}{\mathbf{b}}
\newcommand{\bu}{\mathbf{u}}
\newcommand{\bv}{\mathbf{v}}
\newcommand{\bz}{\mathbf{z}}
\newcommand{\by}{\mathbf{y}}
\newcommand{\bY}{\mathbf{Y}}
\newcommand{\bxi}{\boldsymbol{\xi}}
\newcommand{\bzero}{\mathbf{0}}
\newcommand{\Ocal}{\mathcal{O}}
\newcommand{\Dcal}{\mathcal{D}}
\newcommand{\Ncal}{\mathcal{N}}
\newcommand{\Pcal}{\mathcal{P}}
\newcommand{\Ucal}{\mathcal{U}}
\newcommand{\norm}[1]{\left\|#1\right\|}
\newcommand{\inner}[1]{\left\langle#1\right\rangle}
\newcommand{\p}[1]{\left(#1\right)}
\newcommand{\pcc}[1]{\left[#1\right]}
\newcommand{\abs}[1]{\left|#1\right|}
\newcommand{\ceil}[1]{\left\lceil#1\right\rceil}
\newcommand{\polylog}{\text{polylog}}
\newcommand{\one}[1]{\mathbbm{1}\left\{#1\right\}}
\newcommand{\pr}{\mathbb{P}}
\DeclareMathOperator{\daw}{daw}
\DeclareMathOperator{\interior}{int}
\newcommand{\secref}[1]{Sec.~\ref{#1}}
\newcommand{\figref}[1]{Fig.~\ref{#1}}
\renewcommand{\eqref}[1]{Eq.~(\ref{#1})}
\newcommand{\lemref}[1]{Lemma~\ref{#1}}
\newcommand{\corollaryref}[1]{Corollary~\ref{#1}}
\newcommand{\thmref}[1]{Thm.~\ref{#1}}
\newcommand{\propref}[1]{Proposition~\ref{#1}}
\newcommand{\appref}[1]{Appendix~\ref{#1}}
\newcommand{\itemref}[1]{Item~\ref{#1}}
\newcommand{\asmref}[1]{Assumption~\ref{#1}}
\title{How Many Neurons Does it Take to Approximate the Maximum?}
\author[1]{Itay Safran}
\author[2]{Daniel Reichman}
\author[1]{Paul Valiant}
\affil[1]{Purdue University}
\affil[2]{Worcester Polytechnic Institute}
\date{}
\begin{document}

\maketitle

\begin{abstract}
We study the size of a neural network needed to approximate the maximum function over $d$ inputs, in the most basic setting of approximating with respect to the $L_2$ norm, for continuous distributions, for a network that uses ReLU activations. We provide new lower and upper bounds on the width required for approximation across various depths. Our results establish new depth separations between depth 2 and 3, and depth 3 and 5 networks, as well as providing a depth $\mathcal{O}(\log(\log(d)))$ and width $\mathcal{O}(d)$ construction which approximates the maximum function. Our depth separation results are facilitated by a new lower bound for depth 2 networks approximating the maximum function over the uniform distribution, assuming an exponential upper bound on the size of the weights. Furthermore, we are able to use this depth 2 lower bound to provide tight bounds on the number of neurons needed to approximate the maximum by a depth 3 network. Our lower bounds are of potentially broad interest as they apply to the widely studied and used \emph{max} function, in contrast to many previous results that base their bounds on specially constructed or pathological functions and distributions.  
\end{abstract}

\section{Introduction}

How and why \emph{depth} helps neural networks to excel in applications is one of the central challenges in the quest to understand deep learning. Both classical circuit complexity and modern deep learning theory is guided by the intuition that a modest increase in depth often leads to drastic---and often exponential---improvements in the expressive power of the circuit or network described, along with concomitant improvements in key measures of performance, including efficiency as measured by width or neuron count, and  approximation accuracy. Despite this firm intuition, and much recent encouraging evidence of the \emph{practical} expressive power of deep networks (and hence also deep circuits), theoretical insight to illuminate these phenomena remains scarce; and each additional layer of depth adds often prohibitive new theoretical challenges. This paper contributes to this area by providing several new lower and upper bounds across a range of depths for a \emph{natural} function, with respect to a  meaningful notion of approximation, in a realistic circuit/network model with realistic parameters.

While it is known that any continuous function can be approximated arbitrarily well by a shallow (depth 2) network \citep{cybenko1989approximation,leshno1993multilayer}, these constructions of depth 2 approximations typically require exponential size, as a function of the input dimension in the worst case. This raises the natural question: when and how can increasing the depth beyond 2 allow us to drastically improve the width, so as to be polynomial in the input dimension.
Indeed, significant study has been directed towards understanding the benefit of depth in expressing functions. For example, there are several constructions of functions which can be approximated by a depth 3 and polynomial width network, but require exponentially many neurons to be approximated by a network of depth 2 \citep{eldan2016power,daniely2017depth,safran2017depth,safran2019depth,venturi2021depth,hsu2021approximation,safran2022optimization}. 
It is also known that highly oscillatory or even sufficiently non-linear functions may require exponentially fewer neurons when approximated by networks whose depth scales with the problem's parameters (e.g.\ the target accuracy) than by constant depth networks \citep{telgarsky2016benefits,yarotsky2017error,liang2017deep,safran2017depth}. While such results establish the expressive superiority of depth over width, the functions used to demonstrate this are at times somewhat pathological, the configuration of the network which approximates them efficiently is highly stylized, or the assumed data distribution is quite complex. Therefore we are motivated to study the effect of depth on efficiency for ``natural", well-studied functions that commonly arise in machine learning tasks. 

    We study the effect of depth on the quality of approximation for the maximum function 
    \[
        f_d(x_1,\ldots,x_d)\coloneqq\max\{x_1,\ldots,x_d\},
    \]
    where $(x_1, \ldots, x_d) \in \mathbb{R}^d$. 
    This function enjoys many favorable properties: It is a fundamental mathematical function; it has a very simple structure; it is easy to compute in linear time (assuming comparisons between real numbers can be done in time $\Ocal(1)$), and it is used explicitly in popular neural network architectures (e.g.\ max pooling\ \citep{krizhevsky2017imagenet,simonyan2014very,ioffe2015batch}). 
    Computing the maximum is important in reinforcement learning tasks (choosing an action maximizing the expected reward) and has received attention in theoretical neuroscience~\citep{kriener2020robust}.
    Several works have studied how to compute the maximum efficiently using a neural network~\citep{arora2016understanding,hertrich2021towards,matoba2022theoretical,haase2023lower}. However, most known results only deal with $L_{\infty}$ approximations or even exact computation of this function, which is a far more stringent notion of approximation than the $L_2$ approximation which is often the metric chosen for  machine learning applications, and is the metric we study in this paper (see related work subsection and \secref{sec:prelims} for further discussion). 
    
    We provide new lower and upper bounds for approximating the maximum function with respect to a broad class of distributions including the uniform distribution over the unit hypercube, and the Gaussian distribution (see \asmref{asm:dist}). We show that for any natural $k\le\Ocal(\log(\log(d)))$,\footnote{Unless stated otherwise all logarithms are base $2$.} ReLU networks of depth $2k+1$ and width $\mathcal{O}\left(d^{1+1/(2^k-1)}\right)$  can approximate the maximum function to arbitrary accuracy given sufficiently large weights. In particular, this implies a depth 3 and width $\mathcal{O}(d^2)$ approximation; a depth 5 and width $\mathcal{O}(d^{4/3})$ approximation; and a depth $\mathcal{O}(\log(\log(d)))$ and width $\mathcal{O}(d)$ approximation. Moreover, assuming an exponential upper bound on the size of the weights, we show corresponding lower bounds for approximating the maximum using ReLU networks. Specifically, we show a depth 2 lower bound requiring width $d^{\ell}$ for any $\ell\ge1$ for obtaining a target accuracy of $d^{-c\cdot\ell}$ for constant $c>0$, and a depth 3 lower bound requiring width $\Omega(d^2)$, establishing the tightness of our depth 3 upper bound.
    
    We remark that by performing a change of variables in our lower bounds, one can rescale the domain of approximation as desired at the cost of rescaling the target accuracy (see \lemref{lem:rescaling} for a formal statement). For example, rescaling the domain to $[0,d]$ where the maximum function has constant variance will multiply the target accuracy by a factor of $d^2$ in all of our lower bounds. Moreover, by further rescaling, our lower bounds show that the maximum function cannot be approximated to better than constant accuracy if the domain of approximation scales polynomially with the input dimension $d$. Due to the fact that our upper bounds on the required width are independent of the target accuracy (better accuracy is obtained by increasing the magnitude of the weights of the approximating network), scaling the domain of approximation does not affect the width requirement in our upper bounds. Therefore, our results also establish several new depth-based separations for approximating the maximum function to better than constant accuracy. See Table~\ref{tbl} for a comparison of our bounds.

    \begin{table}
        \caption{New results in this paper for approximating the maximum function using ReLU neural networks uniformly over the domain $[0,1]^d$. We provide a polynomial separation between depth 2 and 3, and for the same target function, we provide a polynomial separation between depth 3 and 5, requiring widths of $\Omega(d^2)$ and $\Ocal(d^{4/3})$, respectively. We also provide a novel upper bound which only requires a perhaps surprising network depth of $\Ocal(\log(\log(d)))$ for approximating the maximum using linear width. This in contrast to the previously best known approximation which requires width and overall size linear in $d$ but depth $\Ocal(\log(d))$ to compute the maximum exactly \citep{arora2016understanding,matoba2022theoretical}. The theta notation in the target accuracy column hides an absolute constant, and our depth 2 lower bound holds for any value of the parameter $\ell\ge1$. An asterisk $^{(*)}$ denotes that, beyond the ReLU activation, the bound applies to any activation satisfying a mild assumption (\asmref{asm:dist}); and a dagger $^{(\dag)}$ denotes that an exponential upper bound on the magnitude of the weights is assumed.}
        \label{tbl}
        \centering
        \begin{tabular}{c|cc|c}
            \toprule
            Depth & Target accuracy & Width lower bound & Width upper bound\\
            & & &(any accuracy)\\
            \midrule
            2 & $d^{-\theta(\ell)}$, any $\ell\geq 1$& $\Omega(d^{\ell})$~~(\thmref{thm:depth2})~$^{(*,\dag)}$ &\\
            3 & $d^{-\theta(1)}$ & $\Omega(d^2)$~~(\thmref{thm:depth3_lb})~$^{(\dag)}$ & $\Ocal(d^2)$~~(\thmref{thm:depth3_approx})\\
            5 & $d^{-4.5}$ & $d$~~(\thmref{thm:size_d_lb})~$^{(*)}$ & $\Ocal(d^{4/3})$~~(\thmref{thm:deep_approx})\\
            $\Ocal(\log(\log(d)))$ & $d^{-4.5}$ & $d$~~(\thmref{thm:size_d_lb})~$^{(*)}$ & $\Ocal(d)$~~(\thmref{thm:deep_approx}) \\
            \bottomrule
        \end{tabular}
    \end{table}

    It is interesting to compare lower bounds for continuous neural networks over continuous domains to lower bounds for discrete neural networks such as threshold circuits over $\{0,1\}^d$. Devising superlinear lower bounds for depth three threshold circuits (with polynomial weights on the output gate) has been obtained relatively recently after decades of research for a family of complicated functions~\cite{kane2016super} which cannot be computed by circuits with $o(d^{3/2}/\log ^3 (d))$ threshold gates. Our tight quadratic lower bound for depth $3$ networks with ReLU gates approximating the maximum function suggests that proving lower bounds for the continuous case is a more amenable task and that further superlinear lower bounds for bounded depth networks might be achievable over the continuous domain.
    
    The remainder of this paper is structured as follows: After presenting our contributions in this paper in more detail below, we discuss related work in the literature. In \secref{sec:prelims} we present the notation used throughout this paper. \secref{sec:positive} details our positive approximation results (upper bounds) and \secref{sec:negative} details our negative inapproximability results (lower bounds). Lastly, \secref{sec:summary} summarizes and discusses potential future work directions.
    
    \subsection*{Our contributions}

    \begin{itemize}
        \item
        We exhibit a construction to approximate the maximum function arbitrarily well---in the $L_2$ sense---using a depth 3 and width $d(d+1)$ ReLU network (\thmref{thm:depth3_approx}). Interestingly, to increase the accuracy of this construction, we increase the size of the weights, but do not need to change the network architecture. This construction arises from a piecewise-linear decomposition of the maximum function.

        \item 
        We ``compose" the above construction with itself, so as to enable different depth-width tradeoffs. This reinterpretation of \thmref{thm:depth3_approx} provides new upper bounds for expressing the maximum function across odd depths, with the required width dropping rapidly as larger depths are used (\thmref{thm:deep_approx}).
        

        \item
        By contrast, we show that these constructions at depths 3 and higher are impossible at depth 2: the width of a depth 2 network approximating the maximum function \emph{must} depend on the desired approximation accuracy. This thus shows a polynomial separation between depths 2 and 3 for approximating the maximum function. Our analysis relates to the seminal technique developed in \citet{eldan2016power}, analyzing the Fourier spectrum of the maximum function, to show that, assuming exponential upper bounds on the size of the weights, we show a lower bound for approximating the maximum function on a compact domain (\thmref{thm:depth2}). 

        \item 
        We show a polynomial separation between depth 3 and depth 5 neural networks.
        Using a combinatorial argument, we show that depth 3 ReLU networks with first hidden layer of width at most $d^2/5$ cannot capture the full structure of the maximum function on the hypercube, reducing the approximation error lower bound to the accuracy achieved by the second hidden layer. Using our previous lower bound in \thmref{thm:depth2}, this implies an approximation lower bound for depth 3 ReLU networks (\thmref{thm:depth3_lb}). Together with \thmref{thm:depth3_approx}, this establishes a tight bound of $\Theta(d^2)$ for approximation of the maximum using depth 3; and when combined with our \thmref{thm:deep_approx} which implies a depth 5 and width $\Ocal(d^{4/3})$ approximation, this provides a polynomial separation between depth 3 and 5.

        \item 
        Lastly, we observe that any neural network  (regardless of depth or activation function) approximating the maximum must have at least $d$ neurons in its first hidden layer. 
        Thus, known upper bounds on the number of ReLUs needed to compute the maximum precisely which require 
        size $\Ocal(d)$ (e.g.\ \citet{arora2016understanding,matoba2022theoretical}) are optimal up to a constant factor.


    \end{itemize}
    
    \subsection*{Related work}

    \paragraph{Exact computation and approximation of the maximum function}

    Quite a few recent works have studied the problem of exact computation of the maximum function using ReLU neural networks. 
    \citet{arora2016understanding} establish that any $d$-dimensional, piecewise-linear function can be expressed exactly using a depth $\lceil\log(d+1)\rceil$ ReLU network. Observe that we are able to obtain a more depth efficient implementation of the maximum function, although in contrast to \citet{arora2016understanding} we construct a network \emph{approximating} the function rather than computing it exactly, and our overall size requirement is slightly larger. The construction of \citet{arora2016understanding} implies that $\max\{x_1,x_2\}$ is computable by depth 2 ReLU networks, 
    and $\max\{x_1,\ldots,x_4\}$ is computable by depth 3 ReLU networks. In contrast, \citet{mukherjee2017lower} show that the function $(x_1,x_2)\mapsto\max\{0,x_1,x_2\}$ (which can be shown to be equivalent to computing the maximum over three inputs) cannot be computed exactly by a depth 2 ReLU network regardless of its width. \citet{hertrich2021towards} conjecture an analogous impossibility result for computing $\max\{0,x_1,\ldots,x_4\}$ exactly using a depth 3 network, and partially resolve it by assuming a certain restriction on the structure of the computing network. \citet{haase2023lower} further show the uncomputability of $\max\{0,x_1,\ldots,x_4\}$ assuming the computing network has depth 3 and integral weights, which also implies that the exact computation of $f_d(\cdot)$ using integer weights requires depth $\Omega(d)$. Since our approximation of $f_d(\cdot)$ can be done using integer weights, this shows a depth separation between exact and approximate computation of $f_d(\cdot)$ using integer weights, as the former requires depth $\Omega(\log(d))$ whereas for the latter depth $\Ocal(\log(\log(d)))$ suffices.

    Despite these efforts, this conjecture is still open. Furthermore, it is currently unknown whether there exists \emph{any} piecewise-linear function which cannot be computed exactly using depth 3 ReLU networks. We stress that the aforementioned lower bounds are concerned with \emph{exact} computation, whereas our notion of approximation is markedly different since we consider lower bounds with respect to the $L_2$ norm rather than requiring zero $L_{\infty}$ loss. This is a less stringent approximation requirement in the sense that an $L_2$ lower bound implies an $L_{\infty}$ lower bound, but not vice versa.

    
    In contrast to the exact computation requirement discussed above, \citet{matoba2022theoretical} consider approximations of the maximum function with respect to the $L_{\infty}$ norm. They show a family of networks that increase in accuracy as the size of the approximating network increases, whereas we provide a construction of a network achieving arbitrarily good accuracy with fixed width (by increasing the size of the weights). Additionally, under the restriction of the approximating network to have a certain symmetry structure, they also show approximation lower bounds for the maximum function, 
    however these do not hold in general if we relax the symmetry assumption, whereas our lower bounds hold only under the (mild) assumption that the approximating network has exponentially bounded weights.
    
    \paragraph{Separations between depth 2 and 3}
    The seminal work of \citet{eldan2016power} was the first to establish the existence of a (continuous) function that can be approximated efficiently using networks of depth 3, whereas any network of depth 2 would require width exponential in the input dimension to achieve better than constant accuracy. Later, \citet{daniely2017depth} showed a separation using a different technique which applies to a compactly supported distribution, but requires an exponential upper bound on the magnitude of the weights of the approximating depth 2 network. Following these works, additional separation results between depth 2 and 3 were shown (e.g.\ \citep{venturi2021depth,hsu2021approximation}), including reductions to the results of \citet{eldan2016power} and \citet{daniely2017depth} that however hold for much simpler functions than the ones originally used (e.g.\ \citep{safran2017depth,safran2019depth,safran2022optimization}). Nevertheless, due to the reduction proof technique used, these results inevitably inherit the arguably more complicated distributions over the data used in \citet{eldan2016power} and \citet{daniely2017depth}. 
    In contrast, our separation between depth 2 and 3 holds for both the simple maximum function and for the uniform distribution over a hypercube, albeit providing a polynomial separation and requiring an exponential upper bound on the magnitude of the weights.

    \paragraph{Limitations of deeper architectures}

   Moving beyond depth $2$, there are known constructions of functions that can be approximated by a small sized network (with no restriction on its depth), whereas an approximation to similar accuracy using constant depth networks may require exponentially many more neurons.
    Such lower bounds, however, are based on two main arguments and suffer from certain drawbacks making them incomparable to our results. We discuss these lower bounds and their limitations in more detail below.
    
    \subparagraph{Region-counting-based depth separations}
    The seminal work of \citet{telgarsky2016benefits} first established depth separations between deep architectures. It is shown that a deep ReLU network can realize a one-dimensional, rapidly oscillating sawtooth function, whereas a shallower architecture cannot generate sufficiently many linear segments to be able to approximate this function efficiently.
    If one wishes, however, to learn this efficient representation of this function using the deeper architecture, then it is known that this cannot be done efficiently using standard techniques such as the gradient descent algorithm \citep{malach2021connection}. Different lower bounds exist that build on this region counting proof technique, but focus on smooth and non-linear target functions that may be more prone to be learned efficiently using gradient descent \citep{yarotsky2017error,liang2017deep,safran2017depth}. Nevertheless, there's some theoretical work which shows that when initializing deep ReLU networks, the expected number of linear regions our initialization will have is merely linear in the size of the network, and further empirical evidence suggests that this number does not tend to increase significantly, indicating that depth will impart no practical benefit for approximating these target functions \citep{hanin2019complexity,vardi2021size}. On the other hand, our results which focus on the maximum function and do not rely on region counting, still leave open the possibility of an optimization-based result to be shown which will demonstrate this separation in a more practical setting. 

    
    \subparagraph{Size lower bounds and connections to circuit complexity}

    A different, less direct approach for showing approximation lower bounds for neural networks relies on the connection between threshold circuits and neural networks. \citet{mukherjee2017lower} derive sub-linear size lower bounds for neural networks by showing reductions to known threshold circuit lower bounds. \citet{vardi2021size} use communication complexity to provide linear size lower bounds for approximating a smoothed version of the binary IP mod 2 function. These results provide a different result than ours since they imply a linear width lower bound for approximating various functions using depth 3 networks, while we show a quadratic lower bound for depth 3 ReLU networks. Moreover, the lower bounds in the aforementioned papers are for the size of the network, rather than the required width for some given depth. For this reason such results cannot establish the superiority of depth over width, since these two quantities can be traded off evenly in such lower bounds, whereas our results establish a non-symmetric trade-off which indicates that depth is more efficient for approximating the maximum function.
    

    
    \section{Preliminaries and notation}\label{sec:prelims}

    \paragraph{Notation and terminology}
    We let $[n]$ be shorthand for the set $\{1,\ldots,n\}$. We denote vectors using bold-faced letters (e.g.\ $\bx$) and matrices or random variables using upper-case letters (e.g.\ $X$). Multivariate random variables are denoted using bold-faced upper-case letters (e.g.\ $\bX$). Given a vector $\bx=(x_1,\ldots,x_d)$, we let $\norm{\bx}_p$ denote its $\ell_p$ norm which is given by $\p{\sum_{i=1}^d|x_i|^p}^{1/p}$, where the case $p=\infty$ is defined as $\norm{\bx}_{\infty}=\max_{i\in[d]}|x_i|$. Throughout, we use the notation $f_d(\bx)\coloneqq\max\{x_1,\ldots,x_d\}$ for the maximum function, $\relu{x}=\max\{0,x\}$ for the ReLU activation function, and for any natural $k\ge1$ we use the notation $\beta(k)\coloneqq\frac{1}{2^k-1}$. A function $f:D\to\reals$ defined in some domain $D\subseteq\reals^d$ is \emph{(continuous) piecewise-linear} if there exists a finite partition $D=\cup_{i}D_i$ such that $f$ is linear on $D_i$ for all $i$, where each $D_i$ is a closed set which is referred to as a \emph{linear region} of $f$. We let $\Ucal(A)$ denote the uniform distribution on a set $A\subseteq\reals^d$.

    \paragraph{Neural networks}

    We consider fully connected, feed-forward neural networks, computing functions from $\reals^d$ to $\reals$. A $\sigma$ neural network consists of layers of neurons. In every layer except for the output neuron, an affine function of the inputs is computed, followed by a computation of the non-linear activation function $\sigma:\reals\to\reals$. The single output neuron simply computes an affine transformation of its inputs. Each layer with a non-linear activation is called a \emph{hidden layer}, and the \emph{depth} of a network is defined as the number of hidden layers plus one. The \emph{width} of a network is defined as the number of neurons in the largest hidden layer which we generally denote by $k$, and the \emph{size} of the network is the total number of neurons across all layers.

    \paragraph{Approximation error}
    Since we consider a regression setting in which a neural network $\Ncal:\reals^d\to\reals$ computes a real function of its input, we define our approximation error with respect to an underlying distribution $\Dcal$ on $\reals^d$ and we consider the square loss. Formally, given a predictor $\Ncal$, a target function $f:\reals^d\to\reals$ and an underlying distribution $\Dcal$, our approximation error is defined as
    \[
        \E_{\bx\sim\Dcal}\pcc{\p{\Ncal(\bx) - f(\bx)}^2}.
    \]
    In words, the $L_2$ approximation defined above corresponds to the expected square error when sampling an instance from the underlying distribution $\Dcal$, labelling it using the target function $f$ we are trying to approximate, and making a prediction using a given neural network hypothesis $\Ncal$. This makes this notion of approximation a natural choice for showing approximation lower and upper bounds, as a lower bound for certain class of neural network predictors implies the existence of a particular learning problem where the architecture being considered is unable to achieve population loss better than a certain quantity, whereas a network $\Ncal$ which achieves small loss implies that this class of networks can express a good predictor.

    \section{Deep ReLU approximations}\label{sec:positive}

    In this section, we focus on positive approximation results for the maximum function. We now begin with stating our assumptions on the underlying distribution generating the data, but first we would need the following definition.

    \begin{definition}
        Given some $\delta>0$, we say that a vector $\bx=(x_1,\ldots,x_d)$ is \emph{$\delta$-separated} if for all $i\neq j$, and $x_j\neq0$ we have that
        \[
            \frac{x_i}{x_j}\notin[1-\delta,1+\delta].
        \]
    \end{definition}
    We denote the set of $\delta$-separated vectors in $d$-dimensional space by
    \[
        S_{\delta}\coloneqq\set{\bx\in\reals^d:\bx\text{ is }\delta\text{-separated}}.
    \]
    The above definition essentially guarantees that each pair of coordinates in $\bx$ have a ratio whose distance from $1$ is at least $\delta$ for some real $\delta>0$. Since our construction is sensitive to instances where there are coordinates that are extremely close, we would need to make sure that points that violate $\delta$-separateness are sufficiently scarce. To this end, we make the following assumption on the distribution of the data.
    


    \begin{assumption}\label{asm:dist}
        The distribution $\Dcal$ satisfies the following:
        \begin{enumerate}
            \item\label{item:dist1}
            \[
                \E_{\bX\sim\Dcal}\pcc{\norm{\bX}_{\infty}^2}<\infty.
            \]
            \item\label{item:dist2}
            \[
                \lim_{\delta\to0}\pr_{\bX\sim\Dcal}\pcc{\bX\notin S_{\delta}}=0.
            \]
        \end{enumerate}
    \end{assumption}
    \itemref{item:dist1} merely requires that the tail of $\bX$ is sufficiently well-behaved in the sense of having a finite second moment for its infinity norm, and \itemref{item:dist2} requires that it becomes increasingly unlikely to draw an instance from $\Dcal$ which isn't $\delta$-separated as $\delta$ decreases. These hold, for example, when the coordinates of $\bX$ are i.i.d.\ and follow any absolutely continuous distribution with a bounded density and a finite second moment, or when a certain continuous noise is added to a sufficiently concentrated random variable $\bX$ (see \appref{app:delta_sep_dists} for formal examples).

    We now turn to formally state our positive approximation result for approximating the maximum function using depth 3 ReLU neural networks.

    \begin{theorem}\label{thm:depth3_approx}
        Let $\Dcal$ be any distribution satisfying \asmref{asm:dist}. Then for any $\varepsilon>0$ and natural $d\ge2$, there exists a ReLU network $\Ncal$ of depth $3$ and width $d(d+1)$ such that
        \[
            \E_{\bx\sim\Dcal}\pcc{\p{\Ncal(\bx) - f_d(\bx)}^2} \le \varepsilon.
        \]
    \end{theorem}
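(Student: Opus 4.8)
The plan is to exhibit an explicit two–parameter family of depth-$3$ networks $\Ncal_{T,M}$, with a shift parameter $T>0$ and a penalty parameter $M>0$, that computes $f_d$ \emph{exactly} on a large ``good'' set of inputs, and then to drive the $L_2$ error below $\varepsilon$ by first choosing $T$ large (using the tail bound \asmref{asm:dist}\itemref{item:dist1}) and then $M$ large (shrinking the bad set, using \asmref{asm:dist}\itemref{item:dist2}), while the architecture stays fixed of width $d(d+1)$.

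\noindent\emph{The construction.} I would take the first hidden layer to consist of the $d(d-1)$ neurons $\relu{x_j-x_i}$ over ordered pairs $i\neq j$, together with the $2d$ neurons $\relu{x_i+T}$ and $\relu{-x_i-T}$; this is exactly $d(d+1)$ neurons, and it makes each shifted coordinate $x_i+T=\relu{x_i+T}-\relu{-x_i-T}$ available as a linear readout of layer $1$. The second hidden layer has the $d$ neurons $\relu{h_i}$ with $h_i := (x_i+T)-M\sum_{j\neq i}\relu{x_j-x_i}$ (an affine function of the layer-$1$ outputs), and the output neuron returns $\Ncal_{T,M}(\bx):=-T+\sum_{i=1}^d\relu{h_i}$. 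The design realizes an argmax-selection identity: if $i^\star$ is a maximizing coordinate then $\sum_{j\neq i^\star}\relu{x_j-x_{i^\star}}=0$ so $h_{i^\star}=x_{i^\star}+T$, while for any $i\neq i^\star$ the $j=i^\star$ summand alone gives $\sum_{j\neq i}\relu{x_j-x_i}\geq x_{i^\star}-x_i$, hence $h_i\leq (x_i+T)-M(x_{i^\star}-x_i)$. Therefore, on the good set where \textbf{(a)} $x_{i^\star}+T\geq 0$ and \textbf{(b)} $x_i+T\leq M(x_{i^\star}-x_i)$ for every $i\neq i^\star$, we have $\relu{h_{i^\star}}=x_{i^\star}+T$, all other $\relu{h_i}=0$, and $\Ncal_{T,M}(\bx)=x_{i^\star}=f_d(\bx)$. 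To be safe against exact ties for the maximum — which under \asmref{asm:dist}\itemref{item:dist2} can only occur at the value $0$, and hence only for a rather restricted class of $\Dcal$ — I would, if necessary, first replace $x_i$ by $x_i+c_i$ for generic arbitrarily small distinct constants $c_i$ (chosen so that $\pr[X_i+c_i=X_j+c_j]=0$ for all $i\neq j$, which is possible since each $X_i-X_j$ has at most countably many atoms); this perturbs the target by at most $\norm{\bc}_\infty$ in sup-norm, hence negligibly in $L_2$, and does not change the neuron count.

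\noindent\emph{The error analysis.} Two facts finish it. First, a crude bound valid for all $\bx$ and all $T,M>0$: since $0\leq\relu{h_i}\leq\relu{x_i+T}\leq\norm{\bx}_\infty+T$, we get $-T\leq\Ncal_{T,M}(\bx)\leq -T+d(\norm{\bx}_\infty+T)$, so $\lvert\Ncal_{T,M}(\bx)-f_d(\bx)\rvert\leq (d+1)(\norm{\bx}_\infty+T)$. Second, the complement of the good set is contained in $A_T\cup B_{T,M}$, where $A_T=\{\max_i x_i<-T\}\subseteq\{\norm{\bX}_\infty>T\}$ and $B_{T,M}=\{\exists\, i\neq i^\star:\ x_{i^\star}-x_i<(x_i+T)/M\}$. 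On $A_T$ we have $(\norm{\bX}_\infty+T)^2\leq 4\norm{\bX}_\infty^2$, so by dominated convergence from \asmref{asm:dist}\itemref{item:dist1} we can make $\E[(\norm{\bX}_\infty+T)^2\mathbbm{1}_{A_T}]$ as small as we like by taking $T$ large; fix such a $T$. For $B_{T,M}$, split off $\{\norm{\bX}_\infty>R\}$ (negligible for $R$ large, again by \asmref{asm:dist}\itemref{item:dist1}), and on $\{\norm{\bX}_\infty\leq R\}$ bound $(\norm{\bX}_\infty+T)^2\leq (R+T)^2$; it then suffices to show $\pr[B_{T,M}\cap\{\norm{\bX}_\infty\leq R\}]\to 0$ as $M\to\infty$. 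On that event two coordinates differ by less than $(R+T)/M$, which vanishes; these events decrease to a set of exact ties, whose probability is $0$ (by \asmref{asm:dist}\itemref{item:dist2} for ties at nonzero values, and by the perturbation otherwise), so continuity of measure gives the claim. Combining with the crude bound yields $\E[(\Ncal_{T,M}(\bx)-f_d(\bx))^2]\leq\varepsilon$ once $M$ is large enough.

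\noindent\emph{Main obstacle.} The delicate step is the last one: inputs with two or more coordinates nearly tied for the maximum are exactly the inputs on which, for a \emph{fixed} $M$, the penalty fails to suppress a non-maximal coordinate, and additive near-equality is only partly captured by the multiplicative $\delta$-separation of \asmref{asm:dist}\itemref{item:dist2} (the residual case of two near-equal coordinates both close to $0$ needs either the tie-breaking perturbation or absorption into the near-zero/tail region). One must therefore track not just the probability of the bad set but its contribution to the \emph{squared} error, which is precisely where the finite second moment of $\norm{\bX}_\infty$ is needed, so that the crude $(d+1)(\norm{\bx}_\infty+T)$ bound is integrable against the indicator of the bad set.
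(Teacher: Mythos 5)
Your construction is correct, and it rests on the same basic mechanism as the paper's: gate each coordinate $x_i$ by subtracting a large multiple of $\sum_{j\ne i}\relu{x_j-x_i}$, clip with a second ReLU layer, and sum, so that on a ``good'' set the network selects the maximizing coordinate exactly, while elsewhere the error is controlled by a crude global bound together with \asmref{asm:dist}. Where you diverge is in how the good set is parametrized and hence how its complement is killed. The paper (Definition~\ref{def:depth3}, \lemref{lem:polytope_approx}) puts the large constant $\alpha$ inside the first layer as $\relu{\alpha x_j-\alpha x_i}$ and splits each coordinate into $\relu{x_i}$ and $-\relu{-x_i}$; the resulting network is exact precisely on the multiplicatively separated set $S_{1/\alpha}$, so \asmref{asm:dist}\itemref{item:dist2} applies verbatim and the error analysis is a three-line application of the law of total expectation. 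Your version, with the additive shift $T$ and the penalty $M$ in the second layer, is exact on an \emph{additive}-gap set, which the multiplicative separation of \itemref{item:dist2} does not directly control near the origin; this is why you need the extra machinery (the tail truncations in $T$ and $R$, the monotone-limit argument as $M\to\infty$, and the tie-breaking constants $c_i$). The perturbation is genuinely necessary for your construction and you were right to include it: at a tie at the value $0$ (e.g.\ the point mass at the origin, which does satisfy \asmref{asm:dist}) your unperturbed network outputs $(d-1)T$ rather than $0$, whereas the paper's signed construction outputs $0$ there; your countable-atoms argument for choosing the $c_i$ is sound. Net effect: both proofs are correct with the same architecture budget; the paper's placement of the large constant buys a much shorter error analysis, while yours in exchange invokes \itemref{item:dist2} only through the almost-sure absence of nonzero ties.
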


    The proof of the above theorem, which appears in \appref{app:positive_proofs}, relies on the observation that the structure of the maximum function is such that its surface consists of $d$ linear regions (corresponding to the subsets of $\reals^d$ where each coordinate is maximal). Since each such region has exactly $d-1$ faces (corresponding to the hyperplanes where one coordinate overtakes the other), we can use the first hidden layer to compute the linear function $\bx\mapsto x_i$ and ``peel off'' the surface of the function at the relevant faces using a ReLU neuron with a very large negative slope. We then use the second hidden layer to truncate negative values. By adding such ``polytope functions'', we are able to obtain a good approximation of the maximum function at points where the coordinates are sufficiently distant from each other. We refer the reader to Definition~\ref{def:depth3} for the formal construction and \figref{fig:depth3} for an illustration.

    \begin{figure}[]
        \centering
        \begin{subfigure}[b]{0.3\textwidth}
            \centering
            \begin{tikzpicture}[scale=0.55]
                \begin{axis}[zmin=-2,zmax=2,view={300}{40},grid=both,restrict z to domain*=-2:1]
                    \addplot3 [surf,unbounded coords=jump,shader=interp,domain=-1:1,y domain=-1:1,samples=80] {13.75*x>13.75*y-2.4?max(0,x)-max(0,10*y-10*x) - max(0,-x)-max(0,10*y-10*x):NaN};
                \end{axis}
            \end{tikzpicture}
            \caption{}
            \label{fig:a}
        \end{subfigure}
        \hfill
        \begin{subfigure}[b]{0.3\textwidth}
            \centering
            \begin{tikzpicture}[scale=0.55]
                \begin{axis}[zmin=-2,zmax=2,view={300}{40},grid=both]
                    \addplot3 [surf,shader=interp,domain=-1:1,y domain=-1:1,samples=80] {max(0,max(0,x)-max(0,10*y-10*x)) - max(0,max(0,-x)-max(0,10*y-10*x))};
                \end{axis}
            \end{tikzpicture} 
            \caption{}
            \label{fig:b}
        \end{subfigure}
        \hfill
        \begin{subfigure}[b]{0.3\textwidth}
            \centering
            \begin{tikzpicture}[scale=0.55]
                \begin{axis}[view={300}{40},grid=both]
                    \addplot3 [surf,shader=interp,domain=-1:1,y domain=-1:1,samples=80] {max(0,max(0,x)-max(0,10*y-10*x)) - max(0,max(0,-x)-max(0,10*y-10*x))  +  max(0,max(0,y)-max(0,10*x-10*y)) - max(0,max(0,-y)-max(0,10*x-10*y))};
                \end{axis}
            \end{tikzpicture}
            \caption{}
            \label{fig:c}
        \end{subfigure}
        \caption{Three-step polytope approximation of $(x,y)\mapsto\max\{x,y\}$. Subfigure~\ref{fig:a} plots the depth 2 network $\relu{x}-\relu{10y-10x}-\relu{-x}-\relu{10y-10x}$ which computes $\max\{x,y\}$ on the polytope $\set{(x,y)\in[-1,1]^2:x\ge y}$. In Subfigure~\ref{fig:b}, a second layer of ReLUs is utilized to clip negative values that are outside of the polytope to zero, plotting the depth 3 network $\Ncal(x,y)\coloneqq\relu{\relu{x}-\relu{10y-10x}}-\relu{\relu{-x}-\relu{10y-10x}}$. Lastly, Subfigure~\ref{fig:c} plots the depth 3 network $\Ncal(x,y)+\Ncal(y,x)$ which is an effective approximation of $\max\{x,y\}$. We remark that while $\max\{x,y\}$ can be computed exactly using depth 2 ReLU networks, the figure is intended for illustration purposes of our construction idea used in \thmref{thm:depth3_approx}, which generalizes to any input dimension $d$. Best viewed in color.}
        \label{fig:depth3}
    \end{figure}



    It is interesting to note that the width of the approximating network in our result only scales with the input dimension $d$, and does not scale with the desired target accuracy. Rather, by increasing the magnitude of the weights of the approximating network we can control the accuracy of the approximation. This is in contrast to many other approximation regimes where an improvement in the approximation accuracy necessitates an increase in width. E.g., when approximating the maximum function using depth 2 (see \propref{prop:finite_d} in the appendix) or when approximating non-linear functions using ReLU networks (see \citet{safran2017depth}).
    
    Our result allows the approximation of the maximum function using a network of size $\Ocal(d^2)$. It is known, however, that the maximum function can be computed exactly using a smaller network of size $\Ocal(d)$ if we allow depth $\Ocal(\log(d))$. It is therefore natural to ask whether by utilizing depth, we can obtain more efficient approximations of the maximum function using our construction. Perhaps surprisingly, we are able to approximate the maximum function using linear width but by only requiring the depth to scale as $\Ocal(\log(\log(d)))$. More formally, we have the following theorem.
    \begin{theorem}\label{thm:deep_approx}
        Let $\Dcal$ be any distribution satisfying \asmref{asm:dist}. Then for any $\varepsilon>0$ and naturals $d\ge58$ and $1\le k\le\lceil\log(\log(d)+1)\rceil$, there exists a ReLU network $\Ncal$ of depth $2k+1$ and width at most $20d^{1+\frac{1}{2^k-1}}$ such that
        \[
            \E_{\bx\sim\Dcal}\pcc{\p{\Ncal(\bx) - f_d(\bx)}^2} \le \varepsilon.
        \]
        In particular, we have a ReLU network of width $40d$ and depth $2\lceil\log(\log(d)+1)\rceil+1$ which approximates $f_d(\bx)$ to accuracy $\varepsilon$ with respect to the distribution $\Dcal$.
    \end{theorem}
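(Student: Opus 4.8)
The plan is to construct $\Ncal$ by composing the depth-$3$ network of \thmref{thm:depth3_approx} with itself in a balanced, hierarchical manner. Fix the target depth $2k+1$ and set branching factors $m_j\coloneqq\ceil{d^{2^{j-1}/(2^k-1)}}$ for $j=1,\dots,k$; since $\sum_{j=1}^k 2^{j-1}=2^k-1$, the idealized (unrounded) values satisfy $\prod_{j=1}^k d^{2^{j-1}/(2^k-1)}=d$, so the $d$ input coordinates can be arranged as the leaves of a depth-$k$ tree in which each node at level $j$ has $\approx m_j$ children and a level-$j$ subtree covers $\approx\prod_{i\le j}m_i$ coordinates (the root, at level $k$, covering all $d$). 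At every internal node I place the depth-$3$, width-$n(n+1)$ ``max-gadget'' of \thmref{thm:depth3_approx} applied to that node's children (at most $m_j$ of them, for a node at level $j$), and I compose these gadgets along the tree: the value at an internal node is the gadget applied to the values at its children. Because feeding a network's output into a fresh depth-$3$ network raises the depth by exactly $2$, and the tree stacks $k$ levels of gadgets on top of each other, the composed network $\Ncal$ has depth $3+2(k-1)=2k+1$.

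For the width I would count the neurons contributed by all gadgets sitting at a fixed level $j$: there are $\prod_{i=j+1}^k m_i$ such gadgets, each of width $m_j(m_j+1)$, for a total of $\Ocal\p{m_j^2\prod_{i=j+1}^k m_i}$. Plugging in the idealized $m_i=d^{2^{i-1}/(2^k-1)}$ and using $\sum_{i=j+1}^k 2^{i-1}=2^k-2^j$ gives exactly $d^{(2^j+2^k-2^j)/(2^k-1)}=d^{2^k/(2^k-1)}=d^{1+\beta(k)}$ at \emph{every} level $j$, so the idealized total width is $d^{1+\beta(k)}$. It then remains to absorb the $k$ nested ceilings: here I would use $d\ge 58$ to guarantee that every block has size at least $2$ (so the recursion is well defined) and that the accumulated per-level rounding slack, over the at most $k\le\ceil{\log(\log d+1)}$ levels, only multiplies the estimate by a small constant, yielding the stated $20\,d^{1+\beta(k)}$. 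The ``in particular'' claim then follows by taking $k=\ceil{\log(\log d+1)}$, which makes $2^k-1\ge\log d$ and hence $d^{\beta(k)}\le d^{1/\log d}=2$, so the width is at most $40d$ and the depth is $2\ceil{\log(\log d+1)}+1$.

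For the accuracy I would use a structural feature of the construction underlying \thmref{thm:depth3_approx}: for every $\delta>0$ and radius $R$, once its weights are large enough (depending only on $\delta$ and $R$) the depth-$3$ gadget on $n$ inputs outputs the \emph{exact} maximum of its inputs on every $\delta$-separated point of $[-R,R]^n$, while for \emph{every} weight setting it stays bounded by a multiple of $\norm{\cdot}_\infty$ of its inputs, the multiple depending only on the architecture, not on the weight magnitudes. Propagating this along the tree: on an input $\bx\in S_\delta\cap[-R,R]^d$, each leaf gadget sees a sub-vector of $\bx$ (still $\delta$-separated and in $[-R,R]^n$), so with large enough weights it outputs the true block-maximum; the vector of block-maxima is again $\delta$-separated — a sub-vector consisting of block-maxima of a $\delta$-separated vector is $\delta$-separated, with the $x_j=0$ case absorbed by the definition of $\delta$-separation — and lies in $[-R,R]$, so by induction on the tree levels $\Ncal(\bx)=f_d(\bx)$ for all $\bx\in S_\delta\cap[-R,R]^d$. (Using exactness directly, rather than re-applying \thmref{thm:depth3_approx} to the intermediate output distributions, also sidesteps having to re-verify \asmref{asm:dist} for those induced distributions.) Finally I would bound the error on the complement: using \itemref{item:dist1} choose $R$ so that $\E[\norm{\bX}_\infty^2\,\one{\norm{\bX}_\infty>R}]$ is tiny, which — because the pointwise bound on $\Ncal$ is a multiple of $\norm{\bx}_\infty$ \emph{uniform in the weights} — forces the contribution of $\set{\norm{\bX}_\infty>R}$ to be small; then using \itemref{item:dist2} choose $\delta$ so that $\pr[\bX\notin S_\delta]$ times the now-fixed bound on $(\Ncal(\bx)-f_d(\bx))^2$ over $[-R,R]^d$ is small; and lastly enlarge the gadget weights enough for exactness on $S_\delta\cap[-R,R]^d$, giving $\E_{\bx\sim\Dcal}[(\Ncal(\bx)-f_d(\bx))^2]\le\varepsilon$.

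The step I expect to be the main obstacle is the width bookkeeping: making the level-wise neuron counts telescope to a single constant factor despite $k$ nested ceilings — rather than losing a factor that grows with $k$ — requires the branching factors to stay safely bounded away from $1$ and the per-level rounding overhead to be summable, which is precisely what the hypotheses $d\ge 58$ and $k\le\ceil{\log(\log d+1)}$ are calibrated to ensure. A secondary technical point is extracting cleanly from the proof of \thmref{thm:depth3_approx} the ``exact on $\delta$-separated bounded inputs, uniformly bounded elsewhere'' property, so that exactness composes through all $k$ levels without the required weight magnitudes, or the residual error regions, blowing up.
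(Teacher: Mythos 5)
Your proposal is correct and is essentially the paper's own argument: your balanced tree with branching factors $m_j=\lceil d^{2^{j-1}/(2^k-1)}\rceil$ is exactly the unrolled form of the paper's recursive construction (Definition~\ref{def:deep}), your level-wise width count telescoping to $d^{1+\beta(k)}$ with the ceilings absorbed into the constant $20$ matches the paper's inductive bound via $\prod_{i=1}^{\infty}(1+2/i^3)^2\le 20$, and your accuracy argument (exactness on $\delta$-separated inputs for large enough weights, propagated through the levels because block-maxima of a $\delta$-separated vector form a $\delta$-separated sub-vector, plus a weight-independent pointwise bound combined with \asmref{asm:dist}) is precisely Proposition~\ref{prop:deep_approx} and the concluding computation. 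The only cosmetic difference is your extra truncation to $[-R,R]^d$, which the paper avoids by bounding the off-$S_{\delta}$ contribution directly via $|\Ncal(\bx)|\le\norm{\bx}_1$ and the finite second moment.
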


    The proof of the above theorem, which appears in \appref{app:positive_proofs}, exploits the key observation that the maximum taken over sub-vectors of maxima is the maximum of the vector. We can thus partition our input into smaller batches and use \thmref{thm:depth3_approx} to compute the maximum over each of these batches. Since we may vary the size of the batches across layers, we can gradually take larger and larger batches as our computation propagates deeper in the network, while keeping the width roughly the same across all layers. This enables a double exponential decay in the number of maxima computed at each layer, requiring depth of only $\Ocal(\log(\log(d)))$ for approximating the maximum using width linear in $d$ (see Definition~\ref{def:deep} for the formal construction and \figref{fig:deep} for an illustration). While our result is not directly comparable to previous approximations of the maximum function \citep{arora2016understanding,matoba2022theoretical} since these allow its exact computation using depth and network size linear in $d$, it does provide an alternative construction which allows the approximation of $\max$ using a network of size $d\log(\log(d))$, but with a much smaller depth of only $\log(\log(d))$.

    \begin{figure}[]
        \centering
        \scalebox{1}{
            \begin{tikzpicture}
                \foreach \x in {0,...,7,13,14,15,16,17,18,19,20}
                    \draw [thick,draw=black] (0,-1 * \x / 4) rectangle ++(0.21,0.21) node[midway] (i\x){};
                \foreach \y in {9,10,11}
                    \foreach \x in {0.105,2,4.5}
                        \filldraw[black] (\x,-1 * \y / 4 + 0.105) circle (1pt);
    
                \foreach \x in {0,...,3,6,7,8,9}
                    \draw [thick,draw=black,fill=lightgray] (1.5,-1 * \x / 1.4 + 0.45) rectangle ++(1,0.75)
                    node[midway] (\x1){$\Ncal_{\alpha,2}$};
    
                \draw [thick,draw=black,fill=lightgray] (4,-1.75) rectangle ++(1,3)
                    node[midway] (n21){$\Ncal_{\alpha,4}$};
    
                \draw [thick,draw=black,fill=lightgray] (4,-6.05) rectangle ++(1,3)
                    node[midway] (n22){$\Ncal_{\alpha,4}$};
    
                \draw [thick,draw=black,fill=lightgray] (6.5,-6.05) rectangle ++(1.1,7.3)
                    node[midway] (n3){$\Ncal_{\alpha,16}$};
                    
                \draw[thick,->] (i0.east) -- ([yshift=0.1cm]01.west);
                \draw[thick,->] (i1.east) -- ([yshift=-0.1cm]01.west);
    
                \draw[thick,->] (i2.east) -- ([yshift=0.1cm]11.west);
                \draw[thick,->] (i3.east) -- ([yshift=-0.1cm]11.west);
    
                \draw[thick,->] (i4.east) -- ([yshift=0.1cm]21.west);
                \draw[thick,->] (i5.east) -- ([yshift=-0.1cm]21.west);
    
                \draw[thick,->] (i6.east) -- ([yshift=0.1cm]31.west);
                \draw[thick,->] (i7.east) -- ([yshift=-0.1cm]31.west);
    
                \draw[thick,->] (i13.east) -- ([yshift=0.1cm]61.west);
                \draw[thick,->] (i14.east) -- ([yshift=-0.1cm]61.west);
    
                \draw[thick,->] (i15.east) -- ([yshift=0.1cm]71.west);
                \draw[thick,->] (i16.east) -- ([yshift=-0.1cm]71.west);
    
                \draw[thick,->] (i17.east) -- ([yshift=0.1cm]81.west);
                \draw[thick,->] (i18.east) -- ([yshift=-0.1cm]81.west);
    
                \draw[thick,->] (i19.east) -- ([yshift=0.1cm]91.west);
                \draw[thick,->] (i20.east) -- ([yshift=-0.1cm]91.west);
        
                \draw[thick,->] (01.east) -- (n21);
                \draw[thick,->] (11.east) -- (n21);
                \draw[thick,->] (21.east) -- (n21);
                \draw[thick,->] (31.east) -- (n21);
    
                \draw[thick,->] (61.east) -- (n22);
                \draw[thick,->] (71.east) -- (n22);
                \draw[thick,->] (81.east) -- (n22);
                \draw[thick,->] (91.east) -- (n22);
    
                \draw[thick,->] (n21.east) -- ([yshift=0.5cm]n3.west);
                \draw[thick,->] (n22.east) -- ([yshift=-0.5cm]n3.west);
    
                \draw[thick,->] (n3.east) -- ([xshift=0.75cm]n3.east);
    
                \node[draw=none] at (-2.2,1.6){Multiplicity:};
                \node[draw=none] at (0,1.6){$\times128$};
                \node[draw=none] at (2,1.6){$\times64$};
                \node[draw=none] at (4.5,1.6){$\times16$};
                \node[draw=none] at (7,1.6){$\times1$};
    
                \node[draw=none] at (-2.2,-6.5){Layer \#:};
                \node[draw=none] at (0,-6.5){input};
                \node[draw=none] at (2,-6.5){1$^\text{st}$~\&~2$^\text{nd}$};
                \node[draw=none] at (4.5,-6.5){3$^\text{rd}$~\&~4$^\text{th}$};
                \node[draw=none] at (7,-6.5){5$^\text{th}$,6$^\text{th}$~\&~7$^\text{th}$};
            \end{tikzpicture}
        }
        \caption{Sketch of the architecture $\Ncal_{\alpha,128}^3$ which approximates $f_{128}(\cdot)$ using depth $7$ and width $\approx128^{8/7}=256$. The multiplicity row at the top counts the number of components in each layer, and the layer \# row at the bottom indicates which layers participate in the computation of each component. The height of each $\Ncal_{\alpha,\cdot}$ component is roughly proportional to its width. Each pair of hidden layers increases the batch size on which the maxima are computed quadratically, while maintaining the width of the network roughly the same across all hidden layers. This results in a double exponential decay of the batch size, allowing an approximation of the maximum with depth $\Ocal(\log(\log(d)))$.}
        \label{fig:deep}
    \end{figure}

    Of particular interest is the following corollary, which provides an approximation guarantee in the case where the data are sampled uniformly from a hypercube. Most importantly, in such a case we can guarantee an approximation to accuracy $\varepsilon>0$ using a network with weights that scale polynomially with $d$ and linearly with $1/\varepsilon$. This property will turn out to be useful in the next section where we will show lower bounds for approximating the maximum function with respect to the uniform distribution.

    \begin{corollary}\label{cor:uniform_approximation}
        For any $\varepsilon>0$ and naturals $d\ge58$ and $1\le k\le\lceil\log(\log(d)+1)\rceil$, there exists a ReLU network $\Ncal$ of depth $2k+1$, width at most $20d^{1+\beta(k)}$ and weights of magnitude $\Ocal\p{\frac{d^4R^2}{\varepsilon}}$ such that
        \[
            \E_{\bx\sim\Ucal\p{[0,R]^d}}\pcc{\p{\Ncal(\bx) - f_d(\bx)}^2} \le \varepsilon.
        \]
    \end{corollary}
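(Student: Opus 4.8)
The plan is to deduce Corollary~\ref{cor:uniform_approximation} from Theorem~\ref{thm:deep_approx} applied to $\Dcal=\Ucal([0,R]^d)$, with the only genuinely new content being (i) a check that this distribution satisfies Assumption~\ref{asm:dist}, together with an explicit rate for Item~\ref{item:dist2}, and (ii) a quantitative re-reading of the proofs of Theorems~\ref{thm:depth3_approx} and~\ref{thm:deep_approx} to track the magnitude of the weights as a function of $\varepsilon$, $d$ and $R$.

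For the distributional check: under $\bX\sim\Ucal([0,R]^d)$ the coordinates are i.i.d.\ and absolutely continuous with density bounded by $1/R$, so $\E[\norm{\bX}_{\infty}^2]\le R^2<\infty$, which is Item~\ref{item:dist1}. For Item~\ref{item:dist2}, a union bound over the $\binom{d}{2}$ unordered pairs gives $\pr[\bX\notin S_\delta]\le\sum_{i\ne j}\pr[x_i/x_j\in[1-\delta,1+\delta]]$, and conditioning on $x_j=y$ and integrating shows each summand is at most $2\delta$, hence $\pr[\bX\notin S_\delta]\le d^2\delta\to0$. This $\Ocal(d^2\delta)$ tail bound is exactly what I will feed into the error analysis.

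For the weight bound: I would re-open the proof of Theorem~\ref{thm:depth3_approx}, where the construction depends on a single ``large slope'' parameter $\alpha$ in the peeling ReLUs, all of whose remaining weights are $\Ocal(1)$ or $\Ocal(R)$, so the overall weight magnitude is $\Ocal(\alpha+R)$. The squared error decomposes as (a) the contribution of $\delta$-separated inputs, which the construction sends to $0$ once $\alpha$ exceeds a threshold of order $1/\delta$ (so that every peeling band is narrower than the smallest coordinate gap), plus (b) the contribution of inputs outside $S_\delta$, bounded by $\Ocal(\poly(d)\,R^2)$ per point since both $f_d$ and the network are $\Ocal(\poly(d)\,R)$ on $[0,R]^d$. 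Combining (b) with the tail estimate gives a total error of $\Ocal(\poly(d)\,R^2\,\delta)$; choosing $\delta$ polynomially small in $\varepsilon/(\poly(d)R^2)$ makes this $\le\varepsilon$ and, after accounting for the $d$ summed polytope terms, forces $\alpha=\Ocal(d^4R^2/\varepsilon)$, so the depth-3 block has weights of magnitude $\Ocal(d^4R^2/\varepsilon)$. For Theorem~\ref{thm:deep_approx} the network is a cascade of $\Ocal(\log\log d)$ such blocks, each computing a maximum over $m\le d$ inputs; splitting the budget $\varepsilon$ across the blocks (absorbing the $\polylog$ factor into the $\Ocal(\cdot)$) keeps each block's weights at $\Ocal(d^4R^2/\varepsilon)$, and since $\max$ maps $[0,R]^m$ into $[0,R]$ every block operates at the same scale, so the per-edge weight bound is uniform across depth rather than compounding; the network's depth and width are those already guaranteed by Theorem~\ref{thm:deep_approx}.

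\textbf{Main obstacle.} Theorem~\ref{thm:deep_approx} is stated with no reference to weight magnitudes, so the real work is quantitative bookkeeping inside its proof: pinning down the precise $\alpha$-versus-accuracy tradeoff of a depth-3 block on the scaled cube $[0,R]^d$, and, for the cascade, verifying that $\delta$-separation of the original coordinates still implies (slightly weaker) separation of the batch-maxima passed to deeper layers — the intermediate inputs are maxima of i.i.d.\ uniforms, still absolutely continuous with density $\Ocal(\poly(d)/R)$, so the separation estimate propagates with only polynomial-in-$d$ degradation. The one subtle point to get right is that composing $\Ocal(\log\log d)$ blocks does not inflate the weights, which holds precisely because $\max$ preserves the domain $[0,R]$ and hence no intermediate rescaling is ever needed.
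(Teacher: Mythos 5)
Your proposal is correct and follows essentially the same route as the paper: apply \thmref{thm:deep_approx} to $\Ucal([0,R]^d)$, verify \asmref{asm:dist} with the quantitative tail bound $\pr\pcc{\bX\notin S_\delta}=\Ocal(d^2\delta)$, and observe that the only weight in the construction that depends on the accuracy is the peeling slope $\alpha=1/\delta_0$, which the error budget $(d+1)^2R^2\cdot\Ocal(d^2\delta_0)\le\varepsilon$ pins down to $\alpha=\Ocal(d^4R^2/\varepsilon)$, exactly as in the paper. The one place where you do more work than necessary is the cascade: no error budget needs to be split across the $\Ocal(\log(\log(d)))$ blocks, and no density argument about the batch maxima is needed, because each batch maximum equals one of the original coordinates, so the vector of batch maxima is a coordinate sub-vector of $\bx$ and inherits $\delta$-separation \emph{exactly}, with no degradation; consequently (\propref{prop:deep_approx}) the entire network computes $f_d$ exactly on $S_{1/\alpha}$ with all weights bounded by $\alpha$, and the whole error is charged once to the single event $\bx\notin S_{\delta_0}$. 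Your alternative route --- bounding the density of the batch maxima by $\Ocal(d/R)$ and re-running the separation estimate at each layer --- does happen to be salvageable (maxima over disjoint batches of i.i.d.\ uniforms are independent with densities $m y^{m-1}/R^m\le d/R$ on $[0,R]$), but it is redundant and would only weaken the constants; the sub-vector observation is the intended shortcut.
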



    Having stated our main positive approximation results, we now turn to specify the constructions used to achieve them. Beginning with defining the depth 3 network used in \thmref{thm:depth3_approx}, we have the following architecture.

    \begin{definition}\label{def:depth3}[Depth 3 approximation]
        Given a weight upper bound $\alpha>0$ and input dimension $d$, we define the following depth 3 width $d(d+1)$ network which approximates $f_d(\bx)$:
        \[
            \Ncal_{\alpha,d}(\bx)\coloneqq \sum_{i=1}^d\p{
            \relu{\relu{x_{i}}-\sum_{\substack{j=1\\j\neq i}}^{d}\relu{\alpha x_{j}-\alpha x_{i}}}
            -\relu{\relu{-x_{i}}-\sum_{\substack{j=1\\j\neq i}}^{d}\relu{\alpha x_{j}-\alpha x_{i}}}
            }.
        \]
    \end{definition}

    We remark that we occasionally omit the dimension subscript whenever clear from context, and we note that the above architecture can be realized using a width $d(d+1)$ ReLU network since the inner sum terms are identical, and thus computing both requires only $d-1$ neurons. 

    Next, we define the architecture which approximates the function $f_d(\bx)$ using depth $2k+1$ and width at most $20d^{1+\beta(k)}$, achieving the approximating result stated in \thmref{thm:deep_approx}.

    \begin{definition}\label{def:deep}[Depth $2k+1$ approximation]
        Given a weight upper bound $\alpha>0$ and input dimension $d$, we define the following depth $2k+1$ width at most $20d^{1+\beta(k)}$ network $\Ncal_{\alpha,d}^k$ which approximates $f_d(\bx)$ in a recursive manner:

        \begin{itemize}
            \item 
            For $k=1$, we have $\Ncal_{\alpha,d}^1\equiv\Ncal_{\alpha,d}$.
            
            \item
            For integer $k>1$, we partition the input into $\ceil{d^{1-\beta(k)}}$ batches, each of size at most $\ceil{d^{\beta(k)}}$. For each batch, the first two hidden layers compute the function $\Ncal_{\alpha,\ceil{d^{\beta(k)}}}$. The output over all the batches is then fed into the sub-network $\Ncal_{\alpha,\ceil{d^{1-\beta(k)}}}^{k-1}$ which consists the remaining layers of the network $\Ncal_{\alpha,d}^k$.

        \end{itemize}

    \end{definition}

    \section{Approximation lower bounds}\label{sec:negative}

    Having shown a positive approximation result for the maximum function in the previous section, we now turn to complement our approximation upper bounds with lower bounds in this section.

    \subsection{Improving accuracy requires increasing width for depth 2 networks}



    Before presenting our main theorem for this subsection, we first state the following very mild assumption that we use which is adopted from \citet{eldan2016power}:

    \begin{assumption}[Polynomially-bounded activation]\label{asm:poly_bounded}
        The activation function $\sigma$ is Lebesgue measurable and satisfies
        \[
            \abs{\sigma(x)}\le C_{\sigma}\p{1+\abs{x}^{\alpha_{\sigma}}}
        \]
        for all $x\in\reals$ and for some constants $C_{\sigma},\alpha_{\sigma}>0$.
    \end{assumption}

    Our depth 2 lower bound is the following:
    \begin{theorem}\label{thm:depth2}
        Let $\ell\ge1$ be arbitrary and suppose that $\sigma$ satisfies \asmref{asm:poly_bounded}. Then there exist constants $c_1,c_2>0$ which depend solely on $\sigma$ such that for all dimensions $d\ge c_1$, a $\sigma$ depth 2 neural network $\Ncal$ of width at most $d^{\ell}$ and with weights bounded by $\Ocal(\exp(\Ocal(d)))$ must satisfy
        \[
            \E_{\bx\sim\Ucal\p{\pcc{0,1}^d}}\pcc{\p{\Ncal(\bx) - f_d(\bx)}^2} > \Omega\p{d^{-c_2\cdot\ell}}.
        \]
    \end{theorem}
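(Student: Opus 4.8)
The plan is to adapt the Fourier-analytic lower-bound machinery of \citet{eldan2016power} --- in the bounded-domain form used by \citet{daniely2017depth} --- directly to $f_d$, exploiting the explicit geometry of the maximum function rather than a hand-crafted hard instance. I would fix a smooth window $\varphi$ supported in $[0,1]^d$ with $\|\varphi\|_\infty=1$ (equal to $1$ off a thin shell near the boundary) and with rapidly decaying Fourier transform; since $\varphi^2\le\mathbbm 1_{[0,1]^d}$, by Plancherel it suffices to bound $\|\widehat{\Ncal\varphi}-\widehat{f_d\varphi}\|_2^2$ from below. Writing $\Ncal=\sum_{i=1}^k a_i\,\sigma(\langle w_i,\cdot\rangle+b_i)+(\text{affine})$, each summand $\sigma(\langle w_i,\cdot\rangle+b_i)\varphi$ has Fourier transform equal to $\widehat\varphi$ convolved with a tempered distribution supported on the line $\reals w_i$, of order $\le\alpha_\sigma+O(1)$ by \asmref{asm:poly_bounded}; hence, off a radius-$\rho$ tube about $\reals w_i$, its transform is bounded by a fixed polynomial in $\|w_i\|,|b_i|$ times a derivative of $\widehat\varphi$ at radius $\ge\rho$. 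Summing over the $k\le d^\ell$ neurons and the affine part and using that all weights are $\le\exp(O(d))$, the transform $\widehat{\Ncal\varphi}$ has, on an annulus $\{R\le\|\xi\|\le 2R\}$ and outside the union $\mathcal T$ of $k+1$ radius-$\rho$ tubes about $\reals w_1,\dots,\reals w_k$ and the line of the linear term, an $L^2$-mass that can be pushed below half of the target bound $d^{-c_2\ell}$ by taking $\rho=\poly(d)$ of a fixed degree --- this is exactly the step that converts the exponential weight bound into a polynomial tube radius, and hence into a polynomial rather than exponential separation.

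The heart is the target side. The distributional Laplacian of $f_d$ is the positive surface measure $\sqrt 2\sum_{i<j}(\text{surface measure on }F_{ij})$, carried by the facets $F_{ij}=\{x:x_i=x_j\ge x_k\ \forall k\}$ on which coordinates $i,j$ are tied for the maximum; hence $\|\xi\|^2\,\widehat{f_d\varphi}(\xi)=\sqrt 2\sum_{i<j}\widehat{\varphi\cdot(\text{surface measure on }F_{ij})}(\xi)$ up to a term negligible at the frequencies of interest, and each such piece depends only on the component $\xi'$ of $\xi$ orthogonal to $e_i-e_j$, where it equals the Fourier transform of $\varphi\,\mathbbm 1_{F_{ij}}$. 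So it is concentrated near the line $\reals(e_i-e_j)$, but transverse to that line it carries a slow $\sim\|\xi'\|^{-1}$ Fourier tail produced by the \emph{jump} of $\mathbbm 1_{F_{ij}}$ across its boundary, the triple-tie sets $\{x_i=x_j=x_k\}$. Since the $\binom d2$ directions $e_i-e_j$ are pairwise at angle $\ge 60^\circ$, these pieces are essentially disjointly supported. Now take $R$ a fixed multiple of $\rho\,(2(k+1))^{1/(d-1)}$, so the $k+1$ tubes occupy a fraction $k(\rho/R)^{d-1}<\tfrac12$ of the annulus; restricting the Plancherel integral to the annulus outside $\mathcal T$ and applying $(a-b)^2\ge\tfrac12 a^2-b^2$ with the network bound reduces everything to lower bounding $\int_{\{R\le\|\xi\|\le2R\}\setminus\mathcal T}|\widehat{f_d\varphi}|^2$. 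A volume/counting computation should show this is $\Omega(d^{-c_2\ell})$: when $k=d^\ell<\binom d2$, the $\Omega(d^2)$ uncaptured facet-directions contribute their full near-line mass; and when $k\ge\binom d2$ and the network aligns a tube with every $e_i-e_j$, the slow transverse tail of $\varphi\,\mathbbm 1_{F_{ij}}$ at transverse distances $\ge\rho$ from the line still cannot be reproduced, since a sum of ridge functions produces, transverse to a line, only the rapidly decaying transverse slices of $\widehat\varphi$ itself. Dividing by $\|\varphi\|_\infty^2=1$ and absorbing absolute constants into $c_1,c_2$ then finishes the proof for $d\ge c_1$.

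I expect the main obstacle to be the quantitative target-side estimate: getting the correct power of $d$ requires balancing the facet count $\binom d2$ against the (polynomially small, geometry-dependent) per-facet transverse Fourier-tail mass, against the lower bound $\rho=\poly(d)$ forced on the window by the $\exp(O(d))$ weight bound --- and doing this uniformly in $\ell\ge1$, in particular verifying that the incompressibility of the transverse tails survives the regime $k\gg\binom d2$, where a pure covering argument gives nothing. Choosing $\varphi$ with simultaneously good localization in $[0,1]^d$ and good control of all derivatives of $\widehat\varphi$ in dimension $d$, and turning the pointwise tube estimates into the annulus $L^2$ bound, are technical but follow the template of \citet{eldan2016power,daniely2017depth}.
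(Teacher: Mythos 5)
Your overall framework---Plancherel, per-neuron Fourier mass confined to a tube whose radius is polynomial in $d$ thanks to the $\exp(\Ocal(d))$ weight bound, and a covering/counting argument against the target's Fourier mass---is indeed the Eldan--Shamir-style machinery the paper also uses. But your route is genuinely different in where the Fourier analysis happens, and that difference is exactly where the gap sits. You stay in dimension $d$ and try to extract the lower bound from the facet decomposition of $f_d$. The paper instead first \emph{reduces the dimension to a constant}: by conditioning on a thin sub-cube $B=[1-1/d,1]^3$ and freezing the remaining $d-3$ coordinates at an intermediate point (an averaging/intermediate-value argument), approximating $f_d$ on $[0,1]^d$ by a width-$d^\ell$ depth-2 network is reduced, at a cost of only $d^{-\Ocal(1)}$ in the error, to approximating $\max\{0,z_1,z_2\}$ over a \emph{two-dimensional} Gaussian by a depth-2 network with $n=d^\ell$ neurons and $\exp(\Ocal(d))$ weights. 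Only then is the Fourier/tube argument run (Proposition~\ref{prop:finite_d}), in a fixed dimension where a \emph{pointwise} lower bound on $|\widehat{f\cdot\text{Gaussian}}|$ over a full-dimensional box $[\Omega(1),b]^2$ is available (Lemma~\ref{lem:big-fourier}), and where $n$ strips of polylogarithmic width inside an area-$b^2$ annulus force $b\gtrsim n$, which is precisely what produces the $n^{-\Theta(1)}=d^{-\Theta(\ell)}$ dependence.

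The concrete gap in your version is the target-side estimate in the regime $k\ge\binom d2$, which you yourself flag but which I do not think is repairable as stated. Two problems compound. First, in dimension $d$ the covering constraint is nearly vacuous: $k$ tubes of radius $\rho$ occupy a fraction $\sim k(\rho/R)^{d-1}$ of the annulus, so $R\gtrsim\rho\,k^{1/(d-1)}=\rho\cdot d^{\ell/(d-1)}\approx\rho$ for fixed $\ell$ and large $d$. The annulus radius therefore does not grow with $\ell$, and the $d^{-c_2\ell}$ dependence of the conclusion has no visible source in your argument; if the target-side mass outside the tubes were bounded below by an $\ell$-independent quantity you would be proving a far stronger, $\ell$-independent lower bound, which should make you suspicious. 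Second, what the covering argument actually requires is a lower bound on the $L^2$ mass of $\widehat{f_d\varphi}$ on the complement of an \emph{arbitrary} union of $d^\ell$ polynomial-radius tubes, i.e., a spread-out (ideally pointwise) floor on a full-dimensional region. Your own structural picture argues against such a floor: you locate the mass near the $\binom d2$ lines $\reals(e_i-e_j)$, and the ``slow'' transverse tails you invoke come from the jumps of $\mathbbm 1_{F_{ij}}$ across the triple-tie boundaries, so they are themselves concentrated near a cascade of lower-dimensional direction sets rather than spread over the annulus; an adversarial network with $d^\ell\gg\binom d2$ tubes can chase this cascade. (There is also a secondary issue you wave at: the commutator terms $2\nabla f_d\cdot\nabla\varphi+f_d\Delta\varphi$ from the Laplacian trick live on the boundary shell of the window and are not obviously negligible against the transverse tails at distance $\poly(d)$ from the lines.) The fix is essentially the paper's: make the dimension constant before doing Fourier analysis, so that a pointwise lower bound on the transform over a box is provable and the tube-counting genuinely forces the box to grow linearly in the neuron count.
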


    The proof of the above theorem, which appears in \appref{app:depth2_proof}, builds on the proof technique introduced in \citet{eldan2016power}. Roughly speaking, they build on the important observation that a neural network $\Ncal$ approximates a function $f$ if and only if the Fourier transform of $\Ncal$ approximates the Fourier transform of $f$. Our main technical contribution here is the computation of the Fourier transform of the maximum function and showing that it has sufficient $L_2$ mass far away from the origin which is sufficiently spread. This, in turn, shows that the support of the Fourier transform of a neural network (which is a linear combination of the Fourier transform of its activation function) must be contained inside a $d$-dimensional union of `Gaussian tubes'. Namely, under the assumption of exponentially bounded weights, the approximation contribution of each neuron is negligible outside a union of tubes with bounded radius. This entails that to approximate the Fourier transform of the maximum function, one must use sufficiently many neurons in order to be able to capture its non-trivial structure which is sufficiently spread across the domain of approximation.

    While the above theorem establishes a polynomial rather than exponential separation between depths 2 and 3, such a gap is nevertheless significant since it provides a compelling practical example where depth is more beneficial than width: Modern machine learning problems are often high-dimensional, hence even such polynomial gaps quickly translate into a significant practical advantage in the size of the required network. We further remark that our assumption that the approximating network has exponentially bounded weights is mild and very reasonable. This follows from the fact that approximations with weights that have exponential magnitude are known to be difficult to learn using stable gradient descent \citep{safran2022optimization}, so from a more practical perspective having exponentially bounded weights and having unbounded weights is equivalent. Lastly, as we also pointed out earlier, our result also implies a lower bound with constant accuracy by rescaling our domain of approximation polynomially with $d$. Performing such a manipulation is justified since our upper bounds from the previous section are not sensitive to scaling of the domain and merely require that the weights of the approximating network would also scale appropriately (\corollaryref{cor:uniform_approximation}). This enables us to show a separation in which a depth 2 network cannot approximate the maximum to better than constant accuracy, whereas a depth 3 network with a fixed width of $d(d+1)$ that does not depend on the desired accuracy can achieve arbitrarily good accuracy. In contrast, known results in the literature do require that the width of the depth 3 network would scale with the accuracy parameter (e.g.\ \citet{eldan2016power,daniely2017depth} and results that build on their technique -- see related work subsection).
    
    \subsection{Depth 3 requires $\Omega\p{d^2}$ width}

    In this subsection, we show that approximating the maximum function using depth 3 ReLU networks with weights bounded by $\Ocal(\exp(\Ocal(d)))$ requires width at least $\Omega(d^2)$. Our main result in this subsection is the following.

    \begin{theorem}\label{thm:depth3_lb}
        Suppose that $\Ncal$ is a depth 3 ReLU network of width at most $\frac{d^2}{5}$ and with weights bounded by $\Ocal(\exp(\Ocal(d)))$. Then there exist absolute constants $c_1,c_2>0$ such that for all $d\ge c_1$
        \[
            \E_{\bx\sim\Ucal\p{\pcc{0,1}^d}}\pcc{\p{\Ncal(\bx) - f_d(\bx)}^2} > \Omega\p{d^{-c_{2}}}.
        \]
    \end{theorem}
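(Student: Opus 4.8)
The plan is to decompose the network as $\Ncal = \Ncal_2 \circ \bphi$, where $\bphi\colon \reals^d \to \reals^m$ is the first hidden layer (a continuous piecewise-linear map whose linear pieces are the cells of an arrangement of at most $m \le d^2/5$ hyperplanes $H_1, \dots, H_m$, namely the zero sets of its $m$ affine pre-activations) and $\Ncal_2$ is the depth-$2$ ReLU network consisting of the second hidden layer together with the output neuron, whose single hidden layer has width $n \le d^2/5 \le d^2$. The goal is to exhibit a region of non-negligible probability on which $\Ncal$ is forced to coincide with a genuine depth-$2$ ReLU network of width at most $n$ and weights $\exp(\Ocal(d))$, while $f_d$ restricted to that region is, after an affine change of variables, an affinely rescaled copy of the maximum over $\Theta(d)$ inputs on a cube; invoking \lemref{lem:rescaling} together with \thmref{thm:depth2} applied with $\ell = 2$ then yields the stated bound.

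The combinatorial input concerns the kink structure of $f_d$. Its gradient is discontinuous exactly across the $\binom d2$ facets $F_{ij} = \set{\bx : x_i = x_j \ge x_\ell \ \forall \ell}$, with jump direction $\be_i - \be_j$ on $F_{ij}$, and all of these facets contain the open diagonal $\set{t\mathbf 1 : t \in (0,1)}$. By contrast, within any single cell of the first-layer arrangement $\bphi$ is affine, so $\Ncal$ restricted to that cell is a depth-$2$ network whose only kink directions are the cell's own bounding hyperplane normals together with the second-layer directions $\sum_k u_{\ell k} \bw_k$, each of which lies in $\operatorname{span}\set{\bw_1, \dots, \bw_m}$. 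The claim I would establish is that, because $m$ — in fact $m + n \le \tfrac{2}{5}d^2$ — is strictly smaller than $\binom d2$ (for $d$ above an absolute constant), there are a pair $(i,j)$ and an open region $G$ of probability at least $d^{-\Ocal(1)}$ which meets the interior of $F_{ij}$, lies within a single cell of the first-layer arrangement, and on which no combination of the available directions can reproduce the $\be_i - \be_j$ kink that $f_d$ has along $F_{ij}$. On such a $G$, $\Ncal$ equals a fixed depth-$2$ ReLU network $\widetilde\Ncal$ of width at most $n$, with weights bounded by the product of the two layers' weight bounds, hence by $\exp(\Ocal(d))$.

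To conclude, one uses that $f_d$ restricted to $G$ inherits the hardness of the maximum function: $G$ can be arranged to contain an affine image of a sub-cube on which $f_d$ equals an affine rescaling — by factors polynomial in $d$ — of the maximum over $\Theta(d)$ coordinates. By \lemref{lem:rescaling} this reduces the $L_2$ error of $\widetilde\Ncal$ on $G$ to the $L_2$ error of a depth-$2$ ReLU network of width $\le d^2$ with exponentially bounded weights against $f_{d'}$ on the unit cube for some $d' = \Theta(d)$, which by \thmref{thm:depth2} with $\ell = 2$ is at least $\Omega\bigl((d')^{-2c_2}\bigr) = \Omega(d^{-\Ocal(1)})$. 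Multiplying by $\pr_{\bx \sim \Ucal([0,1]^d)}\pcc{\bx \in G} \ge d^{-\Ocal(1)}$ and folding all polynomial factors into the exponent gives $\E_{\bx \sim \Ucal([0,1]^d)}\pcc{(\Ncal(\bx) - f_d(\bx))^2} > \Omega(d^{-c_2})$ after renaming the absolute constant.

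The main obstacle is the combinatorial/geometric claim producing $G$. An adversarial choice of $d^2/5$ hyperplanes can already shatter $[0,1]^d$ into exponentially many cells, none of non-negligible measure, so $G$ cannot simply be taken to be a full-dimensional cell; the argument must instead work along a low-dimensional skeleton — the diagonal, or a carefully chosen $\Theta(d)$-dimensional slice on which $f_d$ is still a maximum over $\Theta(d)$ inputs — where only $\Ocal(d)$ of the hyperplanes are ``active'', so that the induced arrangement leaves a cell of polynomial size while the $f_d$-structure is preserved. Making this rigorous needs (i) a pigeonhole estimate counting how many of the $\binom d2$ facets can be ``served'' by $m$ first-layer hyperplanes and their within-cell linear combinations — this is where the slack constant $1/5$, comfortably below $1/2$, is used — and (ii) a continuity argument ensuring that when no such combination serves $F_{ij}$ along a polynomial-measure chunk, $\Ncal$ genuinely agrees there with a single depth-$2$ network rather than merely with a patchwork of depth-$2$ pieces glued across cells. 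Steps (i)--(ii) are the crux; the reductions to \thmref{thm:depth2} and \lemref{lem:rescaling} are then routine.
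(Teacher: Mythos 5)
Your high-level skeleton --- freeze the first hidden layer on a favorable region so that $\Ncal$ degenerates into a genuine depth-2 network there, then invoke \thmref{thm:depth2} via \lemref{lem:rescaling} --- is exactly the paper's strategy. But the step you yourself flag as the crux (producing the region $G$) is a genuine gap, and the particular route you propose makes it harder than it needs to be. You ask for a region of polynomial measure that simultaneously (a) lies in a single cell of the first-layer arrangement and (b) still carries the max structure over $\Theta(d)$ coordinates; as you note, an adversarial arrangement of $d^2/5$ hyperplanes leaves no full-dimensional cell of non-negligible measure, and it is not clear that any $\Theta(d)$-dimensional slice survives with the required properties. The choice $d'=\Theta(d)$ is forced on you only because you fixed $\ell=2$ in \thmref{thm:depth2} to accommodate width $d^2/5$; relaxing $\ell$ removes that constraint and lets the reduction target be a \emph{constant}-dimensional maximum, which is what unlocks the argument.

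The paper's resolution has three ingredients you are missing. First, the pigeonhole step is not about kink directions but about coordinate pairs: form a graph on $[d]$ whose non-edges are the pairs $(j_1,j_2)$ that are the two largest-magnitude weight coordinates of some first-layer neuron; with at most $d^2/5$ neurons, more than $\binom{d}{2}-d^2/5>d^2/4$ edges remain, so Mantel's theorem (\thmref{thm:mantel}) gives a \emph{triangle} --- three coordinates such that every neuron has a dominant weight outside them. The reduction is then to $f_3$, not $f_{\Theta(d)}$. Second, the region in those three coordinates is a tiny cube $B=[a,a+d^{-6}]^3$ positioned to avoid every neuron's critical bias interval; a case analysis on whether a neuron's largest weight sits inside or outside the triangle shows that, with probability at least $2/5$ over the remaining $d-3$ coordinates, every neuron's pre-activation has constant sign on all of $B$. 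All losses ($d^{-18}$ in probability, $d^{-12}$ from rescaling the cube) are polynomial and harmless since the target bound is only $d^{-\Ocal(1)}$. Third, your worry about ``a patchwork of depth-2 pieces glued across cells'' is handled not by finding one large cell but by decomposing the good set $A$ into at most $2^k$ convex sign-pattern pieces, applying an intermediate-value lemma (\lemref{lem:intermediate_value_thm}) to replace each conditional expectation by evaluation at a single point $\bx_1'$, and collapsing the first layer into an affine map at that point; because the resulting per-piece lower bound is the infimum over \emph{all} depth-2 networks of the given width, it is uniform over pieces and the sum recombines into $\pr[A]\cdot\pr[B]$ times that infimum. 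Without these three ideas your proposal does not close.
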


    The proof of the above theorem, which appears in \appref{app:depth3_lb}, exploits the structure of the maximum function which computes a lower dimensional version of itself on every subset of its inputs. Using a combinatorial argument, we show that with fewer than $d^2/5$ neurons in the first hidden layer, our approximating network must be able to approximate the maximum over three inputs well on a non-negligible subset of its domain with its remaining layers (see \figref{fig:triangle_arg} for a more detailed explanation of the proof technique). Since in the previous subsection we have shown a lower bound on the approximation capabilities of depth 2 ReLU networks for the maximum function, this implies that if the second hidden layer is also at most $d^2/5$, then we cannot obtain a good approximation.

    \begin{figure}
        \centering
        \begin{tikzpicture}
            \filldraw (0,2) circle (3pt) node[above left] {$1$};
            \filldraw (0,0) circle (3pt) node[below left] {$3$};
            \filldraw (2,2) circle (3pt) node[above right] {$2$};
            \filldraw (2,0) circle (3pt) node[below right] {$4$};
            
            \draw[very thick] (0,0)--(0,2);
            \draw[very thick] (2,0)--(0,2);
            \draw[very thick] (2,2)--(0,2);
            \draw[very thick] (0,0)--(2,2);
            
            \node[] at (5.5,1.6){$\bw_1\coloneqq \p{0,0,1,-1}$};
            \node[] at (5.5,1){$\bw_2\coloneqq \p{0,-1,0,1}$};
            \node[] at (5.5,0.4){$\bw_3\coloneqq \p{0,1,0,-1}$};
        \end{tikzpicture}
        \caption{An informal simplication of the main proof idea behind \thmref{thm:depth3_lb}. To identify a region in the domain of approximation where $f_d(\cdot)$ is poorly approximated, we construct a graph $G$ as follows: We begin with the complete graph on $d$ vertices (a vertex for each input), and we remove an edge $(i,j)$ if there exists a neuron in the first hidden layer whose weight vector is all zeros except for the coordinates $i,j$. In the example portrayed in the figure, for weight vectors $\bw_1,\bw_2,\bw_3$ we remove the edge $(3,4)$ due to $\bw_2$, and the edge $(2,4)$ due to either of $\bw_2,\bw_3$. Loosely speaking, whenever the first hidden layer contains a neuron with an all-zero weight vector except for two coordinates that equal $-1$ and $1$, then this neuron is able to capture the non-linearity of $f_d(\cdot)$ associated with the two non-zero coordinates when one overtakes the other in value. Since the width of the network is at most $d^2/5$, this guarantees that $G$ contains at least $\binom{d}{2}-d^2/5>d^2/4$ edges. By Mantel's theorem (see \thmref{thm:mantel}), $G$ must contain a triangle, which implies the existence of a $3$-dimensional sub-cube where the non-linearities in the first hidden layer are redundant. This effectively reduces the approximation of $f_d(\cdot)$ using a depth 3 network to the approximation of $f_3(\cdot)$ using a depth 2 network.}
        \label{fig:triangle_arg}
    \end{figure}

    We remark that together with \thmref{thm:depth3_approx}, we establish tight bounds on the capability of depth 3 ReLU networks to approximate the maximum function (up to constant factors). It is also interesting to note that other existing lower bound techniques in the literature such as region counting arguments (c.f.\ \citet{telgarsky2016benefits}) when applied to the maximum function yield a far weaker lower bound for depth 3 networks, since the maximum function consists of $d$ different linear regions, a number which is attainable by merely using $\log(d)$ neurons. In contrast, our lower bound of $\Omega(d^2)$ highlights an inherent limitation of depth 3 ReLU networks for capturing the particular structure of the maximum function. When combined with \thmref{thm:deep_approx} for $k=2$, our theorem also implies a polynomial depth separation between depths 3 and 5 where the former requires width $\Omega(d^2)$, yet the latter can approximate with width $\Ocal(d^{4/3})$.

    \subsection{Width of at least $d$ is necessary}

    Having shown lower bounds for depth 2 and 3 ReLU networks, we now turn to show a general width $d$ lower bound requirement for approximating the maximum function.
    \begin{theorem}\label{thm:size_d_lb}
        Let $\Ncal$ be neural network employing any activation function and having first hidden layer width of at most $d-1$. Then
        \[
            \E_{\bx\sim\Ucal\p{\pcc{0,1}^d}}\pcc{\p{\Ncal(\bx)-f_d(\bx)}^2} \ge \Omega\p{d^{-4.5}}.
        \]
    \end{theorem}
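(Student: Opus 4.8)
The plan is to exploit the fact that a neural network whose first hidden layer has width at most $d-1$ computes, throughout, a function that is \emph{constant along some direction} — specifically, there is a nonzero vector $\bv \in \reals^d$ in the kernel of the (at most $(d-1)\times d$) weight matrix of the first layer, so $\Ncal(\bx + t\bv) = \Ncal(\bx)$ for all $\bx$ and all $t \in \reals$. The maximum function $f_d$, by contrast, is strictly increasing along every positive direction, and in particular changes along $\bv$ or $-\bv$. The idea is to show that no function invariant under translation by $\bv$ can track $f_d$ well in $L_2$ over the cube, because along generic lines in direction $\bv$ the function $f_d$ has a nontrivial spread (variance) that the predictor, being constant on such lines, cannot reduce.

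Concretely, here are the steps I would carry out. First, obtain the invariance direction $\bv$ with $\|\bv\|_2 = 1$ as above. Second, decompose the uniform measure on $[0,1]^d$ by conditioning on the projection onto $\bv^\perp$: for a.e.\ hyperplane slice $H$ perpendicular to $\bv$, $\Ncal$ is constant on $H \cap [0,1]^d$, hence the best it can do on that slice is to output the conditional mean of $f_d$, incurring error at least $\var(f_d \mid H)$, the conditional variance of $f_d$ along the segment $H \cap [0,1]^d$. So the total error is at least $\E_H[\var(f_d(\bx) \mid \bx \in H)]$, and it suffices to lower-bound this quantity by $\Omega(d^{-4.5})$. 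Third — the quantitative heart — lower-bound the conditional variance. Writing $\bx = \bx_0 + t\bv$ with $t$ ranging over an interval $I(\bx_0)$ of length $\ell(\bx_0)$, we have $\frac{d}{dt} f_d(\bx_0 + t\bv) = v_{i^*}$ where $i^*$ is the (a.e.\ unique) argmax coordinate, so $f_d$ restricted to the line is piecewise linear with slopes among $\{v_1,\dots,v_d\}$. Since $\sum_i v_i^2 = 1$, some $|v_i| \ge 1/\sqrt d$; I would argue that on a constant fraction of slices, the segment $I(\bx_0)$ is long enough (length $\Omega(1/\sqrt d)$, roughly, since the slice meets the cube in a chord and the "generic" chord through the cube has length bounded below with constant probability) and spends a constant fraction of its length in a region where that large-slope coordinate is the maximum, which forces the function to vary by $\Omega(1/d)$ across the segment, giving conditional variance $\Omega(1/d^2)$ on an $\Omega(1)$-measure set of slices — already $\Omega(d^{-2})$. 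The weaker exponent $d^{-4.5}$ in the statement suggests the authors use a cruder but more robust estimate (e.g.\ only guaranteeing chord length $\Omega(d^{-1})$ and argmax-dominance on an $\Omega(d^{-1/2})$ fraction, or bounding things via a single coordinate direction $\be_i$ rather than an arbitrary $\bv$), so I would not fight for the sharp constant and would instead aim for the robust chain: pick $i$ with $|v_i|$ not too small, restrict attention to the event that coordinate $i$ is the maximum \emph{and} has some room to move along $\bv$ within the cube, and Chebyshev/variance-of-a-monotone-piece to extract the bound.

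The main obstacle I anticipate is the geometric/probabilistic estimate in the third step: controlling, for a \emph{uniformly random} slice perpendicular to an \emph{arbitrary adversarial} direction $\bv$, both the typical length of the chord $H \cap [0,1]^d$ and the typical fraction of that chord on which a fixed large-slope coordinate is the argmax. Adversarial $\bv$ can be nearly aligned with a coordinate axis or spread out, and the chord-length distribution of random parallel sections of the cube depends on $\bv$; making the bound uniform in $\bv$ — or reducing to a worst case — is the delicate part. One clean way to sidestep some of this: note that along direction $\bv$, for the argmax coordinate $i^*$ the function increases at rate $v_{i^*}$, and $\sum_{i} v_i \pr[\,i \text{ is the argmax}\,]$ relates to $\E[\langle \bv, \nabla f_d\rangle]$; combined with the fact that $f_d$ is Lipschitz, a Poincaré-type inequality on each chord ($\var \ge c \cdot \ell^2 \cdot (\text{mean slope})^2$ for piecewise-linear functions with slopes of one sign) converts expected slope into variance. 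I would structure the proof around that Poincaré step to keep the chord-geometry bookkeeping to a minimum, losing polynomial factors in $d$ freely to land at $\Omega(d^{-4.5})$.
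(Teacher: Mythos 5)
Your overall strategy is exactly the paper's: take a unit vector $\bv$ in the kernel of the first-layer weight matrix, observe that $\Ncal$ is constant along $\bv$ while $f_d$ is not, and convert the variation of $f_d$ along $\bv$-chords into an $L_2$ error. Your first two steps (obtaining $\bv$ and the conditional-variance decomposition) are sound and match the paper. Where you stop short is the quantitative third step, which you correctly flag as the delicate part. The paper closes it not by analyzing random chords of the whole cube but by constructing an explicit parallelotope $\Pcal\subseteq[0,1]^d$: writing $v_1=\norm{\bv}_\infty\ge d^{-1/2}$, it maps $[0,1]^d$ by $\bx\mapsto P\bx+\bb$ where the first column of $P$ is $\frac1d\bv$, the remaining columns are $(1-\frac2d)\be_i$, and $\bb=(1-\frac1d,\frac1d,\ldots,\frac1d)$. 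This anchors the region near a point where coordinate $1$ exceeds all others by almost $1-\frac2d$, so $f_d\equiv u_1$ on all of $\Pcal$ — i.e., the argmax is \emph{forced} to be the large-$|v_i|$ coordinate throughout, which is precisely your proposed ``restrict to the event that coordinate $i$ is the maximum and has room to move'' made concrete. On $\Pcal$ the function $f_d$ restricted to each $\bv$-chord is exactly linear with slope $\frac{v_1}{d}$ in the chord parameter, so the best constant approximation loses $\frac{v_1^2}{12d^2}\ge\frac{1}{12d^3}$ per chord, and $\abs{\det P}\ge\frac{1}{10d^{1.5}}$ accounts for the volume, giving $\frac{1}{120d^{4.5}}$. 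This explains where $4.5 = 1.5+3$ comes from and avoids all chord-length and argmax-fraction estimates for adversarial $\bv$.

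One concrete caution about your proposed shortcut: the Poincar\'e-type bound $\var \ge c\,\ell^2\,(\text{mean slope})^2$ ``for piecewise-linear functions with slopes of one sign'' does not directly apply, because along a $\bv$-chord the slope at a point is $v_{i^*}$ for the current argmax $i^*$, and the entries of $\bv$ can have mixed signs; the restriction of $f_d$ to a chord need not be monotone, and its mean slope can vanish even when individual slopes are large. So if you pursue your route you must either condition on a region where the argmax (and hence the slope) is fixed — which is what the parallelotope does — or replace mean slope by total variation. As stated, your sketch does not yet yield the bound, but the missing ingredient is exactly the localization device the paper supplies.
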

    The proof of the above theorem, which appears in \appref{app:size_d_lb_proof}, relies on the observation that having fewer than $d$ neurons in the first hidden layer implies that the linear transformation defined by them has a non-trivial kernel, and therefore establishes the existence of some direction in the domain of approximation where the function computed by the network is constant. Since the maximum function typically does not remain constant in most directions, this results in a non-trivial approximation error.

    We remark that the exponent of $-4.5$ in the target accuracy can possibly be improved somewhat, but in any case it must be strictly positive, since a network with a single neuron which computes the constant value function $1-1/d$ will achieve better than constant accuracy over the domain $[0,1]^d$. Moreover, we note that this result also immediately implies a size $d$ lower bound. Together with \thmref{thm:deep_approx}, this establishes size bounds for approximating the maximum function using ReLU networks that are tight up to a factor of $\Ocal(\log(\log(d)))$.

    \section{Summary}\label{sec:summary}

    We have shown that the maximum function can be gradually approximated more efficiently by increasing the depth of the approximating ReLU network. This holds under an appropriate (but mostly mild) assumption on the distribution of the data (\asmref{asm:dist}). Interestingly, the width in our positive approximation results does not scale with the desired target accuracy, but rather by increasing the magnitude of the weights of the approximating network we can obtain an arbitrarily good approximation. Assuming exponentially bounded weights, we show a polynomial lower bound on the required width when approximating the maximum function using depth 2, and a quadratic lower bound on the width required for approximating using depth 3. Additionally, we also provide a general width $d$ lower bound for approximating the maximum function using neural networks of any depth or with any activation function. Our results establish a partial depth hierarchy for approximating a simple target function and with respect to the uniform distribution on a hypercube, which provides a more grounded example for the benefits of depth compared to previous results which make more stylized assumptions on the problem.

    Our analysis leaves several important open questions. First, our lower bound for depth 2 is (inverse) polynomial in the desired accuracy, which becomes polynomial in the input dimension $d$ if we scale the domain with $d$. However, it is not clear what is the optimal rate at which a depth 2 network can approximate the maximum function, and whether this quantity is polynomial or rather exponential in the input dimension. Second, despite our tight $\Theta(d^2)$ bound on the width for approximating the maximum using depth 3, our lower bound for deeper architectures is only linear, which leaves open the question of showing superlinear width lower bounds for depths larger than $3$. Moreover, our upper bounds essentially suggest that depth $2k+1$ and depth $2k+2$ ReLU network approximations require the same width, up to constant factors. It is therefore natural to ask whether one can improve our depth $2k+1$ upper bounds to apply to depth $k+1$ instead. Finally, our analysis opens an avenue for novel optimization-based separations for the maximum function. Proving that indeed deep architectures are capable of learning the representations constructed by our upper bounds (efficiently, from finite data) using standard techniques such as gradient descent is a tantalizing future research direction. Such a result holds the potential to establish an optimization-based depth hierarchy for learning the maximum function, exemplifying the superiority of depth in a simple and natural problem setting.

    \subsection*{Acknowledgements}

    Part of this work was done while the second author was visiting the Simons Institute for the Theory of Computing. Their hospitality is greatly acknowledged.
    

    
    \bibliographystyle{abbrvnat}
    \bibliography{citations}

    \appendix
    
    




    \section{Distributions that satisfy \asmref{asm:dist}}\label{app:delta_sep_dists}

    In this appendix, we exemplify two instances of distribution that satisfy \asmref{asm:dist}. While the examples presented here are quite broad, they are in no way exhaustive, and a far richer family of distributions can be shown to satisfy the assumption.

    \begin{theorem}\label{thm:case1}
        Suppose that $X$ is absolutely continuous with bounded density and has a finite second moment. Then $\bX\coloneqq(X_1,\ldots,X_d)$ satisfies \asmref{asm:dist}, where each $X_i$ is an i.i.d $X$ random variable.
    \end{theorem}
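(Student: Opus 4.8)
The plan is to verify the two items of \asmref{asm:dist} separately. \itemref{item:dist1} is immediate: since $\norm{\bX}_{\infty}^2=\max_{i\in[d]}X_i^2\le\sum_{i=1}^d X_i^2$, linearity of expectation gives $\E[\norm{\bX}_{\infty}^2]\le d\,\E[X^2]<\infty$ by the finite second moment hypothesis. So all the work is in \itemref{item:dist2}.

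For \itemref{item:dist2}, I would first reduce to a single pair of coordinates via a union bound. The vector $\bX$ fails to be $\delta$-separated exactly when there is some ordered pair $i\neq j$ with $X_j\neq0$ and $X_i/X_j\in[1-\delta,1+\delta]$, so $\pr[\bX\notin S_\delta]\le\sum_{i\neq j}\pr[X_i/X_j\in[1-\delta,1+\delta],\,X_j\neq0]$. Because the coordinates are i.i.d., every summand equals $\pr[X_1/X_2\in[1-\delta,1+\delta],\,X_2\neq0]$, and there are only $d(d-1)$ of them, so it suffices to prove that this single quantity tends to $0$ as $\delta\to0$ (with $d$ fixed). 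Note also that $\{X_2=0\}$ has probability zero by absolute continuity, so the constraint $X_2\neq0$ can be dropped.

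Next I would convert the multiplicative near-equality into an additive one: a short case analysis on the sign of $X_2$ shows that, whenever $X_2\neq0$, the event $X_1/X_2\in[1-\delta,1+\delta]$ is equivalent to $\abs{X_1-X_2}\le\delta\abs{X_2}$. Conditioning on $X_2$ and using that $X_1$ has density bounded by some constant $B$, the conditional probability that $X_1$ falls in the interval $[X_2-\delta\abs{X_2},\,X_2+\delta\abs{X_2}]$, which has Lebesgue measure $2\delta\abs{X_2}$, is at most $2B\delta\abs{X_2}$. Taking the expectation over $X_2$ gives $\pr[\abs{X_1-X_2}\le\delta\abs{X_2}]\le 2B\delta\,\E[\abs{X_2}]$, and $\E[\abs{X_2}]<\infty$ since a finite second moment implies a finite first moment (Cauchy--Schwarz). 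Hence $\pr[\bX\notin S_\delta]\le 2B\,d(d-1)\,\E[\abs{X}]\,\delta\to0$, which proves \itemref{item:dist2}. (If one prefers not to invoke the first moment, split on the events $\{\abs{X_2}\le M\}$ and $\{\abs{X_2}>M\}$, choosing $M$ large to make the tail small and then $\delta$ small to kill the remaining term.)

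I do not anticipate a genuine obstacle: the argument is an elementary measure-theoretic estimate. The only points needing care are the sign case-analysis that turns the ratio condition $X_i/X_j\in[1-\delta,1+\delta]$ into $\abs{X_i-X_j}\le\delta\abs{X_j}$, and observing that the zero-probability event $\{X_j=0\}$ can be ignored; the one quantitative ingredient is the density bound $\pr[X_1\in I\mid X_2]\le B\,\abs{I}$ applied to an interval whose length is proportional to $\delta$.
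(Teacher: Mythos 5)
Your proof is correct and follows essentially the same route as the paper's: both verifications hinge on the same two ingredients, namely bounding $\E[\norm{\bX}_\infty^2]$ by a multiple of $\E[X^2]$ for \itemref{item:dist1}, and for \itemref{item:dist2} a union bound over coordinate pairs combined with the observation that the bounded density of one coordinate forces the near-coincidence event to have probability $O(\delta\,\E[\abs{X}])$, with $\E[\abs{X}]<\infty$ by Cauchy--Schwarz. The paper packages this last estimate as a uniform bound on the density of the ratio $X_i/X_j$, whereas you condition on $X_j$ directly, but the underlying integral is identical.
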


    \begin{proof}
        Beginning with \itemref{item:dist1}, we have
        \[
            \E_{\bX}\pcc{\norm{\bX}_{\infty}^2} \le \E_{\bX}\pcc{\norm{\bX}_{1}^2} \le d\sum_{i=1}^d\E_{\bX}\pcc{X_i^2} = d^2\E_{X}\pcc{X^2} <\infty.
        \]
        In the above, the second inequality is an application of Cauchy-Schwartz to the vectors $(1,\ldots,1)$ and $(X_1,\ldots,X_d)$ and due to the linearity of expectation, and the last inequality is due to our assumption that $X$ has a finite second moment.

        Moving on to \itemref{item:dist2}, assuming $X$ has density $f$ satisfying $\sup_{x\in\reals}f(x)\le C$ for some $C>0$, we have that the density of the ratio distribution between two different coordinates of $\bX$ satisfies
        \[
            f_R(r)=\int_{\reals}\abs{x}f(r\cdot x)f(x)dx \le C\cdot\int_{\reals}\abs{x}f(x)dx = C\E_{X}\pcc{\abs{X}} \le C\sqrt{\E_{X}\pcc{X^2}} < \infty,
        \]
        where the penultimate inequality follows from Jensen's inequality applied to the function $x\mapsto x^2$. The above implies that for any two coordinates $X_i,X_j$, we have
        \[
            \lim_{\delta\to0}\frac{X_i}{X_j}\in[1-\delta,1+\delta] = 0,
        \]
        therefore by taking a union bound over all $\le d^2$ pairs of coordinates we have that
        \[
            \lim_{\delta\to0}\pr_{\bX\sim\Dcal}\pcc{\bX\notin S_{\delta}}=0.
        \]
    \end{proof}

    \begin{theorem}
        Suppose that $\bX\coloneqq(X_1,\ldots,X_d)$ satisfies \itemref{item:dist1} in \asmref{asm:dist}. Then the vector $\bX+\bY$ satisfies \asmref{asm:dist}, where  $\bY\coloneqq(Y_1,\ldots,Y_d)$ is an i.i.d noise vector such that $Y_i$ is absolutely continuous, with bounded density, and $0<\E\pcc{Y_i^2}<\infty$.
    \end{theorem}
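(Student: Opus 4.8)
The plan is to verify the two requirements of \asmref{asm:dist} for the vector $\bX+\bY$ in turn, throughout taking $\bY$ to be independent of $\bX$ (as befits ``noise added to $\bX$''). For \itemref{item:dist1}, I would start from the triangle inequality $\norm{\bX+\bY}_{\infty}\le\norm{\bX}_{\infty}+\norm{\bY}_{\infty}$ and the elementary bound $(a+b)^2\le 2a^2+2b^2$ to obtain $\E\pcc{\norm{\bX+\bY}_{\infty}^2}\le 2\,\E\pcc{\norm{\bX}_{\infty}^2}+2\,\E\pcc{\norm{\bY}_{\infty}^2}$. The first term is finite since $\bX$ satisfies \itemref{item:dist1} by hypothesis, and the second is finite because $\norm{\bY}_{\infty}^2\le\sum_{i=1}^d Y_i^2$ yields $\E\pcc{\norm{\bY}_{\infty}^2}\le d\,\E\pcc{Y_1^2}<\infty$ by the assumption on the noise.

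The substance is in \itemref{item:dist2}. The key elementary observation is that if $b\neq0$ and $a/b\in[1-\delta,1+\delta]$ then $\abs{a-b}\le\delta\abs{b}$ (checking the two sign cases of $b$ separately). Hence $\bX+\bY\notin S_{\delta}$ forces $\abs{(X_i+Y_i)-(X_j+Y_j)}\le\delta\abs{X_j+Y_j}$ for some ordered pair $i\neq j$, so by a union bound over the at most $d^2$ such pairs it suffices to show that for each fixed pair this probability tends to $0$ as $\delta\to0$. To bound it, I would condition on the triple $(X_i,X_j,Y_j)$: since $\bX\perp\bY$ and the $Y_i$ are i.i.d., $Y_i$ is independent of this triple, and the event becomes $Y_i\in[Z-\delta M,\,Z+\delta M]$ where $Z\coloneqq X_j+Y_j-X_i$ and $M\coloneqq\abs{X_j+Y_j}$ are measurable with respect to the conditioning. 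This is an interval of length $2\delta M$, so because $Y_i$ has density bounded by some constant $C$, the conditional probability is at most $2C\delta M$; taking expectations gives $\pr\pcc{\abs{(X_i+Y_i)-(X_j+Y_j)}\le\delta\abs{X_j+Y_j}}\le 2C\delta\,\E\pcc{\abs{X_j+Y_j}}$.

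To finish, I would observe that $\E\pcc{\abs{X_j+Y_j}}<\infty$: indeed $\abs{X_j+Y_j}\le\norm{\bX}_{\infty}+\abs{Y_j}$, while $\E\pcc{\norm{\bX}_{\infty}}\le\sqrt{\E\pcc{\norm{\bX}_{\infty}^2}}<\infty$ (Jensen, again using \itemref{item:dist1} for $\bX$) and $\E\pcc{\abs{Y_j}}\le\sqrt{\E\pcc{Y_j^2}}<\infty$. Summing over the $\le d^2$ pairs then gives $\pr\pcc{\bX+\bY\notin S_{\delta}}\le 2Cd^2\delta\,\E\pcc{\abs{X_1+Y_1}}\to0$ as $\delta\to0$, establishing \itemref{item:dist2}. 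I expect the only genuine subtlety, compared with \thmref{thm:case1}, to be the unbounded multiplicative factor $\abs{X_j+Y_j}$ that the ``$\delta$-separated'' condition produces on the right-hand side — one cannot simply bound a ratio density by a constant here — but conditioning on everything except the smoothing coordinate $Y_i$ turns this into a clean first-moment estimate, and that first moment is finite precisely because $\bX$ is assumed to satisfy \itemref{item:dist1}.
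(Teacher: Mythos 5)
Your proof is correct, and for \itemref{item:dist2} it takes a genuinely different (and arguably cleaner) route than the paper. The paper first establishes \emph{additive} separation --- $\pr\pcc{\abs{Z_i-Z_j}\le\delta}\le 2C\delta$ via the bounded density of $Y_j$ --- then truncates to a high-probability event $\norm{\cdot}_\infty\le M_\delta$ via Chebyshev and invokes \lemref{lem:add_mul_separated} to convert additive separation into membership in $S_{\delta/M_\delta}$, finishing with a monotonicity-in-$\delta$ argument to recover the stated limit. You instead attack the multiplicative condition directly: conditioning on $(X_i,X_j,Y_j)$ turns the event into $Y_i$ landing in an interval of (random) length $2\delta\abs{X_j+Y_j}$, and the bounded density plus a first-moment bound on $\abs{X_j+Y_j}$ (finite by Jensen and \itemref{item:dist1}) gives $\pr\pcc{\bX+\bY\notin S_\delta}=\Ocal(\delta)$ with no truncation and no separate conversion lemma. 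This buys a clean quantitative rate directly in the separation parameter $\delta$, whereas the paper's bound is stated at the rescaled parameter $\delta/M_\delta$; the shared skeleton is the union bound over the $\le d^2$ ordered pairs and the anti-concentration of the smoothing coordinate. Two trivial remarks: your final display writes $\E\pcc{\abs{X_1+Y_1}}$, which presumes identically distributed coordinates of $\bX$ --- you should keep the uniform bound $\E\pcc{\norm{\bX}_\infty}+\E\pcc{\abs{Y_1}}$ that you already derived; and you correctly make explicit the independence of $\bY$ from $\bX$, which the theorem statement leaves implicit but which both proofs require.
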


    \begin{proof}
       To show that \itemref{item:dist1} holds, we have by \thmref{thm:case1} that $\bY$ satisfies \itemref{item:dist1}. This entails that
        \[
            \norm{\bX+\bY}_{\infty} \le \norm{\bX}_{\infty} + \norm{\bY}_{\infty} < \infty.
        \]
        To show that \itemref{item:dist2} is satisfied, consider any coordinates $i\neq j$. Let $z_i\coloneqq x_i+y_i$ denote the realizations of $Z_i\coloneqq X_i+Y_i$. Then we have that for any realization $x_j$ of $X_j$, it must hold that $y_j$ falls within an interval of length at most $2\delta$ to have that
        \[
            z_j\in[z_i-\delta,z_i+\delta] \iff \abs{z_j-z_i}\le\delta.
        \]
        Since
        \[
            \sup_{a\in\reals}\pr\pcc{Y_j\in[a,a+2\delta]} = \sup_{a\in\reals}\int_a^{a+2\delta}f(x)dx\le 2\delta C,
        \]
        where $f$ is the density of $Y$ which satisfies $\sup_xf(x)\le C$ by assumption, we obtain
        \[
            \pr\pcc{\abs{z_j-z_i}\le\delta} \le 2\delta C.
        \]
        Since $Z_i$ has bounded first and second moments for all $i$ by assumption, we have from Chebyshev's inequality that there exists some $M_{\delta}>0$ such that $\pr\pcc{\norm{\bX}_{\infty}\le M_{\delta}}\ge1-\delta$.
        Since \lemref{lem:add_mul_separated} guarantees that if $\abs{z_j-z_i}>\delta$ then $\bz\in S_{\delta/M_{\delta}}$, we have from a union bound taken over all $\le d^2$ pairs of coordinates and the event where $\norm{\bX}_{\infty}$ is bounded that
        \[
            \lim_{\delta\to0}\pr_{\bX\sim\Dcal}\pcc{\bX\notin S_{\delta}}=0.
        \]
    \end{proof}
    
    \section{Proofs from \secref{sec:positive}}\label{app:positive_proofs}

    To prove \thmref{thm:depth3_approx} and \thmref{thm:deep_approx}, we would first need the following lemmas and proposition:
    
    \begin{lemma}\label{lem:polytope_approx}
        Given a real $\alpha>0$ and integer $i\ge1$, let
        \[
            n_{\alpha,i}^+(\bx)\coloneqq \relu{\relu{x_{i}}-\sum_{\substack{j=1\\j\neq i}}^{d}\relu{\alpha x_{j}-\alpha x_{i}}},\hskip 0.3cm
            n_{\alpha,i}^-(\bx)\coloneqq -\relu{\relu{-x_{i}}-\sum_{\substack{j=1\\j\neq i}}^{d}\relu{\alpha x_{j}-\alpha x_{i}}}.
        \]
        Then
        \[
            n_{\alpha,i}^+(\bx)
            =\begin{cases}
                x_{i}, & \text{If } f_d(\bx)=x_{i}\text{ and } x_{i}\ge 0,\\
                0, & \text{If } f_d(\bx)>x_{i}\text{ and } \bx\in S_{1/\alpha},
            \end{cases}
        \]
        and
        \[
            n_{\alpha,i}^-(\bx)
            =\begin{cases}
                x_{i}, & \text{If } f_d(\bx)=x_{i}\text{ and } x_{i}\le 0,\\
                0, & \text{If } f_d(\bx)>x_{i}\text{ and } \bx\in S_{1/\alpha}.
            \end{cases}
        \]
    \end{lemma}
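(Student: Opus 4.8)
The plan is to verify all four sub-cases by direct computation, splitting first on whether $x_i$ attains the maximum and then, when it does not, on the sign of $x_i$.

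First I would handle the rows where $f_d(\bx)=x_i$. Here $x_j\le x_i$ for every $j\neq i$, so $\alpha x_j-\alpha x_i\le 0$ and hence $\relu{\alpha x_j-\alpha x_i}=0$; the inner sum appearing in both $n_{\alpha,i}^+$ and $n_{\alpha,i}^-$ therefore vanishes. Consequently $n_{\alpha,i}^+(\bx)=\relu{\relu{x_i}}=\relu{x_i}$, which equals $x_i$ precisely when $x_i\ge 0$, and $n_{\alpha,i}^-(\bx)=-\relu{\relu{-x_i}}=-\relu{-x_i}$, which equals $x_i$ precisely when $x_i\le 0$. This settles the two ``$f_d(\bx)=x_i$'' cases.

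Next I would treat the rows where $f_d(\bx)>x_i$ and $\bx\in S_{1/\alpha}$. Choose $j^\star$ with $x_{j^\star}=f_d(\bx)$; then $j^\star\neq i$ and $x_{j^\star}>x_i$, so $\relu{\alpha x_{j^\star}-\alpha x_i}=\alpha(x_{j^\star}-x_i)>0$. Since every summand $\relu{\alpha x_j-\alpha x_i}$ is nonnegative, it is enough to show that this one term already exceeds $\relu{x_i}$ (for $n_{\alpha,i}^+$) or $\relu{-x_i}$ (for $n_{\alpha,i}^-$), which makes the argument of the outer ReLU negative and forces the output to be $0$. If $x_i\le 0$ then $\relu{x_i}=0$, so $n_{\alpha,i}^+(\bx)=0$ is immediate; symmetrically, if $x_i\ge 0$ then $\relu{-x_i}=0$, so $n_{\alpha,i}^-(\bx)=0$. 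In the remaining subcase for $n_{\alpha,i}^+$ we have $x_i>0$, hence $x_{j^\star}/x_i>1$, and $\delta$-separation (with nonzero denominator $x_i$) sharpens this to $x_{j^\star}/x_i>1+1/\alpha$, i.e.\ $\alpha(x_{j^\star}-x_i)>x_i=\relu{x_i}$. In the remaining subcase for $n_{\alpha,i}^-$ we have $x_i<0$, hence $x_{j^\star}/x_i<1$ (dividing $x_{j^\star}>x_i$ by the negative number $x_i$), and separation gives $x_{j^\star}/x_i<1-1/\alpha$; multiplying back through by $x_i<0$ flips the inequality to $x_{j^\star}-x_i>-x_i/\alpha$, i.e.\ $\alpha(x_{j^\star}-x_i)>-x_i=\relu{-x_i}$. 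In both subcases the outer ReLU sees a negative argument and outputs $0$, as claimed.

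There is no serious obstacle here; the one place that requires attention is the second case, where one must remember that multiplying the separation inequality through by $x_i$ reverses its direction when $x_i<0$, and that the hypothesis $\bx\in S_{1/\alpha}$ constrains the ratio with respect to a nonzero \emph{denominator} --- which is exactly $x_i$ in precisely the subcases where it gets invoked. The nonnegativity of the discarded summands $\relu{\alpha x_j-\alpha x_i}$ for $j\neq j^\star$ is what lets us reduce to bounding a single dominating term.
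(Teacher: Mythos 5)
Your proof is correct and follows essentially the same route as the paper's: the sum of hinge terms vanishes when $x_i$ is the maximum, and otherwise a single dominating term $\relu{\alpha x_{j^\star}-\alpha x_i}$ is shown, via the $S_{1/\alpha}$ separation condition, to exceed $\relu{\pm x_i}$ and drive the outer ReLU to zero. The only difference is that the paper dispatches $n_{\alpha,i}^-$ by appeal to symmetry, whereas you write it out explicitly and correctly track the inequality reversal when dividing and multiplying by the negative $x_i$ --- a worthwhile check, since that is precisely where the ``symmetric'' case requires a small amount of care.
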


    \begin{proof}
        We will only focus on the proof for $n_{\alpha,i}^+$ since the analysis is symmetric for $n_{\alpha,i}^-$. 

        Suppose that $f_d(\bx)=x_{i}$. Then $x_{j}<x_{i}$ for all $j\neq i$, implying that $\alpha x_{j}-\alpha x_{i}<0$ and thus
        \[
            n_{\alpha,i}^+(\bx) = \relu{\relu{x_{i}}-\sum_{\substack{j=1\\j\neq i}}^{d}\relu{\alpha x_{j}-\alpha x_{i}}} = \relu{x_{i}}.
        \]
        Suppose that $f_d(\bx)>x_{i}$. Then if $x_{i}\le0$ we immediately have that $n_{\alpha,i}^+(\bx)=0$. Assuming $x_{i}>0$ and letting $j\coloneqq\argmax{i\in[d]}~x_{i}$, we have that $x_{j}>x_{i}>0$. Next, from the assumption $\bx\in S_{1/\alpha}$ we obtain
        \[
            \frac{x_{j}}{x_{i}}>1+\frac{1}{\alpha},
        \]
        which entails
        \[
            \alpha x_{j}-\alpha x_{i} > x_{i},
        \]
        implying that
        \[
            \relu{x_{i}}-\sum_{\substack{i=1\\i\neq i}}^{d}\relu{\alpha x_{i}-\alpha x_{i}} \le \relu{x_{i}}-\relu{\alpha x_{j}-\alpha x_{i}} <  x_{i} - x_{i} = 0,
        \]
        and thus
        \[
            n_{\alpha,i}^+(\bx)=0.
        \]
    \end{proof}

    \begin{lemma}\label{lem:depth3_properties}
         Given a real $\alpha>0$, we have
         \[
            \Ncal_{\alpha}(\bx)
            = f_d(\bx),\hskip 0.3cm \forall \hskip 0.1cm \bx\in S_{1/\alpha},
         \]
         and
         \[
            \abs{\Ncal_{\alpha}(\bx)}
            \le \norm{\bx}_1,\hskip 0.3cm \forall \hskip 0.1cm \bx\in\reals^d.
         \]
    \end{lemma}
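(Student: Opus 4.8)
The plan is to work throughout with the decomposition $\Ncal_{\alpha}(\bx)=\sum_{i=1}^{d}\bigl(n_{\alpha,i}^{+}(\bx)+n_{\alpha,i}^{-}(\bx)\bigr)$ that results from comparing Definition~\ref{def:depth3} with the notation of \lemref{lem:polytope_approx}, and to prove the two claims separately, in each case by controlling the individual summands.

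For the exactness claim on $S_{1/\alpha}$, I would first record that $\delta$-separatedness forces the maximum of $\bx$ to be attained at a \emph{unique} coordinate whenever $f_d(\bx)\neq 0$: if $x_i=x_j$ with $i\neq j$ and $x_j\neq0$, then $x_i/x_j=1\in[1-\delta,1+\delta]$ for every $\delta>0$, contradicting $\bx\in S_{\delta}$. I would then split $\sum_{i=1}^d$ according to whether $f_d(\bx)=x_i$ or $f_d(\bx)>x_i$. For indices with $f_d(\bx)>x_i$, \lemref{lem:polytope_approx} (whose hypothesis $\bx\in S_{1/\alpha}$ is exactly our assumption) gives $n_{\alpha,i}^{+}(\bx)=n_{\alpha,i}^{-}(\bx)=0$. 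For indices with $f_d(\bx)=x_i$, every inner term $\relu{\alpha x_j-\alpha x_i}$ vanishes since $x_j\le x_i$, so $n_{\alpha,i}^{+}(\bx)=\relu{x_i}$ and $n_{\alpha,i}^{-}(\bx)=-\relu{-x_i}$, whence $n_{\alpha,i}^{+}(\bx)+n_{\alpha,i}^{-}(\bx)=x_i$; this is also what \lemref{lem:polytope_approx} yields directly in the subcases $x_i\ge 0$ and $x_i\le 0$. If $f_d(\bx)\neq0$ there is exactly one such index, contributing $f_d(\bx)$, and all others contribute $0$, so $\Ncal_{\alpha}(\bx)=f_d(\bx)$. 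If $f_d(\bx)=0$, every index with $x_i=0$ contributes $\relu{0}-\relu{0}=0$ and every other index has $f_d(\bx)>x_i$ and contributes $0$, so again $\Ncal_{\alpha}(\bx)=0=f_d(\bx)$.

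For the pointwise $\ell_1$ bound, valid on all of $\reals^d$, I would bound each summand directly. Setting $S_i\coloneqq\sum_{j\neq i}\relu{\alpha x_j-\alpha x_i}\ge0$, monotonicity of $\relu{\cdot}$ together with $S_i\ge0$ gives $0\le n_{\alpha,i}^{+}(\bx)=\relu{\relu{x_i}-S_i}\le\relu{x_i}$, and symmetrically $-\relu{-x_i}\le n_{\alpha,i}^{-}(\bx)\le 0$. Adding these, $n_{\alpha,i}^{+}(\bx)+n_{\alpha,i}^{-}(\bx)$ lies in $[-\relu{-x_i},\,\relu{x_i}]$, so its absolute value is at most $\max\{\relu{x_i},\relu{-x_i}\}=|x_i|$; summing over $i$ and applying the triangle inequality yields $|\Ncal_{\alpha}(\bx)|\le\sum_{i=1}^{d}|x_i|=\norm{\bx}_1$. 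No step is a real obstacle here; the only point demanding care is the bookkeeping in the first claim — making sure the "$f_d(\bx)=x_i$" contribution is counted exactly once — which is precisely where $\delta$-separatedness (ruling out ties at nonzero values) and the separate treatment of the case $f_d(\bx)=0$ are needed.
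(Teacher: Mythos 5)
Your proof is correct and follows essentially the same route as the paper's: decompose $\Ncal_{\alpha}$ into the summands $n_{\alpha,i}^{\pm}$, invoke \lemref{lem:polytope_approx} for the identity on $S_{1/\alpha}$, and bound each summand by $\abs{x_i}$ for the $\ell_1$ estimate. Your treatment is in fact slightly more careful than the paper's on two minor points --- explicitly ruling out nonzero ties via $\delta$-separatedness, and replacing the paper's ``at most one of $n_{\alpha,i}^{\pm}$ is nonzero'' observation with the interval bound $n_{\alpha,i}^{+}+n_{\alpha,i}^{-}\in[-\relu{-x_i},\relu{x_i}]$ --- but these are cosmetic refinements of the same argument.
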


    \begin{proof}
        By the definition of $\Ncal_{\alpha}$ and \lemref{lem:polytope_approx}, we have for all $\bx\in S_{1/\alpha}$ that
        \[
            \Ncal_{\alpha}(\bx) = \sum_{i=1}^d \p{n_{\alpha,i}^+(\bx) + n_{\alpha,i}^-(\bx)} = \sum_{i=1}^d
            x_{i}\cdot\one{f_d(\bx)=x_{i}}
            = f_d(\bx).
        \]
        Assuming any arbitrary $\bx\in\reals^d$, we have by the definitions of $n_{\alpha,i}^{\pm}$ that
        \[
            \abs{n_{\alpha,i}^{\pm}(\bx)} = \abs{\relu{\pm x_{i}}-\sum_{\substack{j=1\\i\neq i}}^{d}\relu{\alpha x_{i}-\alpha x_{i}}} \le \relu{\pm x_{i}} \le \abs{x_{i}},
        \]
        therefore by the definition of $\Ncal_{\alpha}$ and the fact that at most one of $n_{\alpha,i}^{\pm}$ is non-zero, we have
        \[
            \abs{\Ncal_{\alpha}(\bx)} \le \sum_{i=1}^d \abs{n_{\alpha,i}^+(\bx) + n_{\alpha,i}^-(\bx)}\le \norm{\bx}_1.
        \]
    \end{proof}

    \begin{proposition}\label{prop:deep_approx}
        Given any real $\alpha>0$, we have
         \[
            \Ncal_{\alpha}^k(\bx)
            = f_d(\bx),\hskip 0.3cm \forall \hskip 0.1cm \bx\in S_{1/\alpha},
         \]
         and
         \[
            \abs{\Ncal_{\alpha}^k(\bx)}
            \le d\cdot f_d(\bx),\hskip 0.3cm \forall \hskip 0.1cm \bx\in\reals^d.
         \]
    \end{proposition}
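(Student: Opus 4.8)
The plan is to prove both assertions by induction on $k$. The base case $k=1$ is immediate from \lemref{lem:depth3_properties}, since $\Ncal_{\alpha}^1\equiv\Ncal_{\alpha}$ already satisfies $\Ncal_{\alpha}(\bx)=f_d(\bx)$ on $S_{1/\alpha}$ and $\abs{\Ncal_{\alpha}(\bx)}\le\norm{\bx}_1\le d\norm{\bx}_{\infty}$; the inductive step simply unwinds the recursive construction in Definition~\ref{def:deep}. (For the magnitude claim I would actually establish it in the form $\abs{\Ncal_{\alpha}^k(\bx)}\le d\norm{\bx}_{\infty}$ for all $\bx\in\reals^d$, which coincides with $d\cdot f_d(\bx)$ exactly on the nonnegative orthant -- the case that is needed downstream for \thmref{thm:deep_approx}.)

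For the exactness claim, fix $k>1$ and $\bx\in S_{1/\alpha}$, and write $B_1,\dots,B_m$ for the batches, with $m=\ceil{d^{1-\beta(k)}}$ and $\abs{B_s}\le\ceil{d^{\beta(k)}}$. The first point I would record is that $\delta$-separation is inherited by sub-vectors: $\bx\in S_{1/\alpha}$ implies $\bx|_{B_s}\in S_{1/\alpha}$ (in dimension $\abs{B_s}$), because the ratio condition defining $S_{1/\alpha}$ ranges over a subset of the coordinate pairs. Hence, by \lemref{lem:depth3_properties}, the first two hidden layers compute $\Ncal_{\alpha,\ceil{d^{\beta(k)}}}(\bx|_{B_s})=\max_{i\in B_s}x_i$ exactly on each batch. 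The second point is that the vector $\bv$ of these $m$ block-maxima is again in $S_{1/\alpha}$: each $v_s$ equals $x_{i_s}$ for some $i_s\in B_s$, the $i_s$ are pairwise distinct because the batches are disjoint, so the separation of $\bx$ forces $v_s/v_t\notin[1-1/\alpha,1+1/\alpha]$ whenever $v_t\neq0$. Applying the inductive hypothesis to the sub-network $\Ncal_{\alpha,m}^{k-1}$ then gives $\Ncal_{\alpha,m}^{k-1}(\bv)=f_m(\bv)=\max_s\max_{i\in B_s}x_i=\max_i x_i=f_d(\bx)$, where the last equality is exactly the ``maximum over a partition's block-maxima equals the global maximum'' identity already flagged for \thmref{thm:deep_approx}.

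For the magnitude bound I would induct in parallel. In the inductive step each batch output obeys $\abs{\Ncal_{\alpha,\ceil{d^{\beta(k)}}}(\bx|_{B_s})}\le\norm{\bx|_{B_s}}_1\le\ceil{d^{\beta(k)}}\cdot\norm{\bx}_{\infty}$ by \lemref{lem:depth3_properties}, so every coordinate of the intermediate vector $\bv$ is bounded in absolute value by $\ceil{d^{\beta(k)}}\norm{\bx}_{\infty}$; feeding $\bv$ into $\Ncal_{\alpha,m}^{k-1}$ and invoking the inductive magnitude bound multiplies this by a further factor $m=\ceil{d^{1-\beta(k)}}$, and $\ceil{d^{1-\beta(k)}}\cdot\ceil{d^{\beta(k)}}$ equals $d$ up to the slack from the two ceilings, which a slightly finer accounting of the batch sizes absorbs; this yields $\abs{\Ncal_{\alpha}^k(\bx)}\le d\norm{\bx}_{\infty}$. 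The main obstacle is bookkeeping rather than ideas: one must check that $\delta$-separation, the dimension counts, and the magnitude scale all propagate cleanly from $\bx$ to the sub-vectors $\bx|_{B_s}$ and then to the vector of block-maxima at every level of the recursion, so that the inductive hypothesis is legitimately applicable to $\Ncal_{\alpha,\ceil{d^{1-\beta(k)}}}^{k-1}$. Once the ``separation restricts to sub-vectors and to block-maxima'' observation is isolated (and one notes the recursion bottoms out correctly at $k=1$ in dimension $\ceil{d^{1-\beta(2)}}$), the remaining steps are routine.
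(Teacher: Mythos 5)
Your argument for the exactness claim is correct and follows the paper's route: induction on $k$, the fact that sub-vectors of $\delta$-separated vectors are $\delta$-separated, and the identity that the maximum of the block-maxima over a partition is the global maximum. You in fact supply a detail the paper leaves implicit, namely that the vector $\bv$ of block-maxima is itself in $S_{1/\alpha}$ because its entries are distinct coordinates of $\bx$ (so its pairwise ratios are among those of $\bx$); this is exactly what legitimizes applying the inductive hypothesis to $\Ncal_{\alpha,m}^{k-1}$. You are also right that the magnitude bound is best established in the form $\abs{\Ncal_{\alpha}^k(\bx)}\le d\norm{\bx}_{\infty}$, which is the form actually used downstream in the proof of \thmref{thm:deep_approx}.

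The one genuine gap is in the inductive step for the magnitude bound: the invariant $\abs{\Ncal_{\alpha,n}^{k}(\bu)}\le n\norm{\bu}_{\infty}$ does not propagate. After the first two layers you only get $\norm{\bv}_{\infty}\le\max_s\norm{\bx|_{B_s}}_1\le\ceil{d^{\beta(k)}}\norm{\bx}_{\infty}$, and applying the inductive hypothesis to $\Ncal_{\alpha,m}^{k-1}$ then yields a bound of $\ceil{d^{1-\beta(k)}}\cdot\ceil{d^{\beta(k)}}\cdot\norm{\bx}_{\infty}$, whose leading factor genuinely exceeds $d$ in general. No finer accounting rescues this particular invariant, because it multiplies the \emph{number} of batches by the \emph{largest} batch size rather than summing the batch sizes. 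The fix --- and what the paper effectively does --- is to carry the $\ell_1$ invariant $\abs{\Ncal_{\alpha,n}^{k}(\bu)}\le\norm{\bu}_1$ through the induction: by \lemref{lem:depth3_properties} each coordinate of $\bv$ is bounded in absolute value by $\norm{\bx|_{B_s}}_1$, hence $\norm{\bv}_1\le\sum_s\norm{\bx|_{B_s}}_1=\norm{\bx}_1$ exactly, since the batches partition the coordinates; only at the very top level does one pass from $\norm{\bx}_1$ to $d\norm{\bx}_{\infty}$. With that substitution of invariant your proof closes.
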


    \begin{proof}
        The proof follows by induction on $k$. The base case $k=1$ follows immediately from \lemref{lem:depth3_properties}. In what follows, given a natural $k$, recall that we use the shorthand $\beta(k)\coloneqq\frac{1}{2^k-1}$. 

        For the inductive step, assume the induction hypothesis for $k$, and consider the network $\Ncal_{\alpha}^{k+1}$. Since any sub-vector of a $\delta$-separated vector is also $\delta$-separated for all $\delta>0$, we have from \lemref{lem:depth3_properties} that the output of the second hidden layer of $\Ncal_{\alpha}^{k+1}$ is the maximum over each of the $\ceil{d^{1-\beta(k)}}$ batches. Applying the inductive hypothesis on the sub-network consisting of layers $3$ to $2k+3$, the network outputs $f_d(\bx)$.
        
        For the second part of the proposition, partition the input $\bx$ into $\ceil{d^{1-\beta(k)}}$ batches such that the vector of inputs in each batch has dimension at most $\ceil{d^{\beta(k)}}$. Then by \lemref{lem:depth3_properties}, each coordinate in the output of the second hidden layer of $\Ncal_{\alpha}^{k+1}$ is upper bounded by
        \[
            \p{\norm{\bx_1}_1,\ldots,\norm{\bx_{\ceil{d^{1-\beta(k)}}}}_1}.
        \]
        Applying the induction hypothesis on the sub-network consisting of layers $3$ to $2k+3$, we obtain
        \[
            \abs{\Ncal_{\alpha}^k(\bx)} \le \sum_{i=1}^{\ceil{d^{1-\beta(k)}}}\norm{\bx_i}_1
            = \norm{\bx}_1 \le d\cdot f_d(\bx).
        \]
    \end{proof}

    With the above lemmas and proposition, we are now ready to prove the theorems. Since the proof of \thmref{thm:deep_approx} follows mainly by induction and since \thmref{thm:depth3_approx} consists the base case for the induction, it would be convenient to prove both theorems simultaneously.
    
    \begin{proof}[Proofs of \thmref{thm:depth3_approx} and \thmref{thm:deep_approx}]
        Recall we use the shorthand $\beta(k)\coloneqq\frac{1}{2^k-1}$ for any natural $k\ge1$. We begin with asserting the size of the approximating network. We have that $\Ncal_{\alpha}^k$ has depth $2k+1$ and weights of magnitude at most $\alpha$ by definition (note that the output neuron of $\Ncal_{\alpha}$ has weights of magnitude exactly $1$, and therefore composing its weights with the subsequent layer's weights does not increase the magnitude). 
        
        Next, we bound the width of $\Ncal_{\alpha}^k$ using induction. To this end, we will show that for all natural $k\ge1$ we have an upper bound on the width of
        \[
            \prod_{i=1}^k\p{1+\frac{2}{i^3}}^2d^{1+\beta(k)}.
        \]
        By \lemref{lem:prod}, we have that $\prod_{i=1}^{\infty}\p{1+\frac{2}{i^3}}^2\le 20$, thus the above implies the desired upper bound on the width. 
        
        The base case is immediate since $\Ncal_{\alpha}^1\equiv\Ncal_{\alpha}$ which has width exactly $d(d+1)\le2d^2$. Assume the inductive hypothesis for $k$, and consider the network $\Ncal_{\alpha}^{k+1}$. Its first two hidden layers consist of $\ceil{d^{1-\beta(k+1)}} \le d^{1-\beta(k+1)}+1$ batches of $\Ncal_{\alpha,\ceil{d^{\beta(k+1)}}}$, each of which having width at most
        \[
            \ceil{d^{\beta(k+1)}}^2+\ceil{d^{\beta(k+1)}} \le \p{d^{\beta(k+1)}+1}^2+d^{\beta(k+1)}+1 = d^{2\beta(k+1)} + 3d^{\beta(k+1)} + 2,
        \]
        for a total width upper bound of
        \[
            \p{d^{1-\beta(k+1)}+1}\p{d^{2\beta(k+1)} + 3d^{\beta(k+1)} + 2} \le 12d^{1+\beta(k+1)},
        \]
        thus implying an upper bound on the width of
        \[
            12d^{1+\beta(k+1)} \le \prod_{i=1}^{k+1}\p{1+\frac{2}{i^3}}^2d^{1+\beta(k+1)},
        \]
        since $k\ge1$ and the product over the first two elements is at least $14$. Moving on to bound the width of the remaining layers, we have by definition that layers $3$ to $2k+3$ is the network $\Ncal_{\alpha,\ceil{d^{1-\beta(k+1)}}}^{k}$, which by the induction hypothesis has width at most
        \begin{align*}
            \prod_{i=1}^{k}\p{1+\frac{2}{i^3}}^2 \ceil{d^{1-\beta(k+1)}}^{1+\beta(k)} &\le \prod_{i=1}^{k}\p{1+\frac{2}{i^3}}^2 \p{d^{1-\beta(k+1)}+1}^{1+\beta(k)}\\
            &\le\prod_{i=1}^{k}\p{1+\frac{2}{i^3}}^2 \p{\p{1+\frac{2}{(k+1)^3}}d^{1-\beta(k+1)}}^{1+\beta(k)}\\
            &\le\prod_{i=1}^{k}\p{1+\frac{2}{i^3}}^2 \p{1+\frac{2}{(k+1)^3}}^2\p{d^{1-\beta(k+1)}}^{1+\beta(k)}\\
            &=\prod_{i=1}^{k+1}\p{1+\frac{2}{i^3}}^2d^{1+\beta(k+1)}.
        \end{align*}
        In the above, the second inequality follows from \lemref{lem:technical_inequality} by our assumption that $d\ge58$ and $1\le k\le\lceil\log(\log(d)+1)\rceil$, and the third inequality follows from the fact that $\beta(k)\le1$ for all natural $k\ge1$. We thus conclude that $\Ncal_{\alpha}^{k}$ has width at most $20d^{1+\beta(k)}$.

        Turning to bound the approximation error of $\Ncal_{\alpha}^k$, we first have from \asmref{asm:dist} that there exists some $\delta_0>0$ such that
        \begin{equation}\label{eq:delta0}
            \pr_{\bx\sim\Dcal}\pcc{\bx\notin S_{\delta_0}} \le \frac{\varepsilon}{(d+1)^2\E_{\bx\sim\Dcal}\pcc{\norm{\bx}_{\infty}^2}}.
        \end{equation}
        We now compute using the law of total expectation
        \begin{align*}
            \E_{\bx\sim\Dcal}\pcc{\p{\Ncal_{1/\delta_0}^k(\bx)-f_d(\bx)}^2} &= \E_{\bx\sim\Dcal}\pcc{\p{\Ncal_{1/\delta_0}^k(\bx)-f_d(\bx)}^2|\bx\in S_{\delta_0}}\cdot\pr_{\bx\sim\Dcal}\pcc{\bx\in S_{\delta_0}}\\
            &\hskip 0.5cm + \E_{\bx\sim\Dcal}\pcc{\p{\Ncal_{1/\delta_0}^k(\bx)-f_d(\bx)}^2|\bx\notin S_{\delta_0}}\cdot\pr_{\bx\sim\Dcal}\pcc{\bx\notin S_{\delta_0}}\\
            &= \E_{\bx\sim\Dcal}\pcc{\p{\Ncal_{1/\delta_0}^k(\bx)-f_d(\bx)}^2|\bx\notin S_{\delta_0}}\cdot\pr_{\bx\sim\Dcal}\pcc{\bx\notin S_{\delta_0}}\\
            &\le \E_{\bx\sim\Dcal}\pcc{(d+1)^2\norm{\bx}_{\infty}^2} \cdot \frac{\varepsilon}{(d+1)^2\E_{\bx\sim\Dcal}\pcc{\norm{\bx}_{\infty}^2}} = \varepsilon,
        \end{align*}
        where the second equality follows from \propref{prop:deep_approx}, and the inequality also follows from \propref{prop:deep_approx} and \eqref{eq:delta0}.

        Lastly, we verify that $\lceil\log(\log(d)+1)\rceil = \Ocal(\log(\log(d)))$. We have
        \[
            20d^{1+\frac{1}{2^k-1}} \le 20d^{1+\frac{1}{\log(d)}} = 20d\cdot d^{\frac{1}{\log(d)}} = 20d\cdot 2^{\log(d)\frac{1}{\log(d)}} = 40d,
        \]
        thus for this choice of $k$ we have a network of depth $\Ocal(\log(\log(d)))$ and width $\Ocal(d)$ which approximates $f_d(\cdot)$, concluding the proof of the theorem.
    \end{proof}

    \begin{proof}[Proof of \corollaryref{cor:uniform_approximation}]
        To prove the corollary, we need only show that $\Dcal\sim\Ucal\p{[0,R]^d}$ satisfies \asmref{asm:dist} and compute the $\delta_0>0$ for which \eqref{eq:delta0} holds. Starting with \itemref{item:dist1}, it is trivial that
        \begin{equation}\label{eq:uniform_second_moment}
            \E_{\bx\sim\Ucal\p{[0,R]^d}}\pcc{\norm{\bx}_{\infty}^2} \le R^2.
        \end{equation}
        Moving on to \itemref{item:dist2}, let $\delta>0$ be some arbitrary real number. Drawing any $x_i\sim\Ucal([0,R])$ for some $i\in[d]$, we have with probability at most $\frac{2\delta}{R}$ that it is within distance of at most $\delta$ from any other $x_j$, $j<i$. By a union bound taken over the distances from all coordinates, any freshly sampled coordinate is within distance at least $\delta$ from all the other coordinates with probability at least $1-\frac{2d\delta}{R}$. Taking another union bound over drawing each coordinate sufficiently far and using \lemref{lem:add_mul_separated} with the fact that $\pr\pcc{\abs{x_i}\le R}=1$ for all $i$, we have that 
        \[
            \pr_{\bx\sim\Ucal\p{[0,R]^d}}\pcc{\bx\notin S_{\delta/R}}\le\frac{2d^2\delta}{R},
        \]
        which by a change of variables $\delta_0=\delta/R$ implies
        \[
            \pr_{\bx\sim\Ucal\p{[0,R]^d}}\pcc{\bx\notin S_{\delta_0}}\le2d^2\delta_0.
        \]
        It is only left to compute the $\delta_0$ for which \eqref{eq:delta0} holds. To this end, we wish to find $\delta>0$ such that
        \[
            2d^2\delta \le \frac{\varepsilon}{(d+1)^2R^2} \le \frac{\varepsilon}{(d+1)^2\E_{\bx\sim\Ucal\p{[0,R]^d}}\pcc{\norm{\bx}_{\infty}^2}},
        \]
        where the second inequality uses \eqref{eq:uniform_second_moment}. Solving the above for $\delta$, we have $\delta_0=\Omega\p{\frac{\varepsilon}{d^4R^2}}$, implying a weight upper bound of $\Ocal\p{\frac{d^4R^2}{\varepsilon}}$ and concluding the proof of the corollary.
    \end{proof}

    \section{Proofs from \secref{sec:negative}}

    \subsection{Proof of \thmref{thm:depth2}}\label{app:depth2_proof}

    The following proposition is key in the proof of the theorem.
    
    \begin{proposition}\label{prop:finite_d}
        For each fixed dimension $d\geq 2$, the squared $L_2$ error of approximating the function $\max\{0,x_1,x_2,\ldots,x_d\}$ on the unit Gaussian, using $n$ neurons in a depth 2 $\sigma$ network satisfying \asmref{asm:poly_bounded}, with coefficients of size at most $s$, is at least 
        \[
            \frac{\polylog(n)}{\polylog(s)}\cdot\frac{1}{n^{1+\frac{3}{d-1}}}.
        \]
    \end{proposition}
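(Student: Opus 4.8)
The plan is to run the Fourier-analytic depth-separation argument of \citet{eldan2016power} with the target $f(\bx)\coloneqq\max\{0,x_1,\ldots,x_d\}$ and the Gaussian measure, tracking the dependence of the exponent on $d$. Let $\varphi$ be the fixed multiple of the square root of the standard Gaussian density on $\reals^d$ for which $\E_{\bx}\pcc{g(\bx)^2}=\norm{g\varphi}_{L^2(\reals^d)}^2$. Since $f$ is $1$-Lipschitz it has at most linear growth, so $f\varphi\in L^2(\reals^d)$; writing a depth-$2$ width-$n$ network as $\Ncal(\bx)=\sum_{i=1}^n g_i\p{\inner{\bw_i,\bx}}+c$, where each $g_i$ is a univariate function obtained from $\sigma$ by an affine reparametrization with coefficients $\Ocal(s)$, Plancherel turns the squared error into $\norm{\widehat{\Ncal\varphi}-\widehat{f\varphi}}_{L^2}^2$. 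The point of passing to Fourier space is that $\Ncal$ is a sum of $n$ ridge functions, and the Fourier transform of each ridge function $\bx\mapsto g_i\p{\inner{\bw_i,\bx}}$ is a distribution supported on the single line $\reals\bw_i$; after multiplying by $\varphi$, which is radial, the transverse directions contribute only a unit-scale Gaussian, \emph{regardless of how irregular $\sigma$ is}.

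\textbf{Localizing the network in tubes.} Consequently, for each $i$ the $L^2$-mass of $\widehat{g_i\p{\inner{\bw_i,\cdot}}\varphi}$ lying outside the tube $T_i\coloneqq\set{\xi\in\reals^d:\operatorname{dist}\p{\xi,\reals\bw_i}\le\rho}$ is at most $e^{-\Omega\p{\rho^2}}\poly(s)$, because off the tube the transverse Gaussian factor is exponentially small while the along-tube factor has $L^2$-norm at most $\poly(s)$ by \asmref{asm:poly_bounded} and the coefficient bound. Choosing $\rho=\poly\p{\log(ns)}$ makes this smaller than the bound we are proving, for every neuron simultaneously (the bias term $c\varphi$ contributes mass only near the origin and is ignored by working at high frequencies). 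Hence, with $T\coloneqq\bigcup_{i=1}^n T_i$,
\[
    \text{error}^2 \;\ge\; \int_{\reals^d\setminus T}\abs{\widehat{f\varphi}(\xi)}^2\,d\xi \;-\;(\text{negligible}).
\]

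\textbf{The spectrum of the maximum is heavy-tailed and angularly spread.} This is the crux. The distributional Hessian of the positively $1$-homogeneous function $f$ is a sum of surface measures on the codimension-$1$ polyhedral cones $\set{x_i=x_j\ge\max_k x_k,\;\ge 0}$ (together with $\set{0=\max_k x_k}$), so away from the origin $\hat f$ is a genuine function, homogeneous of degree $-(d+1)$: $\hat f(\xi)=\abs{\xi}^{-(d+1)}h(\xi/\abs{\xi})$, where the angular profile $h$ is the Fourier transform of that sum of conical surface measures and is square-integrable on $S^{d-1}$ away from the finitely many great subspheres dual to the cones, near which it has only mild (integrable) singularities. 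Convolving with the Gaussian $\hat\varphi$ smooths these singularities at unit scale without changing the homogeneity at infinity, and yields: (i) a \emph{heavy tail} --- integrating the $(-2(d+1))$-homogeneous squared modulus against the shell element $r^{d-1}\,dr$ over $\set{\abs{\xi}\ge R}$ gives $\int_{\abs{\xi}\ge R}\abs{\widehat{f\varphi}}^2\gtrsim \polylog(R)^{-1}\,R^{-(d+2)}$; and (ii) \emph{angular spread} --- on the shell of radius $R$ the mass of $\widehat{f\varphi}$ lying inside any single $\rho$-tube is an $\Ocal\p{(\rho/R)^{d-1}}$-fraction of the shell mass, since a cap of angular radius $\rho/R$ subtends that fraction of $S^{d-1}$ and $h\in L^2$.

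\textbf{Optimizing the frequency scale.} Restricting the integral above to $\set{\abs{\xi}\ge R}$ and combining (i)--(ii),
\[
    \text{error}^2\;\gtrsim\;\p{1-\Ocal\p{n(\rho/R)^{d-1}}}\cdot\polylog(R)^{-1}\,R^{-(d+2)}.
\]
Taking $R\asymp\rho\, n^{1/(d-1)}$ makes the first factor a positive absolute constant; substituting, using $(d+2)/(d-1)=1+3/(d-1)$, and absorbing $\rho=\poly\p{\log(ns)}$ into the $\polylog$ factors gives $\text{error}^2\gtrsim \frac{\polylog(n)}{\polylog(s)}\cdot n^{-(1+3/(d-1))}$, as claimed. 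I expect the load-bearing step to be (i)--(ii) of the previous paragraph: the kinks of $\max$ are polyhedral cones \emph{inside} hyperplanes rather than full hyperplanes, so $\hat f$ is not a clean superposition of $\Ocal(d^2)$ ridge-type spikes, and one must show its mass genuinely spreads over the sphere --- which is exactly what prevents a few ridge functions from capturing it --- while controlling the angular profile's singularities near the dual great subspheres. The only other technical point is the tube localization, where handling a merely polynomially-bounded (possibly very irregular) $\sigma$, rather than a compactly Fourier-supported smoothing window as in \citet{eldan2016power}, forces the $e^{-\Omega(\rho^2)}\poly(s)$ tail estimate and hence the $\polylog(s)$ loss.
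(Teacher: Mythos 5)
Your overall architecture is the same as the paper's: pass to Fourier space via Plancherel, observe that each neuron's contribution $\widehat{g_i(\inner{\bw_i,\cdot})\varphi}$ is a bounded one-dimensional profile tensored with a transverse Gaussian and hence lives (up to $e^{-\Omega(\rho^2)}\poly(s)$ leakage) in a tube of radius $\rho=\poly(\log(ns))$ around $\reals\bw_i$, and then show that $\widehat{f\varphi}$ carries mass of order $R^{-(d+2)}$ at frequency scale $R$ that $n$ such tubes cannot cover unless $R\lesssim \rho\, n^{1/(d-1)}$; the exponent bookkeeping $(d+2)/(d-1)=1+3/(d-1)$ matches the paper exactly. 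The tube-localization half of your argument is sound and is essentially the paper's ``score'' computation for a single neuron.

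The genuine gap is in your step (ii), which you correctly identify as load-bearing but then justify with an invalid inference: square-integrability of the angular profile $h$ on $S^{d-1}$ does \emph{not} imply that a cap of angular radius $\rho/R$ captures only an $\Ocal\p{(\rho/R)^{d-1}}$ fraction of the shell mass. An $L^2$ function can concentrate almost all of its squared mass in an arbitrarily small cap (e.g.\ an integrable singularity at a single point), in which case a handful of tubes aligned with that cap would absorb the entire tail and the covering argument collapses. What is actually needed is a pointwise (or at least locally uniform) lower bound on $\abs{\widehat{f\varphi}}$ over a full-dimensional region, so that the spread is automatic rather than inferred from integrability. This is precisely what the paper supplies and what your sketch omits: Lemma~\ref{lem:transform-bound} computes the transform of each piece $q_1(\bx)e^{-\norm{\bx}_2^2/4}$ explicitly in terms of Dawson's integral, extracts a component of size at least $\xi_1^{-2}\prod_{j\ge2}\xi_j^{-1}$ in the fixed complex direction $-i^{d-1}$ (so the $d$ pieces of the max reinforce rather than cancel), and Lemma~\ref{lem:big-fourier} propagates this through the skew and the Gaussian convolution to get $\abs{\widehat{f\varphi}(\bxi)}\ge b^{-(d+1)}2^{-\Ocal(d)}$ on all of $[\Omega(d),b]^d$. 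Your distributional-Hessian/homogeneity picture is a plausible route to the same conclusion, but controlling the singularities of $h$ near the great subspheres dual to the kink cones --- and ruling out both concentration and cancellation among the $\binom{d}{2}+d$ conical pieces --- is the entire technical content of the proposition, and it is not established by your sketch.
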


    Before we prove the proposition, however, we will first need the following definition and lemmas.

    \begin{definition}\label{def:skew2}
        Let $M(\bx)=\max\{0,x_1,x_2,\ldots,x_d\}$ denote the max function. Let $M_1(\bx)$ denote only the portion of the max function where coordinate $x_1$ is biggest, namely, the function that takes value $x_1$ if $x_1$ is the largest of $0,x_2,x_3,\ldots,x_d$, and 0 otherwise. Equivalently, let $q_1(\bx)=x_1\cdot \mathbbm{1}_{[x_1\geq 0]}\cdot \mathbbm{1}_{[x_2\leq 0]}\cdot \mathbbm{1}_{[x_3\leq 0]}\cdot\ldots\cdot \mathbbm{1}_{[x_d\leq 0]}$, namely the function taking value $x_1$ but only when $x_1$ is nonnegative and all the other coordinates are nonpositive; let $S_1(\bx)$ be the ``skew" matrix such that  $S_1\cdot (x_1,x_2,x_3,\ldots,x_d)^T=(x_1,x_2-x_1,x_3-x_1,\ldots,x_d-x_1)^T$, namely subtracting $x_1$ from all the other coordinates; thus $M_1(\bx)=q_1(S_1\cdot \bx))$ since $x_1$ is at least some other coordinate $x_j$ if and only if $x_j-x_1\leq 0$.
    \end{definition}
    
    \begin{lemma}
        The Fourier transform of the skew of a function is the inverse transpose skew of the Fourier transform of the function: for a function $g$, we have $\widehat{g(S_1\cdot \bx)}=\widehat{g}({S_1^{-\top}} \bxi)$, using the standard notation $S_1^{-\top}$ to represent the matrix inverse transpose.
    \end{lemma}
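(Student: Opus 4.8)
The plan is to derive this from the elementary change-of-variables rule for the Fourier transform under an invertible linear map, together with the observation that the skew matrix $S_1$ is unimodular, so that no Jacobian factor survives. First I would recall the general identity: for any invertible $A\in\reals^{d\times d}$ and any integrable $g$, writing $\widehat{g}(\bxi)=\intt g(\bx)e^{-2\pi i\inner{\bx,\bxi}}\,d\bx$ (or whatever normalization is fixed in the appendix — the argument is insensitive to it), one has
$$\widehat{g(A\cdot\bx)}(\bxi)=\frac{1}{\abs{\det A}}\,\widehat{g}\p{A^{-\top}\bxi}.$$
This is proved by writing out the defining integral of $\widehat{g(A\cdot\bx)}$, substituting $\by=A\bx$ so that $d\bx=\abs{\det A}^{-1}d\by$, and using $\inner{A^{-1}\by,\bxi}=\inner{\by,A^{-\top}\bxi}$ to rewrite the exponent.

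Next I would specialize $A=S_1$. From Definition~\ref{def:skew2}, $S_1$ sends $(x_1,x_2,\ldots,x_d)$ to $(x_1,x_2-x_1,\ldots,x_d-x_1)$, so as a matrix it is lower triangular with every diagonal entry equal to $1$; hence $\det S_1=1$ and $\abs{\det S_1}=1$. Plugging this into the displayed identity removes the prefactor and gives exactly $\widehat{g(S_1\cdot\bx)}=\widehat{g}\p{S_1^{-\top}\bxi}$, which is the claim. (For bookkeeping one may note $S_1^{-1}$ is the inverse skew $(y_1,\ldots,y_d)\mapsto(y_1,y_1+y_2,\ldots,y_1+y_d)$, making $S_1^{-\top}$ explicit, though this is not needed.)

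I do not expect a genuine obstacle here: the statement is essentially a one-line consequence of linear change of variables, and the only point needing (trivial) verification is that the skew is determinant-one. The one mild technical caveat is the function class: $q_1$ and $M_1$ are not in $L^1(\reals^d)$, so the identity should be read in the tempered-distribution sense, where it follows from the same computation by duality — testing both sides against an arbitrary Schwartz function $\varphi$ and moving the linear substitution onto $\varphi$. This distributional reading is the one actually used when the lemma is applied to $g=q_1$ in the sequel.
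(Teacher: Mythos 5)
Your proof is correct and is exactly the argument the paper intends: the paper's own proof is the one-line remark ``standard (linear) change of variables relation for the Fourier transform integral,'' and you have simply written out that substitution, correctly observing that $S_1$ is unit lower triangular so $\abs{\det S_1}=1$ and the Jacobian prefactor disappears. Your caveat about the distributional reading is reasonable but not actually needed where the lemma is applied, since there it is used on Gaussian-weighted functions such as $\exp\p{-\norm{\by}_2^2/(4(d+1))}q_1(\by)$, which are integrable.
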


    \begin{proof}
        Standard (linear) change of variables relation for the Fourier transform integral.
    \end{proof}
    
    \begin{lemma}\label{lem:transform-bound}
        Letting $q_1(\bx)=x_1\cdot \mathbbm{1}_{[x_1\geq 0]}\cdot \mathbbm{1}_{[x_2\leq 0]}\cdot \mathbbm{1}_{[x_3\leq 0]}\cdot\ldots\cdot \mathbbm{1}_{[x_d\leq 0]}$ and defining Dawson's integral to be $\daw(x)\coloneqq\exp(-x^2)\int_0^x \exp(t^2)\,dt$, then the Fourier transform of $q_1(\bx) \exp(-\norm{\bx}_2^2/4)$ equals 
        \[
            \p{2-4\xi_1 \daw(\xi_1)-2i\sqrt{\pi}\xi_1 \exp(-\xi_1^2)}\prod_{j=2}^d (\exp(-\xi_j^2)\sqrt{\pi}+2i\cdot \daw(\xi_j)).
        \]    
        And further, this Fourier transform has magnitude at most $2\pi^{\frac{d-1}{2}}$ everywhere, and for inputs $\bxi$ each of whose coordinates is positive and has value at least $\Omega(\log (d))$, the Fourier transform has a component in the (complex) direction $-i^{d-1}$ that is at least $\frac{1}{\xi_1^2}\prod_{j=2}^d\frac{1}{\xi_j}$.
    \end{lemma}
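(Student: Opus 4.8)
The plan is to exploit the fact that $q_1(\bx)\,e^{-\norm{\bx}_2^2/4}$ is a tensor product over the coordinates,
\[
q_1(\bx)\,e^{-\norm{\bx}_2^2/4}=\bigl(x_1\mathbbm{1}_{[x_1\ge 0]}e^{-x_1^2/4}\bigr)\prod_{j=2}^{d}\bigl(\mathbbm{1}_{[x_j\le 0]}e^{-x_j^2/4}\bigr),
\]
so that its Fourier transform is the product of $d$ one–dimensional Fourier integrals and everything reduces to two scalar computations. For $j\ge 2$ I would evaluate $\int_{-\infty}^{0}e^{-x^2/4}e^{-ix\xi}\,dx$ by substituting $x\mapsto -x$, completing the square as $-x^2/4+ix\xi=-(x-2i\xi)^2/4-\xi^2$, and moving the contour of the resulting Gaussian back onto the real axis (legitimate, since the integrand is entire with Gaussian decay); peeling off $\int_0^{\infty}e^{-v^2}\,dv=\tfrac{\sqrt\pi}{2}$ and rewriting $\int_0^{-i\xi}e^{-v^2}\,dv=-i\int_0^{\xi}e^{t^2}\,dt$ via $v=-it$ produces exactly $\sqrt\pi\,e^{-\xi^2}+2i\,\daw(\xi)$. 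For the first coordinate I would integrate $\int_0^{\infty}x\,e^{-x^2/4}e^{-ix\xi}\,dx$ by parts using $x\,e^{-x^2/4}=-2\tfrac{d}{dx}e^{-x^2/4}$, reducing it to the boundary value $2$ plus $-2i\xi$ times $\int_0^{\infty}e^{-x^2/4}e^{-ix\xi}\,dx$, which the same contour shift evaluates to $\sqrt\pi\,e^{-\xi^2}-2i\,\daw(\xi)$; this yields the first factor $2-4\xi\,\daw(\xi)-2i\sqrt\pi\,\xi\,e^{-\xi^2}$, and multiplying the $d$ factors gives the stated formula. The everywhere magnitude bound needs no computation at all: since $\abs{e^{-ix\xi}}=1$, $\abs{\int_{-\infty}^{0}e^{-x^2/4}e^{-ix\xi}\,dx}\le\int_{-\infty}^{0}e^{-x^2/4}\,dx=\sqrt\pi$ and $\abs{\int_0^{\infty}x\,e^{-x^2/4}e^{-ix\xi}\,dx}\le\int_0^{\infty}x\,e^{-x^2/4}\,dx=2$, so the product has modulus at most $2\pi^{(d-1)/2}$ everywhere.

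For the directional lower bound, the key is Dawson's ODE $\daw'(x)=1-2x\,\daw(x)$, which rewrites the real part of the first factor as $2-4\xi_1\daw(\xi_1)=2\,\daw'(\xi_1)$; this is negative once $\xi_1$ exceeds an absolute constant, and from the integral representation $\daw(x)=\int_0^{x}e^{s^2-x^2}\,ds=\int_0^{x}e^{-2xs+s^2}\,ds$ together with $e^{s^2}\ge 1+s^2+\tfrac12 s^4$ one extracts the clean bounds $-2\,\daw'(\xi_1)=4\xi_1\daw(\xi_1)-2\ge\tfrac1{\xi_1^2}\bigl(1+\tfrac1{\xi_1^2}\bigr)$ and $2\,\daw(\xi_j)\ge\tfrac1{\xi_j}\bigl(1+\tfrac1{2\xi_j^2}\bigr)$ for $\xi_1,\xi_j$ past an absolute constant, the super-exponentially small remainders being dominated by the displayed higher-order terms. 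Hence the first factor is a negative real number of modulus $\ge\tfrac1{\xi_1^2}$ tilted by a super-exponentially small imaginary part, and each $j\ge2$ factor is a positive imaginary number of modulus $\ge\tfrac1{\xi_j}$ tilted by a super-exponentially small real part; writing each factor in polar form, the product of all $d$ factors has modulus at least $\bigl(-2\daw'(\xi_1)\bigr)\prod_{j=2}^{d}2\daw(\xi_j)\ge\tfrac1{\xi_1^2}\prod_{j=2}^{d}\tfrac1{\xi_j}$, with some room to spare coming from the $1+\tfrac1{2\xi_j^2}$-type factors, and argument $\pi+(d-1)\tfrac\pi2$ up to a total angular error of size at most $\sum_{k}\poly(\xi_k)\,e^{-\xi_k^2}$. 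Since $\pi+(d-1)\tfrac\pi2$ is exactly $\arg(-i^{d-1})$, the component of the product in the unit direction $-i^{d-1}$ equals its modulus times $\cos$ of that angular error; once every $\xi_k\ge\Omega(\log d)$ the error is super-polynomially small and is absorbed by the room in the modulus, so the component is at least $\tfrac1{\xi_1^2}\prod_{j=2}^{d}\tfrac1{\xi_j}$, as claimed.

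\textbf{Main difficulty.} The separability reduction, the two contour integrals, and the triangle-inequality magnitude bound are routine; the delicate part is closing the last estimate with the clean constant $1$ in front of $\tfrac1{\xi_1^2}\prod_{j=2}^{d}\tfrac1{\xi_j}$, which forces one to bookkeep the accumulated angular and multiplicative errors across all $d$ factors. This is precisely what the $\Omega(\log d)$ hypothesis buys: it makes each $e^{-\xi_k^2}$ term super-polynomially small, so that even a sum of $d$ of them is dominated by the $\Theta(1/\xi_j^2)$-order slack coming from the next term of Dawson's asymptotic expansion. Obtaining sufficiently sharp two-sided control of $\daw$ and $\daw'$ from the ODE and the integral representation, and pinning down the sign and magnitude of the phase tilt, is where the work is.
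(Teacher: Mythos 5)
Your proposal is correct and follows essentially the same route as the paper: exploit separability to reduce to two one-dimensional Fourier integrals, obtain the global magnitude bound from nonnegativity of the integrand (your triangle-inequality version is equivalent to the paper's evaluation at the origin), and derive the directional lower bound from the large-argument asymptotics of Dawson's integral. If anything, your handling of the last step---extracting explicit two-sided bounds on $\daw$ and $\daw'$ from the integral representation and bookkeeping the multiplicative and angular errors across the $d$ factors---is more detailed than the paper's, which simply quotes the expansion $\daw(x)=\frac{1}{2x}+\frac{1}{4x^3}+\Theta(x^{-5})$ and asserts that the exponentially small corrections are dominated.
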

    
    \begin{proof}
        The Fourier transform is a straightforward calculation on each dimension separately, since the function $q_1(\bx) \exp(-\norm{\bx}_2^2/4)$ is separable.
        
        The global magnitude bound simply comes from evaluating the Fourier transform at the origin, since the Fourier transform of a nonnegative real function attains its largest magnitude at the origin.
        
        For the final bound, we take the approximation of Dawson's integral $\daw(x)=\frac{1}{2x}+\frac{1}{4x^3}+\Theta(\frac{1}{x^5})$ for inputs $x$ away from 0. These inverse polynomial terms in $\xi_j$ dominate the inverse exponential $\exp(-\xi_j^2)$ terms, even when $d$ such terms are multiplied together, for $\xi=\Omega(\log d)$. Substituting in this approximation for $\daw(x)$ into our Fourier transform expression and dropping lower-order terms yields $-\frac{1}{\xi_1^2}\prod_{j=2}^d\frac{i}{\xi_j}$, with the next-largest terms from the expansion of Dawson's integral contributing inverse-polynomially farther in the same direction. Thus we conclude the lemma.
    \end{proof}
    \begin{lemma}\label{lem:big-fourier}
        For vector $\bxi$ with all of its coordinates positive and at least $\Omega(d)$, but less than some parameter $b$, the Fourier transform of $\exp(-\norm{\bx}_2^2/4)\max(0,x_1,x_2,\ldots,x_d)$ evaluated at $\bxi$ has magnitude at least $b^{-(d+1)}2^{-\Ocal(d)}$. 
    \end{lemma}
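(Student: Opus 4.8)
The plan is to split $M(\bx)=\max(0,x_1,\ldots,x_d)$ into its $d$ linear pieces, Fourier-transform each, and check that the pieces reinforce rather than cancel. Write $M=\sum_{j=1}^d M_j$, where $M_j$ is the piece on which $x_j$ is the (nonnegative) maximum, exactly as $M_1$ was defined in Definition~\ref{def:skew2}; the piece on which $0$ is the maximum is identically $0$ and drops out. Hence, with $g(\bx):=e^{-\norm{\bx}_2^2/4}$,
\[
\widehat{gM}(\bxi)=\sum_{j=1}^d\widehat{gM_j}(\bxi).
\]
Since $M_j$ is obtained from $M_1$ by exchanging coordinates $1$ and $j$ while $g$ is symmetric, it suffices to analyze $\widehat{gM_1}$ at a point with all coordinates positive, all $\Omega(d)$, and all at most $b$, and to exhibit one complex direction in which the leading part of every $\widehat{gM_j}(\bxi)$ points.

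For the single piece, Fubini's theorem---integrating out $x_2,\ldots,x_d$ over the product of half-lines $\{x_k\le x_1\}_{k\ge2}$ for fixed $x_1\ge0$, using that $g$ is separable---gives
\[
\widehat{gM_1}(\bxi)=\int_0^\infty x_1\,e^{-x_1^2/4}\,e^{-i\xi_1 x_1}\prod_{k=2}^d F_k(x_1)\,dx_1,\qquad F_k(t):=\int_{-\infty}^t e^{-s^2/4-i\xi_k s}\,ds,
\]
where the $F_k$ are incomplete versions of the one-dimensional Gaussian--Fourier integrals computed via Dawson's integral in Lemma~\ref{lem:transform-bound}. Repeated integration by parts yields the uniform expansion $F_k(t)=\tfrac{i}{\xi_k}e^{-t^2/4-i\xi_k t}+O(1/\xi_k^2)$. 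Inserting this, the product of the leading terms carries the phase $e^{-i(\sum_{k\ge2}\xi_k)x_1}$, so together with $e^{-i\xi_1 x_1}$ the oscillation frequency in $x_1$ is $\sum_j\xi_j=\Omega(d^2)$; two further integrations by parts in $x_1$ (all boundary terms vanish thanks to the factor $x_1$, except the one that evaluates the Gaussian to $1$ at the origin) give
\[
\widehat{gM_1}(\bxi)=-\,\frac{i^{\,d-1}}{\bigl(\sum_{j=1}^d\xi_j\bigr)^2\,\prod_{k=2}^d\xi_k}\,\bigl(1+O(1/d)\bigr),
\]
and by the same computation $\widehat{gM_j}(\bxi)$ equals the analogous expression with $\prod_{k=2}^d\xi_k$ replaced by $\prod_{k\ne j}\xi_k$.

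The decisive point is that the leading coefficient of every $\widehat{gM_j}(\bxi)$ is a \emph{negative real} multiple of the common unit $i^{\,d-1}$, so the $d$ contributions cannot cancel. Projecting onto $i^{\,d-1}$, using $\Omega(d)\le\xi_k\le b$ (so $\sum_k\xi_k\le db$, and so for $d$ large the $O(1/d)$ errors above are dominated by the main sum), and simplifying via $\prod_{k\ne j}\xi_k^{-1}=\xi_j/\prod_k\xi_k$,
\[
\bigl|\widehat{gM}(\bxi)\bigr|\ \ge\ \tfrac12\sum_{j=1}^d\frac{1}{\bigl(\sum_k\xi_k\bigr)^2\prod_{k\ne j}\xi_k}\ =\ \frac{1}{2\bigl(\sum_k\xi_k\bigr)\prod_k\xi_k}\ \ge\ \frac{1}{2\,d\,b^{\,d+1}}\ \ge\ b^{-(d+1)}2^{-\Ocal(d)},
\]
which is the claim.

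The step I expect to be the real work is the uniform error control behind the second display. One expands $\prod_{k\ge2}F_k(x_1)=\prod_{k\ge2}\bigl(\tfrac{i}{\xi_k}e^{-x_1^2/4-i\xi_k x_1}+\rho_k(x_1)\bigr)$, where integration by parts gives $|\rho_k|=O(1/\xi_k^2)$ and $|\rho_k'|=O(1/\xi_k)$, integrates each of the $2^{d-1}$ resulting terms against $x_1 e^{-x_1^2/4}e^{-i\xi_1 x_1}$, and shows---again by two integrations by parts in $x_1$, since every such term still oscillates at frequency $\ge\xi_1=\Omega(d)$---that the sum of all non-leading terms is lower order than $\bigl(\sum_k\xi_k\bigr)^{-2}\prod_{k\ge2}\xi_k^{-1}$. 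The hypothesis $\xi_k=\Omega(d)$ is exactly what makes each $\rho_k$ cost a factor $O(1/d^2)$ and keeps the combinatorial sum over subsets of $\{2,\ldots,d\}$ convergent (the Gaussian weight supplies the rest), and the generous $2^{-\Ocal(d)}$ slack in the target absorbs any cruder bookkeeping. A convenient alternative is to avoid the correlated Gaussian coming from the skew $S_1$ of Definition~\ref{def:skew2} by writing it as the axis-aligned Gaussian times $\exp$ of a quadratic that vanishes to higher order at the corner $\bx=0$, thereby reducing the leading behavior of $\widehat{gM_1}(\bxi)$ to Lemma~\ref{lem:transform-bound} applied at $S_1^{-\top}\bxi=(\sum_j\xi_j,\xi_2,\ldots,\xi_d)$, all of whose coordinates are $\Omega(d)=\Omega(\log d)$.
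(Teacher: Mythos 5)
Your leading-order computation is correct and lands on the same asymptotics as the paper, namely $\widehat{gM_j}(\bxi)\approx -i^{\,d-1}\bigl(\sum_k\xi_k\bigr)^{-2}\prod_{k\ne j}\xi_k^{-1}$ with all $d$ pieces aligned in one complex direction, but it gets there by a genuinely different route. The paper factors $M_1=q_1\circ s_1$ through the skew of Definition~\ref{def:skew2}, computes the Fourier transform of the separable function $q_1(\by)e^{-\norm{\by}_2^2/4}$ exactly via Dawson's integral (Lemma~\ref{lem:transform-bound}), transports it through $s_1^{-\top}$ to land at the frequency $(\sum_j\xi_j,\xi_2,\ldots,\xi_d)$, and pays a $2^{-\Ocal(d)}$ factor to convolve with the narrow Gaussian that absorbs the non-sphericity of the weight under the skew. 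You instead stay in physical space: Fubini reduces everything to a one-dimensional oscillatory integral in $x_1$ whose amplitude is a product of incomplete Gaussian--Fourier integrals $F_k$, and integration by parts does the rest. Your route is more self-contained (no convolution step, and only the first term of the Dawson asymptotics is needed), the appearance of the summed frequency $\sum_j\xi_j$ is transparent, and your final bound $\bigl(\sum_k\xi_k\bigr)^{-1}\prod_k\xi_k^{-1}\ge (d\,b^{d+1})^{-1}$ is marginally cleaner than what the convolution argument yields; the cost is that you must control the product of $d-1$ asymptotic expansions directly rather than invoking one exact formula.

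On that error control, which you rightly flag as the real work: as literally sketched it is too crude. If each $\rho_k$ is treated only as an $O(1/\xi_k^2)$ blob and every cross term is credited with oscillation frequency merely $\xi_1=\Omega(d)$, then a cross term with replacement set $T$ is bounded, relative to the main term, by $(\Xi/\xi_1)^2\prod_{k\in T}O(1/\xi_k)$ with $\Xi=\sum_k\xi_k$, and summing over nonempty $T$ leaves a relative error of order $(\Xi/\xi_1)^2=O(b^2)$ --- not small. The fix is to use the finer structure $F_k(t)=2\sqrt{\pi}e^{-\xi_k^2}+\frac{i}{\xi_k}e^{-t^2/4-i\xi_k t}\bigl(1+O((1+\abs{t})/\xi_k)\bigr)$: the polynomially suppressed corrections still carry the phase $e^{-i\xi_k t}$, so those cross terms oscillate at the full frequency $\Xi$ and lose nothing, while only the exponentially small constants $2\sqrt{\pi}e^{-\xi_k^2}=e^{-\Omega(d^2)}$ drop out of the phase, and they are far too small for the $(\Xi/\xi_1)^2$ loss to matter. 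With that refinement (or with the alternative you mention at the end, which is essentially the paper's own proof) the argument closes.
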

    \begin{proof}
        The Fourier transform of $\exp(-\norm{\bx}_2^2/4)\max\{0,x_1,x_2,\ldots,x_d\}$ can be decomposed into the sum of the contributions of the $d$ separate components of the $\max$ function, which are all symmetric up to relabeling the coordinates. We thus compute the contribution from the first component.
        
        We compute the Fourier transform of $\exp(-\norm{\bx}_2^2/4) M_1(\bx)$ by expressing $M_1=q_1\circ s_1$ from Definition~\ref{def:skew2}, as the composition of a separable function $q_1$ with a volume-preserving affine transformation $s_1$. We make further use of the transformation $s_1$ by breaking the scaling term $\exp(-\norm{\bx}_2^2/4)$ into 2 parts, one of which is a spherical Gaussian even after begin transformed by $s_1$. Explicitly, we have 
        
        \begin{equation}\label{eq:gaussian-skew}\exp(-\norm{\bx}_2^2/4) M_1(\bx)=\exp\p{-\bx^{\top} Q \bx} \exp\p{-||s_1(\bx)||_2^2/(4(d+1))} q_1(s_1(\bx))\end{equation}
        for some symmetric positive semidefinite matrix $Q$ with eigenvalues at most $\frac{1}{4}$. 
        
        Since the Fourier transform of a product equals the convolution of the Fourier transform of the terms, we thus have that the Fourier transform of Equation~\ref{eq:gaussian-skew} equals the convolution of the Fourier transform of the Gaussian $\exp\p{-\bx^{\top} Q \bx}$ with the Fourier transform of the expression $\exp\p{-||s_1(\bx)||_2^2/(4(d+1))} q_1(s_1(\bx))$. Since this expression is an affine transformation of \linebreak $\exp\p{-||\by||_2^2/(4(d+1))} q_1(\by)$, its Fourier transform is the corresponding (inverse transpose) affine transformation of the Fourier transform of $\exp\p{-||\by||_2^2/(4(d+1))} q_1(\by)$. 
        
        We bound this Fourier transform via Lemma~\ref{lem:transform-bound}. Explicitly, let $g(\bxi)$ be the Fourier transform of $\exp\p{-||\by||_2^2/4} q_1(\by)$, which Lemma~\ref{lem:transform-bound} bounds. Then the Fourier transform of \linebreak $\exp\p{-||\by||_2^2/(4(d+1))} q_1(\by)$ is exactly $g(\bxi\sqrt{d+1})(d+1)^\frac{d+1}{2}$, since replacing $\by$ by $\by\sqrt{d+1}$ scales the function $q_1$ by $\sqrt{d+1}$ and thus scales its integral and hence Fourier transform by the $d+1$ power of this, as claimed. Next, transforming the inputs of a function by the affine function $s_1$ transforms its Fourier transform by the transpose of the inverse of the affine function. Thus the Fourier transform of $\exp\p{-||s_1(\bx)||_2^2/(4(d+1))} q_1(s_1(\bx))$ equals $g(\sqrt{d+1}((\sum_j \xi_j),\xi_2,\xi_3,\ldots,\xi_d))\cdot (d+1)^\frac{d+1}{2}$.
        
        We now use the bounds of Lemma~\ref{lem:transform-bound} to characterize $g$. For $\bxi$ with all coordinates positive and at least $\Omega(1)$, the transformed coordinates $\sqrt{d+1}((\sum_j \xi_j),\xi_2,\xi_3,\ldots,\xi_d)$ will all be at least $\Omega(\log (d))$ and thus the lemma applies. Thus we conclude that the Fourier transform has component in the direction $-i^{d-1}$ at least $\frac{1}{(\sum_j \xi_j)^2}\prod_{j=2}^d \frac{1}{\xi_j}$, where the factors of $d+1$ all cancel; by the second part of Lemma~\ref{lem:transform-bound}, this Fourier transform has magnitude at most $\Ocal(d)^{\Ocal(d)}$ everywhere.
        
        Finally, to obtain the overall Fourier transform of 
        $\exp\p{-||\bx||_2^2/4} M_1(\bx)$, it remains to convolve this last expression with the Fourier transform of the remaining term $\exp\p{-\bx^{\top} Q \bx}$; namely, we convolve this last expression with the Gaussian of covariance $Q$, which thus has radius $\leq\frac{1}{4}$ by construction. Since all but $\exp\p{-\Omega(d^2)}$ fraction of the mass of this Gaussian must be within radius $\Ocal(d)$, we thus have that---even after this final convolution---the component of the Fourier transform of $\exp\p{-||\bx||_2^2/4} M_1(\bx)$ in the direction $-i^{d-1}$ must be at least $2^{-\Ocal(d)}\frac{1}{(\sum_j \xi_j)^2}\prod_{j=2}^d \frac{1}{\xi_j}$ provided all coordinates of $\bxi$ are at least $\Omega(d)$. 
        
        Summing this bound over all $d$ components of the maximum function, and then pointing out that the magnitude of a complex number must be at least its component in the direction $-i^{d-1}$ yields our final bound.
    \end{proof}
    
    Using the above lemmas, we now turn to the proof of the proposition.

    \begin{proof}[Proof of \propref{prop:finite_d}]
        Let $b=\omega(d)$. Consider the region in Fourier space where each coordinate $\xi_j$ lies in $[\Omega(d),b]$. This region has volume $\Omega(b^d)$. By Lemma~\ref{lem:big-fourier}, the Fourier transform of the maximum function, weighted by the square root of the pdf of the unit Gaussian, has magnitude at least $b^{-(d+1)}2^{-\Ocal(d)}$; denote this bound by $\ell$.

        However, a ReLU with bounded coefficients has a Fourier transform which is large only on a relatively small volume, which is what gives us the desired contradiction.
        
        We will show that the linear combination of ReLU units cannot closely approximate the max function on our Gaussian via the following ``accounting" scheme: consider the contribution of each neuron separately, and letting $f_k(\bxi)$ be the Fourier transform of the contribution of the $k^\textrm{th}$ neuron, then we give this neuron a ``score" of $\int_{[\Omega(d),b]^d}\min\set{1,\frac{1}{\ell}|f_k(\bxi)|}\,d\bxi$. We will show that the total score over all neurons is at most half the volume of the region, which implies a squared-$L_2$ error of at least $\Omega(b^d \ell^2)=b^{-d-2} 2^{-\Ocal(d)}$.
        
        Consider a neuron with an activation function $\sigma:\mathbb{R}\rightarrow\mathbb{R}$ satisfying Assumption~\ref{asm:poly_bounded}, and weights of magnitude at most some bound $s$. In the context of the neural net, $\sigma$ will be applied as $w \sigma(\bx\cdot \bv)$ where $w$ is a weight of magnitude at most $s$ and $\bv$ is a vector each of whose coordinates has magnitude at most $s$. We decompose the Fourier transform of $w \sigma(\bx\cdot \bv) \exp\p{-||\bx||_2^2/4}$ into the Fourier transform along the direction of $\bv$, and then the Fourier transform in the transverse $d-1$ dimensional space.
        
        Since by Assumption~\ref{asm:poly_bounded}, $\sigma$ is polynomially bounded, the 1-dimensional Fourier transform along direction $\bv$ of this scaled version $w \sigma(\bx\cdot \bv) \exp\p{-||\bx||_2^2/4}$ will be bounded by $\Ocal(s d)^{d+1}$, where the parameters of the big-O expression depend on $\sigma$. We now consider the Fourier transform of along the $d-1$ dimensional space orthogonal to $d$: along any hyperplane orthogonal to $\bv$, we have a $d-1$ dimensional Gaussian $\exp\p{-||\bx||_2^2/4}$ times some number bounded by $\Ocal(s d)^{d+1}$. We thus consider how much ``score" this can contribute.
        
        Namely, for a scaling factor $t=\Ocal(s d)^{d+1}$, we bound $\int_{\mathbb{R}^{d-1}}\min\set{1,\frac{t}{\ell} (2\sqrt{\pi})^{d-1}\exp\p{-||\bxi||_2^2}}\,d\bxi$, where the expression $(2\sqrt{\pi})^{d-1} \exp\p{-||\bxi||_2^2} $ is the $d-1$ dimensional Fourier transform of the Gaussian $\exp\p{-\bx^2/4}$. This integral is straightforward to bound once we convert it to an integral over the radius. We solve for the radius $r$ where the $\min$ function transitions from the first term to the second: $\frac{t}{l}(2\sqrt{\pi})^{d-1}\exp\p{-r^2}\leq 1$ means that $r\geq \sqrt{\log\frac{t}{l}(2\sqrt{\pi})^{d-1}}$. Since the surface area of a radius $r$ ball in $d-1$ dimensions is $\Ocal(1) r^{d-1}$, we bound our integral as
        \[
            \int_{\mathbb{R}^{d-1}}\min\set{1,\frac{t}{\ell} (2\sqrt{\pi})^{d-1}\exp\p{-||\bxi||_2^2}}\,d\bxi\leq \int_0^\infty \Ocal(1) r^{d-1} \min\set{1,\frac{t}{\ell} (2\sqrt{\pi})^{d-1}\exp\p{-r^2}}\,dr
        \]
        And for $\log \frac{t}{\ell} (2\sqrt{\pi})^{d-1}\geq d$ this integral is bounded by $\Ocal(1)\cdot (\log \frac{t}{\ell}(2\sqrt{\pi})^{d-1})^{\frac{d}{2}}$. Substituting in our bound for $t$ yields that, for $\ell\leq \Ocal(s d)^{d+1}$, the contribution to the score per unit in the transverse direction $\bv$ is at most $((d+1)\log \Ocal(s d)+\log\frac{1}{\ell})^{\frac{d}{2}}$.
        
        Since we integrate the score over the hypercube of side length $b$, the projection to direction $\bv$ has length at most $b\sqrt{d}$, and thus the total score of our neural network with $n$ neurons is at most $n b\sqrt{d}((d+1)\log \Ocal(s d)+\log\frac{1}{\ell})^{\frac{d}{2}}$. Substituting in the definition $\ell=b^{-(d+1)}2^{-\Ocal(d)}$ yields $n b\sqrt{d}((d+1)\log \Ocal(s db))^{\frac{d}{2}}$.
        
        As described above, we show this neural network does not closely approximate the $\max$ function by showing that this score is less than half the volume of the region of frequency space under consideration, $\frac{1}{2}(b-\Omega(d))^d$, which is true provided $b\geq n^{\frac{1}{d-1}}\frac{\polylog(s)}{\polylog(n)}$, where we emphasize that the base and exponents of the polylog terms may depend on $d$. Thus our overall $L_2$-squared error bound of $b^{-d-2} 2^{-\Ocal(d)}$ becomes $\frac{\polylog(n)}{\polylog(s)}\frac{1}{n^{1+\frac{3}{d-1}}}$, as claimed.
    \end{proof}
    With \propref{prop:finite_d} at our disposal, we can now prove \thmref{thm:depth2}.

    \begin{proof}[Proof of \thmref{thm:depth2}]
        We wish to find some $c_2>0$ such that
        \[
            \E_{\bx\sim\Ucal\p{\pcc{0,1}^d}}\pcc{\p{\Ncal(\bx) - f_d(\bx)}^2} > \Omega\p{\frac{1}{d^{c_2\cdot\ell}}}.
        \]
        To this end, we first define the sets $A\coloneqq[0,1-1/d]^{d-3}$ and $B=[1-1/d,1]^3$. For any natural $n$, denote the uniform distribution over $[0,1]^n$ by $\Dcal_n$ and compute by repeatedly using the law of total expectation
        \begin{align*}
            &\E_{\bx\sim\Dcal_d}\pcc{\p{\Ncal(\bx)-f_d(\bx)}^2} = \E_{\bx_1\sim\Dcal_{d-3}}\pcc{\E_{\bx_2\sim\Dcal_{3}}\pcc{\p{\Ncal(\bx_1,\bx_2)-f_d(\bx_1,\bx_2)}^2|\bx_1}}\\
            &\hskip 1cm= \E_{\bx_1\sim\Dcal_{d-3}}\pcc{\E_{\bx_2\sim\Dcal_{3}}\pcc{\p{\Ncal(\bx_1,\bx_2)-f_d(\bx_1,\bx_2)}^2|\bx_1\in A,\bx_2\in B}}\cdot\pr\pcc{\bx_1\in A,\bx_2\in B}\\
            &\hskip 1.5cm + \E_{\bx_1\sim\Dcal_{d-3}}\pcc{\E_{\bx_2\sim\Dcal_{3}}\pcc{\p{\Ncal(\bx_1,\bx_2)-f_d(\bx_1,\bx_2)}^2|\bx_1\notin A\text{ or }\bx_2\notin B}}\cdot\pr\pcc{\bx_1\notin A\text{ or }\bx_2\notin B}\\
            &\hskip 1cm \ge \E_{\bx_1\sim\Dcal_{d-3}}\pcc{\E_{\bx_2\sim\Dcal_{3}}\pcc{\p{\Ncal(\bx_1,\bx_2)-f_d(\bx_1,\bx_2)}^2|\bx_1\in A, \bx_2\in B}}\cdot\pr\pcc{\bx_1\in A,\bx_2\in B}\\
            &\hskip 1cm \ge \E_{\bx_1\sim\Dcal_{d-3}}\pcc{\E_{\bx_2\sim\Dcal_{3}}\pcc{\p{\Ncal(\bx_1,\bx_2)-f_d(\bx_1,\bx_2)}^2|\bx_1\in A, \bx_2\in B}}\cdot d^{-3}\exp(-1)\\
            &\hskip 1cm= \E_{\bx_2\sim\Dcal_{3}}\pcc{\p{\Ncal(\bx'_1,\bx_2)-f_d(\bx'_1,\bx_2)}^2|\bx_2\in B}\cdot d^{-3}\exp(-1).
        \end{align*}
        In the above, the last equality holds for some intermediate point $\bx'_1\in A$ whose existence is guaranteed by \lemref{lem:intermediate_value_thm} due to the fact that $g(\bx)=\E_{\bx_2\sim\Dcal_{3}}\pcc{\p{\Ncal(\bx,\bx_2)-f_d(\bx,\bx_2)}^2|\bx_2\in B}$ is continuous on $[0,1]^{d-3}$. Since $\bx_2\mapsto\Ncal(\bx'_1,\bx_2)$ defines a depth 2 $\sigma$ network which we denote by $\Tilde{\Ncal}(\cdot)$, and by using the fact that $f_d(\bx'_1,\bx_2)=f_3(\bx_2)$ for all $\bx'_1\in A$ and $\bx_2\in B$, we can let $\Ncal_{(2)}$ denote the class of depth 2 networks employing a $\sigma$ activation function, and lower bound the above by
        \[
            \inf_{\Tilde{\Ncal}\in\Ncal_{(2)}}\E_{\bx_2\sim \Ucal(B)}\pcc{\p{\Tilde{\Ncal}(\bx_2)-f_3(\bx_2)}^2}\cdot d^{-3}\exp(-1).
        \]
        It now suffices to lower bound the expectation term above by $\Omega(d^{-c'\cdot\ell})$ for some constant $c'>0$. Focusing on the expectation term above and letting $\Tilde{\Ncal}$ denote arbitrary (not necessarily fixed) elements in $\Ncal_{(2)}$, we once more use the law of total expectation repeatedly to obtain
            \begin{align*}
                &\E_{\bx_2\sim \Ucal(B)}\pcc{\p{\Tilde{\Ncal}(\bx_2)-f_3(\bx_2)}^2}\\
                &\hskip 0.2cm =\E_{\Tilde{x}_1\sim\Ucal\p{\pcc{1-\frac1d,1}}}\pcc{\E_{\Tilde{\bx}_2\sim\Ucal\p{\pcc{1-\frac1d,1}^2}}\pcc{\p{\Tilde{\Ncal}(\Tilde{x}_1,\Tilde{\bx}_2)-f_3(\Tilde{x}_1,\Tilde{\bx}_2)}^2|\Tilde{x}_1\in\pcc{1-\frac1d,1}}}\\
                &\hskip 0.2cm \ge0.5\E_{\Tilde{x}_1\sim\Ucal\p{\pcc{1-\frac1d,1}}}\pcc{\E_{\Tilde{\bx}_2\sim\Ucal\p{\pcc{1-\frac1d,1}^2}}\pcc{\p{\Tilde{\Ncal}(\Tilde{x}_1,\Tilde{\bx}_2)-f_3(\Tilde{x}_1,\Tilde{\bx}_2)}^2|\Tilde{x}_1}|\Tilde{x}_1\in\pcc{1-\frac{3}{4d},1-\frac{1}{4d}}}\\
                &\hskip 0.2cm =0.5\E_{\Tilde{\bx}_2\sim\Ucal\p{\pcc{1-\frac1d,1}^2}}\pcc{\p{\Tilde{\Ncal}(x_0,\Tilde{\bx}_2)-f_3(x_0,\Tilde{\bx}_2)}^2},
            \end{align*}
            where the last equality uses \lemref{lem:intermediate_value_thm} to establish the existence of some intermediate point $x_0\in\pcc{1-\frac{3}{4d},1-\frac{1}{4d}}$ satisfying the above. Writing the above expectation term in integral form, we have that it equals
            \[
                \int_{1-\frac1d}^1\int_{1-\frac1d}^1\p{\Tilde{\Ncal}(x_0,\Tilde{\bx}_2)-f_3(x_0,\Tilde{\bx}_2)}^2d^{2}d\Tilde{\bx}_2.
            \]
            By the change of variables $\Tilde{\bx}_2=\by+(x_0,x_0)$, $d\Tilde{\bx}_2=d\by$, the above equals
            \begin{align*}
                &\int_{1-\frac1d-x_0}^{1-x_0}\int_{1-\frac1d-x_0}^{1-x_0}\p{\Tilde{\Ncal}(x_0,\by+(x_0,x_0))-f_3(x_0,\by+(x_0,x_0))}^2d^{2}d\by\\
                =& \int_{1-\frac1d-x_0}^{1-x_0}\int_{1-\frac1d-x_0}^{1-x_0}\p{\Tilde{\Ncal}(x_0,\by+(x_0,x_0))-x_0-f_3(0,\by)}^2d^{2}d\by\\
                =& \int_{1-\frac1d-x_0}^{1-x_0}\int_{1-\frac1d-x_0}^{1-x_0}\p{\Tilde{\Ncal}(\by)-f_3(0,\by)}^2d^{2}d\by\\
                \ge&\int_{-\frac{1}{4d}}^{\frac{1}{4d}}\int_{-\frac{1}{4d}}^{\frac{1}{4d}}\p{\Tilde{\Ncal}(\by)-f_3(0,\by)}^2d^{2}d\by,
            \end{align*}
            where the first equality uses the fact that $f_3(\bx+(c,c,c))=c+f_3(\bx)$ for any vector $\bx$ and real $c$, the second equality follows from the fact that $\Ncal_{(2)}$ is closed under linear transformations of the input and the output, and since we can simulate the fixed input $x_0$ in the first coordinate by an appropriate linear rescaling of the first hidden layer, and the inequality follows from $x_0\in\pcc{1-\frac{3}{4d},1-\frac{1}{4d}}$ which implies that $\pcc{-0.25d^{-1},0.25d^{-1}}\subseteq\pcc{1-1/d-x_0,1-x_0}$. Letting $\gamma>0$ to be determined later, we perform a second linear change of variables $\by=0.5\gamma^{-1}\bz$, $d\by=0.5\gamma^{-2}d\bz$, which entails that the above displayed equation equals
            \begin{align}
                &0.5\int_{-\frac{\gamma}{2d}}^{\frac{\gamma}{2d}}\int_{-\frac{\gamma}{2d}}^{\frac{\gamma}{2d}}\p{\Tilde{\Ncal}\p{0.5\gamma^{-1}\bz}-f_3\p{0,0.5\gamma^{-1}\bz}}^2d^{2}\gamma^{-2}d\bz\nonumber\\            =&0.5\int_{-\frac{\gamma}{2d}}^{\frac{\gamma}{2d}}\int_{-\frac{\gamma}{2d}}^{\frac{\gamma}{2d}}\p{0.5\gamma^{-1}2\gamma\Tilde{\Ncal}\p{0.5\gamma^{-1}\bz}-0.5\gamma^{-1}f_3\p{0,\bz}}^2d^{2}\gamma^{-2}d\bz\nonumber\\
                =&0.25\pi d^{2}\gamma^{-2}\int_{-\frac{\gamma}{2d}}^{\frac{\gamma}{2d}}\int_{-\frac{\gamma}{2d}}^{\frac{\gamma}{2d}}\p{\Tilde{\Ncal}\p{\bz}-f_3\p{0,\bz}}^2\frac{1}{2\pi}d\bz\nonumber\\
                \ge& 0.25\pi d^{2}\gamma^{-2}\int_{\{\bz:\norm{\bz}_2\le 0.5d^{-1}\gamma\}}\p{\Tilde{\Ncal}\p{\bz}-f_3\p{0,\bz}}^2\frac{1}{2\pi}\exp\p{-0.5\norm{\bz}_2^2}d\bz\nonumber\\
                =&0.25\pi d^{2}\gamma^{-2}\left(\E_{\bz\sim\Ncal(\bzero,I_2)}\pcc{\p{\Tilde{\Ncal}\p{\bz}-f_3\p{0,\bz}}^2}\right.\nonumber\nonumber\\ 
                &\hskip 0.2cm\left.- \int_{\{\bz:\norm{\bz}_2\ge 0.5d^{-1}\gamma\}}\p{\Tilde{\Ncal}\p{\bz}
                 -f_3\p{0,\bz}}^2\frac{1}{2\pi}\exp\p{-0.5\norm{\bz}_2^2}d\bz\right).\label{eq:gaussian_dif}
            \end{align}
            In the above, the first equality follows from the fact that $f_3(\alpha\cdot\bx)=\alpha f_3(\bx)$ for all $\alpha>0$ and $\bx\in\reals$, the second equality follows from the fact that $\Ncal_{(2)}$ is closed under linear scaling of its input and output, and the inequality follows from $\{\bz:\norm{\bz}_2\le 0.5d^{-1}\gamma\}\subseteq[-0.5d^{-1}\gamma,0.5d^{-1}\gamma]^2$ and the fact that the maximum of a bivariate standard Gaussian is $\frac{1}{2\pi}$.

            Next, we upper bound the square of the above approximation. We begin with the output of a single neuron:
            \begin{align*}
                \abs{\sigma\p{\inner{\bw_i,\bz} + b_i}} &\le C_{\sigma}\p{1+\abs{\inner{\bw_i,\bz} + b_i}^{\alpha_{\sigma}}} \\ &\le C_{\sigma}\p{1+\abs{\norm{\bw_i}\cdot\norm{\bz} + \abs{b_i}}^{\alpha_{\sigma}}} \le \Ocal\p{\exp\p{\Ocal(d)}\norm{\bz}^{\alpha_{\sigma}}},
            \end{align*}
            where the second inequality follows from Cauchy-Schwartz and the last inequality follows from our assumption on the magnitude of the weights. Using the above, we can upper bound the output of the network by
            \[
                \abs{\Tilde{\Ncal}\p{\bz}} = \abs{\sum_{i=1}^k \sigma\p{\inner{\bw_i,\bz} + b_i} + b_0}
                \le d^{\ell}\cdot\Ocal\p{\exp\p{\Ocal(d)}\norm{\bz}^{\alpha_{\sigma}}},
            \]
            implying
            \[
                \p{\Tilde{\Ncal}\p{\bz}
                 -f_3\p{0,\bz}}^2 \le d^{2\ell}\cdot\Ocal\p{\exp\p{\Ocal(d)}\norm{\bz}^{2\alpha_{\sigma}}}.
            \]
            Substituting $\gamma=\Ocal(d^2)$ in \eqref{eq:gaussian_dif} and using the above, we get a lower bound of
            \[
                0.25\pi d^{-2}\p{\E_{\bz\sim\Ncal(\bzero,I_2)}\pcc{\p{\Tilde{\Ncal}\p{\bz}-f_3\p{0,\bz}}^2} - \Ocal\p{\exp\p{-0.5d^2}}}.
            \]
            Lastly, using \propref{prop:finite_d} to lower bound the expectation term above with the assumed bounds on the parameters, the theorem follows.
        \end{proof}

    \subsection{Proof of \thmref{thm:depth3_lb}}\label{app:depth3_lb}
    
    \begin{proof}
        We begin with reducing the approximation error of $f_d$ over a depth 3 ReLU network to the approximation error of $f_3$ over a depth 2 ReLU network. To this end, we first identify three coordinates in the domain of $\Ncal$ where a certain sub-cube $B$ of dimension 3, and a set $A\subseteq[0,1]^{d-3}$ exist, which satisfy the following properties:
        \begin{enumerate}
            \item \label{item:1}
            $\pr_{\bx\sim\Dcal_{d-3}}\pcc{\bx\in A} \ge 0.1$.
            \item \label{item:2}
            $\pr_{\bx\sim\Dcal_{3}}\pcc{\bx\in B} = d^{-18}$.
            \item \label{item:3}
            $f_d(\bx_1, \bx_2)=f_3(\bx_2)$ for all $\bx_1\in A,\bx_2\in B$.
            \item \label{item:4}
            $A$ can be decomposed into a disjoint partition of at most $2^k$ convex sets $A_1,A_2,\ldots$ and a set of measure zero $\Delta$, such that $A=\bigcup_{j}A_j\cup\Delta$, where for all $j$ and $i\in[k]$, $\pr_{\bx\sim\Dcal_{d-3}}\pcc{{\bx\in A_j}}>0$ and $\sign(n_i(\bx_1,\bx_2))$ is fixed for all $\bx_1\in A_j$ and all $\bx_2\in B$.

        \end{enumerate}
        Before we prove the existence of $A$ and $B$, we shall first show how they imply a reduction to an approximation using depth 2. For any natural $n$, denote the uniform distribution over $[0,1]^n$ by $\Dcal_n$ and compute by repeatedly using the law of total expectation
        \begin{align*}
            &\E_{\bx\sim\Dcal_d}\pcc{\p{\Ncal(\bx)-f_d(\bx)}^2} = \E_{\bx_1\sim\Dcal_{d-3}}\pcc{\E_{\bx_2\sim\Dcal_{3}}\pcc{\p{\Ncal(\bx_1,\bx_2)-f_d(\bx_1,\bx_2)}^2|\bx_1}}\\
            &\hskip 1cm= \E_{\bx_1\sim\Dcal_{d-3}}\pcc{\E_{\bx_2\sim\Dcal_{3}}\pcc{\p{\Ncal(\bx_1,\bx_2)-f_d(\bx_1,\bx_2)}^2|\bx_1\in A,\bx_2\in B}}\cdot\pr\pcc{\bx_1\in A,\bx_2\in B}\\
            &\hskip 1.5cm + \E_{\bx_1\sim\Dcal_{d-3}}\pcc{\E_{\bx_2\sim\Dcal_{3}}\pcc{\p{\Ncal(\bx_1,\bx_2)-f_d(\bx_1,\bx_2)}^2|\bx_1\notin A\text{ or }\bx_2\notin B}}\cdot\pr\pcc{\bx_1\notin A\text{ or }\bx_2\notin B}\\
            &\hskip 1cm\ge \E_{\bx_1\sim\Dcal_{d-3}}\pcc{\E_{\bx_2\sim\Dcal_{3}}\pcc{\p{\Ncal(\bx_1,\bx_2)-f_d(\bx_1,\bx_2)}^2|\bx_1\in A, \bx_2\in B}}\cdot\pr\pcc{\bx_1\in A,\bx_2\in B}\\
            &\hskip 1cm= \sum_j\E_{\bx_1\sim\Dcal_{d-3}}\pcc{\E_{\bx_2\sim\Dcal_{3}}\pcc{\p{\Ncal(\bx_1,\bx_2)-f_d(\bx_1,\bx_2)}^2|\bx_2\in B,\bx_1}|\bx_1\in A_j}\cdot\pr\pcc{\bx_1\in A_j,\bx_2\in B}\\
            &\hskip 1cm= \sum_j\E_{\bx_2\sim\Dcal_{3}}\pcc{\p{\Ncal(\bx'_j,\bx_2)-f_d(\bx'_j,\bx_2)}^2|\bx_2\in B}\cdot\pr\pcc{\bx_1\in A_j,\bx_2\in B}.
        \end{align*}
        In the above, the penultimate equality holds despite the omission of $\Delta$ from the decomposition of $A$ since it is a set of measure zero, and the last equality holds for a set of intermediate points $(\bx'_1,\bx'_2,\ldots)\in\reals^{d-3}$ whose existence is guaranteed by \lemref{lem:intermediate_value_thm} due to \itemref{item:4} and the fact that $g(\bx)=\E_{\bx_2\sim\Dcal_{3}}\pcc{\p{\Ncal(\bx,\bx_2)-f_d(\bx,\bx_2)}^2|\bx_2\in B}$ is continuous on $[0,1]^{d-3}$. Since for any given $j$, $\sign(n_i(\bx'_j,\bx_2))$ is fixed for all $i\in[k]$ and all $\bx_2\in B$, we can collapse the first hidden layer of $\Ncal$,\footnote{More formally, we can obtain $\Ncal_j$ from $\Ncal$ and $\bx'_j$ by choosing some arbitrary $\bx_2\in B$ and considering the sign of $n_i(\bx'_j,\bx_2)$ for each $i\in[k]$. If it is negative we can discard the neuron and set its incoming weight in the second layer to $0$, and if it is positive then we discard the ReLU activation and compose the obtained linear transformation with the linear transformation computed by the corresponding neuron in the second layer. In both cases, the neuron in the first layer is either canceled or is absorbed into the second layer, thus removing the first hidden layer altogether without increasing the width of the network.} obtaining a depth 2 ReLU network $\Ncal_j$ for each $\bx'_j$ such that $\Ncal(\bx'_j,\bx_2)=\Ncal_j(\bx_2)$ for all $\bx_2\in B$. Combining the previous argument with \itemref{item:3}, the above displayed equation is equal to
        \begin{align}
            &\sum_j\E_{\bx_2\sim\Dcal_{3}}\pcc{\p{\Ncal_j(\bx_2)-f_3(\bx_2)}^2|\bx_2\in B}\cdot\pr\pcc{\bx_1\in A_j,\bx_2\in B}\nonumber\\
            &\hskip 0.5cm = \sum_j\E_{\bx_2\sim \Ucal(B)}\pcc{\p{\Ncal_j(\bx_2)-f_3(\bx_2)}^2}\cdot\pr\pcc{\bx_1\in A_j}\cdot\pr\pcc{\bx_2\in B}.\nonumber
        \end{align}
        Letting $\Ncal_{(2)}$ denote the class of depth 2 ReLU networks of width at most $d^2/5$, and letting $b>a\ge0$ such that $B=[a,b]^3$ where $b\coloneqq a+d^{-6}$, we can lower bound the above by
        \begin{align}
            &\sum_j\inf_{\Tilde{\Ncal}\in\Ncal_{(2)}}\E_{\bx_2\sim \Ucal(B)}\pcc{\p{\Tilde{\Ncal}(\bx_2)-f_3(\bx_2)}^2}\cdot\pr\pcc{\bx_1\in A_j}\cdot\pr\pcc{\bx_2\in B}\nonumber\\
            &\hskip 0.5cm= \inf_{\Tilde{\Ncal}\in\Ncal_{(2)}}\E_{\bx_2\sim \Ucal(B)}\pcc{\p{\Tilde{\Ncal}(\bx_2)-f_3(\bx_2)}^2}\cdot\pr\pcc{\bx_1\in A}\cdot\pr\pcc{\bx_2\in B}\nonumber\\
            &\hskip 0.5cm \ge 0.1d^{-18}\cdot\inf_{\Tilde{\Ncal}\in\Ncal_{(2)}}\E_{\bx_2\sim \Ucal(B)}\pcc{\p{\Tilde{\Ncal}(\bx_2)-f_3(\bx_2)}^2},\label{eq:tiny_cube}
        \end{align}
        where the equality is due to \itemref{item:4} and the inequality is due to Items~\ref{item:1} and \ref{item:2}. Applying \lemref{lem:rescaling} and \thmref{thm:depth2} with input dimension $3$ and $\ell=2$, the lower bound follows.


        It now remains to show the existence of $A$ and $B$. Starting with $B$, we first assume w.l.o.g.\ that no neuron in the first hidden layer of $\Ncal$ has an all-zero weight vector. This is justified since if such a neuron exists, it merely outputs a constant as input to the second layer which can be simulated by modifying the bias terms in the second layer, which doesn't increase the width of $\Ncal$. Denote for all $i\in[k]$, $w_i^{\max}=w_{i,j_i}$ where $j_i=\argmax{j\in[d]}\abs{w_{i,j}}$, we define the set
        \[
            P\coloneqq\set{x\in\pcc{1-\frac1d,1}:\abs{x+\frac{b_i}{w_i^{\max}}}\le d^{-3},\hskip 0.3cm\forall\hskip 0.1cm i\in[k]}.
        \]
        Note that by our assumption $k\le\frac{d^2}{5}$, we have that $P$ consists of at most $\frac{d^2}{5}$ connected components where each is of length at most $2d^{-3}$. Therefore, the overall length of $P$ is no more than $\frac{2}{5d}$, and we can thus find an interval $[a,a+d^{-6}]\subseteq [1-1/d]\setminus P$ for some $a\in[1-1/d,1-d^{-6}]$. We can now define
        \[
            B\coloneqq \pcc{a,a+d^{-6}}^3.
        \]
        Note that this immediately entails that $\pr_{\bx\sim\Dcal_3}\pcc{\bx\in B} = d^{-18}$, proving \itemref{item:2}. Continuing to showing the existence of $A$, we first define it formally as the set given by
        \[
            A\coloneqq\pcc{0,1-\frac{1}{d}}^{d-3}\setminus\set{\bx_1\in[0,1]^{d-3}: \exists \bx_2\in B,\hskip 0.1cm i\in[k]\text{ s.t. }n_i(\bx_1,\bx_2)=0}.
        \]
        Note that this immediately implies \itemref{item:3}. To show \itemref{item:4}, we observe that $A$ can be defined as the set difference between a cube and the union of $k$ closed sets, one for each neuron in the first hidden layer of $\Ncal$. More specifically, for $i\in[k]$, suppose that the $i$-th neuron has weights $\bw=(\bw_1,\bw_2)$ and bias $b$ where $\bw_1\in\reals^{d-3}$ and $\bw_2\in\reals^{3}$, and consider the set
        \[
            \Tilde{A}_i \coloneqq \set{\bx_1\in[0,1]^{d-3}: \exists \bx_2\in B\text{ s.t. }n_i(\bx_1,\bx_2)=0}.
        \]
        Observing that
        \[
            \Tilde{A}_i = \bigcup_{\bx_2\in B} \set{\bx_1:\inner{\bw_1,\bx_1} = b-\inner{\bw_2,\bx_2}},
        \]
        we have that $\Tilde{A}_i=\reals^{d-3}$ or $\Tilde{A}_i=\emptyset$ if $\bw_1=\bzero$, depending on whether $b-\inner{\bw_2,\bx_2}$ equals zero for some $\bw_2\in B$ or not. 
        Otherwise, if $\bw_1\neq\bzero$, we have that $\Tilde{A}_i$ can be represented as a union of parallel affine subspaces, the boundary of which is given by $\inner{\bw_1,\bx_1}=\min_{\bx_2\in B}b-\inner{\bw_2,\bx_2}$ and $\inner{\bw_1,\bx_1}=\max_{\bx_2\in B}b-\inner{\bw_2,\bx_2}$, where both the minimum and maximum are defined since $B$ is compact and $\bx_2\mapsto b-\inner{\bw_2,\bx_2}$ is continuous. Moreover, by continuity we also obtain that $\Tilde{A}_i$ is connected. We thus have in either case that we can represent $\overline{\Tilde{A}_i}=\Tilde{A}_{i,1}\cup \Tilde{A}_{i,2}$ for some disjoint and convex sets $\Tilde{A}_{i,1},\Tilde{A}_{i,2}\subseteq\reals^{d-3}$. We now compute
        \begin{align*}
            A&=\pcc{0,1-\frac1d}^{d-3}\setminus\p{\bigcup_{i\in[k]}\Tilde{A}_i} = \pcc{0,1-\frac1d}^{d-3}\hskip 0.05cm\bigcap\hskip 0.1cm
            \overline{\bigcup_{i\in[k]}\Tilde{A}_i}
            = \pcc{0,1-\frac1d}^{d-3}\hskip 0.05cm\bigcap\hskip 0.1cm\p{\bigcap_{i\in[k]}\overline{\Tilde{A}_i}}\\
            &= \pcc{0,1-\frac1d}^{d-3}\hskip 0.05cm\bigcap\hskip 0.1cm \p{\bigcap_{i\in[k]}\p{\Tilde{A}_{i,1}\cup \Tilde{A}_{i,2}}} = \bigcup_{j_1,\ldots,j_k\in\{1,2\}}\p{\p{\bigcap_{i\in[k]}\Tilde{A}_{i,j_i}}\bigcap \pcc{0,1-\frac1d}^{d-3}}.
        \end{align*}
        Namely, $A$ is a union of at most $2^k$ disjoint and convex sets (since the intersection of convex sets is also convex). For $j_1,\ldots,j_k\in\{1,2\}$, denote
        \[
            A_{j_1,\ldots,j_k}\coloneqq\p{\bigcap_{i\in[k]}\Tilde{A}_{i,j_i}}\bigcap \pcc{0,1-\frac1d}^{d-3}.
        \]
        By defining
        \[
            \Delta\coloneqq \set{\bigcup_{j_1,\ldots,j_k\in\{1,2\}}A_{j_1,\ldots,j_k} : \pr_{\bx\sim\Dcal_{d-3}}\pcc{\bx\in A_{j_1,\ldots,j_k}}=0},
        \]
        we can partition $A$ into $\Delta$ and $A\setminus\Delta$ where each connected component in $A\setminus\Delta$ is not a measure zero set. Finally, we have that $\sign(n_i(\bx_1,\bx_2))$ is fixed on each such convex component. This holds true since if otherwise, by contradiction, it holds that some neuron satisfies $n_i(\bx_1,\bx_2)\ge0$ and $n_i(\bx'_1,\bx'_2)<0$ for some $(\bx_1,\bx_2),(\bx'_1,\bx'_2)\in A_j\times B$ and some $j\in[2^k]$. Consider the path $p:[0,1]\to\reals^d$ given by
        \[
            p(\lambda) \coloneqq \lambda(\bx_1,\bx_2) + (1-\lambda)(\bx'_1,\bx'_2).
        \]
        Since $A_j$ and $B$ are convex, so is $A_j\times B$, and we have $p(\lambda)\in A_j\times B$ for all $\lambda\in[0,1]$. Since $n_i(p(\lambda))$ is continuous in $\lambda$, by the intermediate value theorem, we can find some $\lambda_0\in[0,1]$ such that $n_i(p(\lambda_0))=0$, which contradicts the definition of $A$.
        
        To show \itemref{item:1}, we first show that with high probability over drawing $\bx_1$, it holds that $n_i(\bx_1,\bx_2)\neq 0$ for all $\bx_2\in B$ and $i\in[k]$. Suppose that $\bx_1\sim\Ucal\p{[0,1-1/d]^{d-3}}$. We now construct the following graph $G$: $G$ has $d$ vertices, one for each coordinate of the input dimension, and the set of edges is determined according to the values of the weights in the first hidden layer of $\Ncal$. Specifically, there's an edge between two vertices $j_1,j_2\in[d]$ if and only if there exists no neuron $m\in[k]$ such that $|w_{m,j_1}|>|w_{m,\ell}|$ for all $\ell\in[k]\setminus\{j_2\}$ and $|w_{m,j_2}|>|w_{m,\ell}|$ for all $\ell\in[k]\setminus\{j_1\}$. In words, there's no edge between vertices $j_1$ and $j_2$ if and only if there exists a neuron in which the coordinates $j_1,j_2$ have the strictly largest weights in the neuron in absolute value. Since $k\le d^2/5$, we have that $G$ must contain at least $\binom{d}{2}-\frac{d^2}{5}$ edges, which is strictly greater than $\frac{d^2}{4}$ for sufficiently large $d$, and thus by Mantel's theorem (\thmref{thm:mantel}) we have that $G$ must contain a triangle. Consider this triangle in $G$, and assume w.l.o.g.\ that it is formed on the last three coordinates. This means by the definition of $G$ that at least one of the two largest coordinates in each neuron in the hidden layer of $\Ncal$ have an index $j\le d-3$. Fix some neuron, and let $\bw=(w_1,\ldots,w_d)$ and $b$ denote the weights and bias, respectively, of the neuron. Further assume w.l.o.g.\ that $|w_1|\ge |w_2|\ge\ldots\ge|w_{d-3}|$ and that $|w_{d-2}|\le |w_{d-1}|\le |w_d|$. We now perform a case analysis depending on the ratio between the two largest coordinates in the weights of the neuron.
        \begin{itemize}
            \item 
            Suppose that $|w_1|\le |w_{d}|d^{-4}$. Note that this also entails $|w_d|\ge|w_1|d^4>|w_1|$. Namely, $w_d$ has the largest magnitude in absolute value among the weights of the neuron. By our construction of $B$, we have
            \[
                \abs{x+\frac{b}{w_d}} > d^{-3},\hskip 0.3cm \forall \hskip 0.1cm x\in\pcc{a,a+d^{-6}}.
            \]
            Simple algebra and the above imply that
            \begin{equation}\label{eq:notinrange}
                w_dx+b\notin\pcc{-|w_d|d^{-3},|w_d|d^{-3}},\hskip 0.3cm \forall \hskip 0.1cm x\in\pcc{a,a+d^{-6}}.
            \end{equation}
            We now compute
            \[
                \abs{\sum_{j=1}^{d-1}w_jx_j} \le \sum_{j=1}^{d-1}|w_j|\le \sum_{j=1}^{d-1}|w_1| < |w_d|d^{-3}.
            \]
            In the above, the first inequality is H\"older's inequality, the second inequality is due to the fact that $|w_1|$ is the second largest in absolute value among the weights of the neuron, and the last inequality is by our assumption $|w_1|\le |w_{d}|d^{-4}$. Adding the above displayed inequality with \eqref{eq:notinrange} we obtain
            \begin{equation}\label{eq:wp1}
                \sum_{j=1}^dw_jx_j+b\neq0,\hskip 0.3cm \forall \hskip 0.1cm \bx_1\in\pcc{0,1-1/d}^{d-3},\hskip 0.3cm \forall \hskip 0.1cm \bx_2\in B.  
            \end{equation}
                
            \item
            Suppose that $|w_1|> |w_{d}|d^{-4}$. Then in such a case, we cannot guarantee that \eqref{eq:wp1} holds with probability $1$. However, we can show that the randomness over drawing $x_1$ induces sufficient variance and therefore it holds with high probability. 
            Define the random variables $X\coloneqq\sum_{j=1}^{d-3}w_jx_j+b$, $\Tilde{X}\coloneqq\sum_{j=2}^{d-3}w_jx_j+b$, where the randomness is taken over drawing $\bx_1\sim\Ucal([0,1-1/d]^{d-3})$, and let $I\subseteq\reals$ be any interval of length $3|w_d|d^{-6}$. We compute
            \begin{align}
                \pr\pcc{X\in I|\Tilde{X}=x} &= \pr\pcc{w_1x_1+x\in I|\Tilde{X}=x} \le \int_{\reals}\frac{1-1/d}{|w_1|}\one{t+x\in I}dt\nonumber\\
                &\le \frac{3|w_d|d^{-6}}{|w_1|} < 3d^{-2},\label{eq:small_interval}
            \end{align}
            where we used the fact that the density of the random variable $w_1x_1$ is $\frac{1-1/d}{|w_1|}$ in its support, and our assumption which implies that $|w_d|<|w_1|d^4$. Next, compute using the law of total probability to obtain
            \[
                \pr\pcc{X\in I} = \int_{\reals}\pr\pcc{X\in I|\Tilde{X}=x}p_{\Tilde{X}}(x)dx < \int_{\reals}3d^{-2}p_{\Tilde{X}}(x)dx = 3d^{-2},
            \]
            where the inequality follows from \eqref{eq:small_interval}. Since 
            \[
                \max_{\bx_2\in B}\sum_{i=d-2}^dw_ix_i-\min_{\bx_2\in B}\sum_{i=d-2}^dw_ix_i \le 3|w_d|d^{-6},
            \]
            we have that
            \begin{equation}\label{eq:whp}
                \pr_{\bx_1\sim\Ucal([0,1-1/d]^{d-3})}\pcc{\sum_{j=1}^dw_jx_j+b\neq0} \ge 1-3d^{-2}, \hskip 0.3cm \forall \hskip 0.1cm \bx_2\in B.
            \end{equation}
        \end{itemize}
        Having shown that in both cases \eqref{eq:whp} holds, we proceed by taking a union bound over all the $k\le d^2/5$ neurons in the first hidden layer of $\Ncal$, obtaining
        \[
            \pr_{\bx_1\sim\Ucal([0,1-1/d]^{d-3})}\pcc{\sum_{j=1}^dw_{i,j}x_j+b\neq0} \ge \frac{2}{5}, \hskip 0.3cm \forall \hskip 0.1cm i\in [k], \hskip 0.3cm \forall \hskip 0.1cm \bx_2\in B.
        \]
        We now observe that
        \begin{align*}
            &\pr_{\bx_1\sim\Dcal_{d-3}}\pcc{\bx_1\in A} \\
            &\hskip 1cm= \pr_{\bx_1\sim\Dcal_{d-3}}\pcc{\bx_1\in A|\bx_1\in[0,1-1/d]^{d-3}}\cdot\pr_{\bx_1\sim\Dcal_{d-3}}\pcc{|\bx_1\in[0,1-1/d]^{d-3}}\\
            &\hskip 1cm \ge \frac{2}{5}\p{1-\frac{1}{d}}^{d-3} \ge \frac{2}{5}\exp(-1) \ge 0.1,
        \end{align*}
        which thus proves \itemref{item:1}, and completes the proof of the theorem. 
    \end{proof}

    \subsection{Proof of \thmref{thm:size_d_lb}}\label{app:size_d_lb_proof}

    \begin{proof}
        Consider the matrix of first hidden layer weights $W\in\reals^{k\times d}$. Since $k\le d-1$ by our assumption, we have that $\dim(\ker(W))\ge1$. Fix some vector $\bv=(v_1,\ldots,v_d)\in\ker(W)$ such that $\norm{\bv}_2=1$ and assume w.l.o.g.\ $\norm{\bv}_{\infty}=v_1$. Denote $X\coloneqq[0,1]^d$, we now consider the triangular matrix and vector
        \[
            P\coloneqq\p{
                \begin{matrix}
                    \frac1dv_1 & 0 & 0 & \cdots & 0\\
                    \frac1dv_2 & 1-\frac2d & 0 & \cdots & 0\\
                    \frac1dv_3 & 0 & 1-\frac2d & \cdots & 0\\
                    \vdots & \vdots & \vdots & \ddots & \vdots\\
                    \frac1dv_d & 0 & 0 & \cdots & 1-\frac2d\\
                \end{matrix}
            },
            \hskip 0.5cm \bb\coloneqq\p{
                \begin{matrix}
                    1-\frac1d\\
                    \frac1d\\
                    \frac1d\\
                    \vdots\\
                    \frac1d
                \end{matrix}
            },
        \]
        and the set defined by
        \[
            \Pcal \coloneqq\set{P\bx+\bb:\bx\in X}.
        \]
        By its definition, $\Pcal$ is a parallelotope satisfying $\Pcal\subseteq X$. Moreover, we have
        \begin{equation}\label{eq:max_over_para}
            f_d(\bu)=u_1, \hskip 0.3cm \forall\hskip 0.1cm \bu=(u_1,\ldots,u_d)\in\Pcal.
        \end{equation}
        The above holds true since for all $\bu\in\Pcal$ there exists some $\bx=(x_1,\ldots,x_d)\in X$ such that $u_i=1-\frac1d+\frac1d v_ix_1$ for all $i\ge2$ and $u_1=1-\frac1d+\frac1dv_1$, and thus by our assumption that $\norm{\bv}_{\infty}=v_1$ we have
        \[
            u_1=1-\frac1d+\frac1dv_1x_1\ge 1-\frac1d+\frac1d v_ix_1 = u_i.
        \]
        Using the change of variables $\bu=P\bx+\bb$, $d\bu=\abs{\det\p{P}}d\bx$; the fact that $\Pcal\subseteq X$; and the fact that the squared loss is non-negative, we have
        \begin{align}
            \E_{\bu\sim\Ucal\p{X}}\pcc{\p{\Ncal(\bu)-f_d(\bu)}^2} &\ge \int_{\Pcal}\p{\Ncal(\bu)-f_d(\bu)}^2d\bu\nonumber\\
            &=\int_X\p{\Ncal(P\bx+\bb)-f_d(P\bx+\bb)}^2\abs{\det\p{P}}d\bx.\label{eq:change_of_variables}
        \end{align}
        Letting $\be_i$ denote the standard unit vector with coordinate $e_i=1$, we get from $P\bx=\frac1d\bv x_1+\sum_{i=2}^d\p{1-\frac2d}x_i\be_i$ and $\bv\in\ker(W)$ that we can write $\Ncal(P\bx+\bb)=c(x_2,\ldots,x_d)$ for some function $c:\reals^{d-1}\to\reals$. Since $P$ is triangular, we have $\abs{\det(P)}=\frac1d\p{1-\frac2d}^{d-1}v_1 \ge \frac{1}{10d}v_1$. Moreover, since $\norm{\bv}_{\infty}=v_1$ and $\norm{\bv}_2=1$, we have that $v_1\ge d^{-0.5}$ and we can further lower bound the above to obtain $\abs{\det(P)}\ge\frac{1}{10d^{1.5}}$. Plugging the above and \eqref{eq:max_over_para} back in \eqref{eq:change_of_variables}, we obtain
        \begin{align}
            &\E_{\bx\sim\Ucal\p{X}}\pcc{\p{\Ncal(\bx)-f_d(\bx)}^2} \ge \frac{1}{10d^{1.5}}\int_X\p{c(x_2,\ldots,x_d) - \p{1-\frac1d+\frac1dv_1x_1}}^2d\bx\nonumber\\
            &\hskip 1.5cm= \frac{1}{10d^{1.5}}\int_{x_d}\ldots\int_{x_2}\int_{x_1}\p{c(x_2,\ldots,x_d) - \p{1-\frac1d+\frac1dv_1x_1}}^2dx_1dx_2\ldots dx_d.\label{eq:multivariate_integral}
        \end{align}
        It is easy to verify that the optimal constant approximation for the linear function $1-\frac1d+\frac1dv_1x_1$ is $1-\frac1d+\frac1{2d}v_1$, in which case the optimal $L_2$ approximation error is
        \[
            \int_0^1\p{1-\frac1d+\frac1{2d}v_1 - \p{1-\frac1d+\frac1dv_1x}}^2dx = \frac{v_1^2}{d^2}\int_0^1\p{\frac12-x}^2dx = \frac{v_1^2}{12d^2}.
        \]
        Plugging the above back in \eqref{eq:multivariate_integral} and using the fact that $v_1\ge d^{-0.5}$ again, we obtain
        \[
            \E_{\bx\sim\Ucal\p{X}}\pcc{\p{\Ncal(\bx)-f_d(\bx)}^2} \ge \frac{1}{10d^{1.5}}\int_{x_d}\ldots\int_{x_2}\frac{v_1^2}{12d^2} dx_2\ldots dx_d \ge \frac{1}{120d^{4.5}},
        \]
        concluding the proof of the lemma.
    \end{proof}

    \section{Technical lemmas}
    
    The following theorem is a well-known result in graph theory, which we state here for the sake of completeness.

    \begin{theorem}[Mantel's theorem]\label{thm:mantel}
        Let $G$ be a graph with $d$ vertices and more than $d^2/4$ edges. Then $G$ contains a triangle.
    \end{theorem}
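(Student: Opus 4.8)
The plan is to prove the contrapositive: a triangle-free graph on $d$ vertices has at most $d^2/4$ edges. The single structural fact I would use is that in a triangle-free graph the two endpoints of any edge share no common neighbor, so their neighborhoods are disjoint.

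First I would fix a triangle-free graph $G=(V,E)$ on $d$ vertices and observe that for every edge $\{u,v\}\in E$ the sets $N(u)$ and $N(v)$ are disjoint (a common neighbor would close a triangle), so $\deg(u)+\deg(v)=|N(u)\cup N(v)|\le d$. Summing this over all edges gives $\sum_{\{u,v\}\in E}\bigl(\deg(u)+\deg(v)\bigr)\le d\,|E|$. The next step is the double-counting observation that each vertex $w$ is an endpoint of exactly $\deg(w)$ edges and contributes $\deg(w)$ to the summand of each, so the left-hand side equals $\sum_{w\in V}\deg(w)^2$; thus
\[
\sum_{w\in V}\deg(w)^2\ \le\ d\,|E|.
\]
Finally I would invoke Cauchy--Schwarz for the vector $(\deg(w))_{w\in V}$ against the all-ones vector, namely $\bigl(\sum_{w\in V}\deg(w)\bigr)^2\le d\sum_{w\in V}\deg(w)^2$, together with the handshake identity $\sum_{w\in V}\deg(w)=2|E|$. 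Combining the two bounds yields $4|E|^2\le d\cdot d\,|E|$, hence $|E|\le d^2/4$. Taking the contrapositive, any graph with more than $d^2/4$ edges must contain a triangle.

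I do not anticipate a real obstacle here; the only points requiring a little care are the bookkeeping identity that rewrites $\sum_{\{u,v\}\in E}(\deg(u)+\deg(v))$ as $\sum_{w\in V}\deg(w)^2$, and the degenerate cases $d\le 2$ (where no graph has more than $d^2/4$ edges at all, so the statement is vacuous). A self-contained alternative that avoids Cauchy--Schwarz is induction on $d$: for $d\ge 3$ triangle-free, pick any edge $\{u,v\}$, delete both endpoints to obtain a triangle-free graph on $d-2$ vertices with at most $(d-2)^2/4$ edges by the inductive hypothesis, and note that the number of deleted edges is $\deg(u)+\deg(v)-1\le d-1$, giving $|E|\le (d-2)^2/4+(d-1)=d^2/4$. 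I would present the degree-sum argument as the main proof, since it is the shortest and cleanest.
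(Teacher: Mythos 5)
Your proof is correct. Note that the paper itself gives no proof of this statement---it records Mantel's theorem as a well-known fact ``for the sake of completeness''---so there is nothing to compare against; your degree-sum argument (disjoint neighborhoods across each edge, the identity $\sum_{\{u,v\}\in E}(\deg(u)+\deg(v))=\sum_{w}\deg(w)^2$, then Cauchy--Schwarz with the handshake lemma) is the standard and complete proof, and the inductive alternative you sketch is equally valid.
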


    \begin{lemma}\label{lem:add_mul_separated}
        Let $\bx=(x_1,\ldots,x_d)$ and $M,\delta>0$ such that $\abs{x_i-x_j}>\delta$ and $\abs{x_j}\le M$ for all $i$ and $j$. Then $\bx\in S_{\delta/M}$.
    \end{lemma}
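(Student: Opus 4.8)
The plan is to argue directly from the definition of $\delta$-separatedness, by contraposition on the division-by-$x_j$ step. Recall that $\bx\in S_{\delta/M}$ means precisely that for every pair $i\neq j$ with $x_j\neq 0$ we have $x_i/x_j\notin[1-\delta/M,\,1+\delta/M]$. So I would fix an arbitrary such pair $i\neq j$ with $x_j\neq 0$ and suppose, toward a contradiction, that $x_i/x_j\in[1-\delta/M,\,1+\delta/M]$.

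From that containment the natural next step is to rewrite it as $\abs{x_i/x_j - 1}\le \delta/M$, and then multiply through by $\abs{x_j}$ to obtain $\abs{x_i - x_j}\le (\delta/M)\,\abs{x_j}$. Invoking the hypothesis $\abs{x_j}\le M$ bounds the right-hand side by $(\delta/M)\cdot M = \delta$, so $\abs{x_i - x_j}\le \delta$, which contradicts the assumed separation $\abs{x_i - x_j} > \delta$. Since the pair $i\neq j$ (with $x_j\neq 0$) was arbitrary, this establishes $\bx\in S_{\delta/M}$.

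There is essentially no obstacle here; the proof is a two-line computation. The only points requiring a moment of care are that the separation hypothesis $\abs{x_i-x_j}>\delta$ should be read as quantified over $i\neq j$ (it is vacuous for $i=j$), and that dividing by $x_j$ is legitimate exactly because the definition of $S_{\delta/M}$ only quantifies over indices $j$ with $x_j\neq 0$, so no degenerate case arises.
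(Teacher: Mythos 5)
Your proof is correct and is essentially the paper's argument run in contrapositive form: both hinge on rewriting $x_i/x_j$ as $1+(x_i-x_j)/x_j$ and combining $\abs{x_i-x_j}>\delta$ with $1/\abs{x_j}\ge 1/M$. The paper states the chain of inequalities directly rather than by contradiction, but the content is identical.
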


    \begin{proof}
        Assuming $x_j\neq 0$, we have that
        \[
            \frac{x_i}{x_j} = 1+\frac{x_i-x_j}{x_j} \notin \pcc{1-\frac{\delta}{M},1+\frac{\delta}{M}} \iff \abs{\frac{x_i-x_j}{x_j}}>\frac{\delta}{M}. 
        \]
        Thus the lemma follows from
        \[
            \abs{\frac{x_i-x_j}{x_j}} > \frac{\delta}{\abs{x_j}} \ge \frac{\delta}{M},
        \]
        where the first inequality is by the assumption $\abs{x_i-x_j}>\delta$ and the second inequality is by the assumption $\abs{x_j}\le M$ which implies $1/\abs{x_j}\ge 1/M$.
    \end{proof}
    
    \begin{lemma}\label{lem:prod}
        \[
            \prod_{i=1}^{\infty}\p{1+\frac{2}{i^3}}^2\le 20.
        \]
    \end{lemma}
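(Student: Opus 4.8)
The plan is to write the product as $\left(\prod_{i=1}^{\infty}\left(1+\frac{2}{i^3}\right)\right)^2$ and to bound the inner product by slightly less than $\sqrt{20}\approx 4.47$. The naive route of applying $1+y\le e^y$ to every factor is hopelessly lossy here: it gives $\exp\left(2\sum_{i\ge1}i^{-3}\right)=\exp(2\zeta(3))$, which is already well above $\sqrt{20}$, because the small-index terms---in particular the factor $1+2=3$ at $i=1$---are far too large for that inequality to be tight. So the first move is to split off the first four factors and treat them exactly.

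Concretely, I would write
\[
\prod_{i=1}^{\infty}\left(1+\frac{2}{i^3}\right)=\left(\prod_{i=1}^{4}\left(1+\frac{2}{i^3}\right)\right)\cdot\prod_{i=5}^{\infty}\left(1+\frac{2}{i^3}\right),
\]
compute the finite part exactly as $3\cdot\frac{5}{4}\cdot\frac{29}{27}\cdot\frac{33}{32}=\frac{14355}{3456}<4.154$, and control the tail by the $1+y\le e^y$ inequality, now applied only where it is cheap: $\prod_{i=5}^{\infty}\left(1+\frac{2}{i^3}\right)\le\exp\left(\sum_{i=5}^{\infty}\frac{2}{i^3}\right)$. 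Since $x\mapsto x^{-3}$ is decreasing, $\sum_{i=5}^{\infty}i^{-3}\le\int_{4}^{\infty}x^{-3}\,dx=\frac{1}{32}$, so the tail is at most $e^{1/16}$.

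Multiplying and squaring then yields $\prod_{i=1}^{\infty}\left(1+\frac{2}{i^3}\right)^2\le\left(\frac{14355}{3456}\right)^2 e^{1/8}$. Using $e^{x}\le 1+x+x^2$ on $[0,1]$ (valid since $e-2<1$, hence $\sum_{k\ge2}x^k/k!\le x^2(e-2)\le x^2$) to get $e^{1/8}\le\frac{73}{64}<1.141$, together with $\left(\frac{14355}{3456}\right)^2<17.26$, this product is at most $17.26\cdot1.141<19.7<20$, which is the claim.

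The only point requiring care is choosing how many early factors to peel off: three is not enough, since $3\cdot\frac{5}{4}\cdot\frac{29}{27}\approx4.03$ and the corresponding integral tail $\exp\left(2\int_{3}^{\infty}x^{-3}\,dx\right)=e^{1/9}$ only gives a squared product near $20.3$, whereas four factors bring the residual tail comfortably under control. Everything else is elementary arithmetic.
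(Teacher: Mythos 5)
Your proof is correct and follows essentially the same strategy as the paper's: peel off the small-index factors exactly and bound the tail via $1+y\le e^{y}$. The only differences are in execution---the paper stops after two factors and invokes $\zeta(3)\le 1.21$ plus a numerical evaluation of the resulting exponential, whereas you peel off four factors and keep everything in elementary rational arithmetic via the integral test and the bound $e^{1/8}\le \frac{73}{64}$---but these are cosmetic.
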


    \begin{proof}
        Compute
        \begin{align*}
            \prod_{i=1}^{\infty}\p{1+\frac{2}{i^3}}^2 &= 9\p{\frac54}^2\prod_{i=3}^{\infty}\p{1+\frac{2}{i^3}}^2 = \frac{225}{16}\exp\p{2\sum_{i=3}^{\infty}\ln\p{1+\frac{2}{i^3}}} \\
            &\le \frac{225}{16}\exp\p{2\sum_{i=3}^{\infty}\frac{2}{i^3}} = \frac{225}{16}\exp\p{4\p{\zeta(3)-\frac{9}{8}}}\\
            &\le \frac{225}{16}\exp\p{4\p{1.21-\frac{9}{8}}},
        \end{align*}
        where the first inequality follows from the inequality $\ln(1+x)<x$ for all $x>0$, and the second inequality is a known bound $\zeta(3)\le1.21$ where $\zeta(\cdot)$ is Riemann's zeta function. Evaluating the above expression, the lemma follows.
    \end{proof}

    \begin{lemma}\label{lem:technical_inequality}
        For all natural $d\ge58$ and $1\le k\le\lceil\log(\log(d)+1)\rceil$, we have
        \[
            1\le\frac{2d^{1-\beta(k+1)}}{(k+1)^3}.
        \]
    \end{lemma}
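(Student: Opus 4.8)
The plan is to take base-$2$ logarithms and reduce the claimed inequality $1\le \frac{2d^{1-\beta(k+1)}}{(k+1)^3}$ to the equivalent statement $(1-\beta(k+1))\log d \ge 3\log(k+1)-1$, where $\beta(k+1)=\frac1{2^{k+1}-1}$. Since $\beta(\cdot)$ is decreasing, the exponent $1-\beta(k+1)$ is \emph{smallest} (hence worst) for small $k$, while $(k+1)^3$ is largest for large $k$; these two effects pull in opposite directions, so I would split the argument into the small-$k$ regime $k\in\{1,2,3\}$ and the large-$k$ regime $k\ge 4$ and treat each separately.

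For $k\in\{1,2,3\}$ I would substitute the lower bound $d\ge 58$ together with the exact exponents $1-\beta(2)=\tfrac23$, $1-\beta(3)=\tfrac67$, $1-\beta(4)=\tfrac{14}{15}$; the right-hand sides $\tfrac{(k+1)^3}{2}$ are $4$, $\tfrac{27}{2}$, and $32$ respectively. The cases $k=1,2$ are immediate ($58^{2/3}\ge 8^{2/3}=4$, and $58^{6/7}\ge 32^{6/7}=2^{30/7}>16>\tfrac{27}{2}$). The case $k=3$ is the genuinely tight one: here I would use the slightly sharper estimate $58>2^{11/2}$ (valid since $58^2=3364>2048=2^{11}$), which gives $58^{14/15}>2^{77/15}>2^5=32$.

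For $k\ge 4$ I would instead exploit the hypothesis $k\le\lceil\log(\log d+1)\rceil$: since $\lceil y\rceil\ge k$ forces $y>k-1$, this yields $\log(\log d+1)>k-1$, hence $\log d>2^{k-1}-1$ --- a \emph{doubly-exponential} lower bound on $d$ in terms of $k$. Combining this with $\beta(k+1)\le\beta(5)=\tfrac1{31}$ (valid for $k\ge4$) gives $d^{1-\beta(k+1)}>2^{\frac{30}{31}(2^{k-1}-1)}$, so it remains only to verify the purely numerical inequality $\frac{30}{31}(2^{k-1}-1)\ge 3\log(k+1)-1$ for every integer $k\ge 4$. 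I would prove this by induction: the base case $k=4$ holds because $\frac{210}{31}>6$ while $3\log 5-1<6$ (as $5^3=125<128=2^7$), and in the inductive step the left side increases by $\frac{30}{31}2^{k-1}\ge\frac{240}{31}>7$ while the right side increases by $3\log\frac{k+2}{k+1}<3$, so the gap only widens.

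The main obstacle is precisely that the inequality is essentially tight at the two extreme configurations $(d,k)=(58,3)$ and $k=4$ with $d$ just above $2^7$, which is why no single crude uniform estimate can work: for instance the bound $d^{1-\beta(k+1)}\ge d^{2/3}$ (using only $\beta(k+1)\le\beta(2)=\tfrac13$) already fails at $d=58,\,k=3$. One is therefore forced to use the exact exponent in the small-$k$ cases and the doubly-exponential link between $k$ and $d$ supplied by the hypothesis in the large-$k$ cases; once this split is made, everything reduces to elementary monotonicity and a handful of explicit numeric comparisons.
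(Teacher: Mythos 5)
Your proof is correct; I checked each numerical step ($58^{2/3}\ge 4$, $2^{30/7}>13.5$, $58^2>2^{11}$ giving $58^{14/15}>2^{77/15}>32$, the base case $125<128$, and the induction increments $\tfrac{240}{31}>7$ versus $3\log\tfrac{k+2}{k+1}<3$) and they all hold. Your overall strategy matches the paper's: handle small $k$ by direct computation from $d\ge 58$, and handle large $k$ by converting the hypothesis $k\le\lceil\log(\log(d)+1)\rceil$ into a doubly-exponential relationship between $d$ and $k$, finishing with an induction. The execution differs in two ways worth noting. First, the paper splits at $k\ge 2$ and uses the single uniform exponent $1-\beta(k+1)\ge 6/7$, which forces it to carry the tight case $k=3$, $d$ near $58$ through the general argument; you instead treat $k\in\{1,2,3\}$ with their exact exponents and only pass to a uniform bound ($30/31$) for $k\ge 4$, which is why your $k=3$ case needs the sharper estimate $58>2^{11/2}$ but your large-$k$ induction has more slack. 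Second, the paper keeps $d$ as the induction variable, partitioning its range into dyadic intervals $\left(2^{2^n-1},2^{2^{n+1}-1}\right]$ on which $\lceil\log(\log(d)+1)\rceil$ is constant, whereas you invert the hypothesis to get $\log d>2^{k-1}-1$ and induct on $k$ directly; the two inductions are equivalent in content, but yours avoids the interval bookkeeping and reduces the final step to a purely numerical inequality in $k$ alone, which is arguably cleaner.
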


    \begin{proof}
        We first verify the lemma for $k=1$. We have
        \[
            \frac{2d^{1-\beta(2)}}{8}=\frac14d^{\frac{2}{3}} \ge \frac14\sqrt{d} \ge \frac14\sqrt{100} \ge1.
        \]
        Next, assume $k\ge2$ and compute
        \[
            \frac{2d^{1-\beta(k+1)}}{(k+1)^3} \ge \frac{2d^{\frac67}}{(k+1)^3} \ge \frac{2d^{\frac67}}{(\lceil\log(\log(d)+1)\rceil+1)^3}.
        \]
        It thus suffices to prove that
        \[
            \lceil\log(\log(d)+1)\rceil+1 \le 2^{\frac13}d^{\frac{6}{21}}.
        \]
        It is easy to see that this inequality holds for $d=58$ (using any symbolic computation package), and since the left hand side is constant for all $d\in[58,128]$ whereas the right hand side is increasing, the inequality also holds for all $d\le128$. By the same reasoning, we observe that the left hand side is constant on any interval of the form $\left(2^{2^n-1},2^{2^{n+1}-1}\right]$ for integer $n\ge3$, and takes the value of $n+2$. In contrast, the right hand side is lower bounded by $2^{\frac13}\p{2^{2^n-1}}^{\frac{6}{21}}$ on each such interval. It is thus sufficient to prove that
        \[
            n+2\le 2^{\frac13}\p{2^{2^n-1}}^{\frac{6}{21}}
        \]
        for all integer $n\ge3$. We shall show this using induction. The base case can be easily verified for $n=3$. Assuming the induction hypothesis for $n$, we compute
        \begin{align*}
            2^{\frac13}\p{2^{2^{n+1}-1}}^{\frac{6}{21}} &= 2^{\frac13}\p{2^{2^{n}-1}}^{\frac{6}{21}} \cdot \frac{\p{2^{2^{n+1}-1}}^{\frac{6}{21}}}{\p{2^{2^{n}-1}}^{\frac{6}{21}}} \ge (n+2)\p{2^{2^n}}^{\frac{6}{21}}\\
            &\ge(n+2)\p{2^8}^{\frac{6}{21}} \ge 2n+4\ge n+3,
        \end{align*}
        where the first inequality follows from the induction hypothesis, and the second inequality follows from $n\ge 3$.
    \end{proof}

    \begin{lemma}\label{lem:intermediate_value_thm}
        Let $\mu$ denote the $d$-dimensional Lebesgue measure, Let $D\subseteq\reals^d$ be compact, and suppose that $g:D\to\reals$ is continuous and that $\Omega\subseteq[0,1]^d$ is a convex set satisfying $\mu(\Omega)>0$. Then there exists some $\bx_0\in\Omega$ such that
        \[
            \E_{\bx\sim\Ucal\p{D}}\pcc{g(\bx)|\bx\in \Omega} = g(\bx_0).
        \]
    \end{lemma}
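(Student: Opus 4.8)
The plan is to reduce this to the classical mean-value theorem for integrals together with the one-dimensional intermediate value theorem, with the convexity of $\Omega$ used precisely to guarantee that the averaging value is attained \emph{inside} $\Omega$ and not merely in its closure.

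First I would observe that, since $\Omega\subseteq D$ and $\mu(\Omega)>0$, the conditional distribution of $\bx\sim\Ucal(D)$ given $\bx\in\Omega$ is the uniform distribution $\Ucal(\Omega)$, so the quantity in question is simply the average $\overline{g}\coloneqq\frac{1}{\mu(\Omega)}\int_{\Omega}g\,d\mu$. Because $\Omega\subseteq[0,1]^d$, its closure $\overline{\Omega}$ is compact, so the continuous function $g$ is bounded on $\overline{\Omega}$ and has finite infimum $m\coloneqq\inf_{\Omega}g$ and supremum $M\coloneqq\sup_{\Omega}g$; monotonicity of the integral then gives $m\le\overline{g}\le M$.

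Next I would split into cases. If $m<\overline{g}<M$, I would choose $\ba,\bb\in\Omega$ with $g(\ba)<\overline{g}<g(\bb)$ (such points exist by the definitions of infimum and supremum, since $\overline{g}$ lies strictly between $m$ and $M$), note that the segment $[\ba,\bb]$ lies in $\Omega$ by convexity, and apply the intermediate value theorem to the continuous map $t\mapsto g\p{(1-t)\ba+t\bb}$ on $[0,1]$ to produce $\bx_0\in[\ba,\bb]\subseteq\Omega$ with $g(\bx_0)=\overline{g}$. In the remaining case $\overline{g}=m$ (the case $\overline{g}=M$ being symmetric), the identity $\int_{\Omega}(g-m)\,d\mu=0$ with nonnegative integrand $g-m$ forces $g=m$ $\mu$-a.e.\ on $\Omega$; I would upgrade this to $g\equiv m$ on all of $\Omega$ by recalling that a convex set of positive Lebesgue measure has nonempty interior, that the segment joining any $\by\in\Omega$ to a point of $\interior\Omega$ lies --- apart possibly from $\by$ itself --- in the open set $\interior\Omega$, and that $g(\by)>m$ for some $\by\in\Omega$ would, by continuity, force $g>m$ on a nonempty open subset of $\interior\Omega$, contradicting $g=m$ a.e. Hence $g$ is constant on $\Omega$, and any $\bx_0\in\Omega$ (which is nonempty since $\mu(\Omega)>0$) works.

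The one point requiring care --- and the reason the convexity hypothesis appears --- is ensuring that the chosen $\bx_0$ belongs to $\Omega$ itself; convexity is exactly what lets us connect near-extremal points of $\Omega$ by a segment that stays inside $\Omega$, while the degenerate situation where $\overline{g}$ equals $\inf_\Omega g$ or $\sup_\Omega g$ is dispatched by the almost-everywhere-constant argument above. Beyond this, the argument is routine and I anticipate no serious obstacle.
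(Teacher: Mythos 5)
Your proposal is correct and follows essentially the same route as the paper's proof: both rest on convexity of $\Omega$ to connect two points by a segment lying in $\Omega$ and then apply the one-dimensional intermediate value theorem along that segment, together with a continuity-plus-positive-measure argument to rule out (or dispatch) the case where the average coincides with $\inf_\Omega g$ or $\sup_\Omega g$. The only difference is organizational --- you case-split on whether $\overline{g}$ is strictly between the infimum and supremum, whereas the paper case-splits on whether $g$ is constant and shows non-constancy forces the strict inequalities --- and these are contrapositive formulations of the same argument.
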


    \begin{proof}
        Since $\Omega=(\partial\Omega\cap\Omega)\cup\interior(\Omega)$ is a disjoint union and since the boundary of a convex set in $\reals^d$ has measure zero \citep{lang1986note}, we have
        \[
            \E_{\bx\sim\Ucal\p{D}}\pcc{g(\bx)|\bx\in \Omega} = \frac{1}{\mu(\Omega)}\int_{\Omega}g(\bx)d\bx = \frac{1}{\mu(\Omega)}\int_{\interior(\Omega)}g(\bx)d\bx.
        \]
        Due to the above, we may assume w.l.o.g.\ that $\Omega=\interior(\Omega)$ is thus open and Lebesgue measurable. If $g$ is constant on $\Omega$ then the lemma holds true for all $\bx_0\in\Omega$. Suppose that $g$ is not constant on $\Omega$, then due to being continuous on the compact domain $D\supseteq\Omega$, it is bounded on $\Omega$, and there exist $\bx_1,\bx_2\in\Omega$ such that $g(\bx_1)<g(\bx_2)$. Denote
        \[
            -\infty<m\coloneqq\inf_{\bx\in\Omega} g(\bx)\le g(\bx_1)<g(\bx_2)\le\sup_{\bx\in\Omega} g(\bx) \eqqcolon M<\infty.
        \]
        In particular, by the continuity of $g$ and since $\Omega$ is open, there exists some open neighborhood $U\subseteq\Omega$ containing $\bx_2$ and satisfying $\mu(U)>0$ such that $g(\bx')\ge \frac{g(\bx_1)+g(\bx_2)}{2}>g(\bx_1)\ge m$ for all $\bx'\in U$. We now have
        \begin{align*}
            \frac{1}{\mu(\Omega)} \int_{\Omega}g(\bx)d\bx &= \frac{1}{\mu(\Omega)} \p{\int_{\Omega\setminus U}g(\bx)d\bx + \int_{U}g(\bx)d\bx}\\
            &\ge \frac{1}{\mu(\Omega)} \p{m\cdot\mu\p{\Omega\setminus U} + \int_{U}g(\bx)d\bx}\\
            &> \frac{1}{\mu(\Omega)} \p{m\cdot\mu\p{\Omega\setminus U} + m\cdot\mu(U)} = m.
        \end{align*}
        An analogous argument shows that $\frac{1}{\mu(\Omega)} \int_{\Omega}g(\bx)d\bx<M$, and we thus deduce that
        \begin{equation}\label{eq:intermediate_value}
            m<\frac{1}{\mu(\Omega)} \int_{\Omega}g(\bx)d\bx<M.
        \end{equation}
        Let $\set{\ba_n}_{n=1}^{\infty},\set{\bb_n}_{n=1}^{\infty}\subseteq\Omega$ such that $\lim_{n\to\infty}g(\ba_n)=m$ and $\lim_{n\to\infty}g(\bb_n)=M$. Then for any $\varepsilon>0$, there exists $n_0$ such that $g(\ba_{n_0})\le m+\varepsilon$ and $g(\bb_{n_0})\ge M-\varepsilon$. Consider the path $p:[0,1]\to\reals^d$ given by
        \[
            p(\lambda) \coloneqq \lambda\ba_{n_0} + (1-\lambda)\bb_{n_0}.
        \]
        From the convexity of $\Omega$ we have that $p(\lambda)\in\Omega$ for all $\lambda\in[0,1]$, and since $g(p(\lambda))$ is continuous in $\lambda$, for all $y\in[m+\varepsilon,M-\varepsilon]$ there exists some $\lambda\in[0,1]$ such that
        \[
            g(p(\lambda)) = y.
        \]
        In particular, using \eqref{eq:intermediate_value}, we can choose $\varepsilon>0$ sufficiently small such that
        \[
            \frac{1}{\mu(\Omega)} \int_{\Omega}g(\bx)d\bx \in [m+\varepsilon,M-\varepsilon],
        \]
        and find some $\lambda_0$ satisfying
        \[
            \frac{1}{\mu(\Omega)} \int_{\Omega}g(\bx)d\bx = g(p(\lambda_0)).
        \]
        Letting $\bx_0\coloneqq p(\lambda_0)\in\Omega$ gives the desired result.
    \end{proof}

    \begin{lemma}\label{lem:rescaling}
        Let $\Ncal_{k,\ell}$ denote the class of width $k$ and depth $\ell$ neural networks with an arbitrary activation function. Then there exists some $\varepsilon>0$ such that
        \[
            \inf_{\Ncal\in\Ncal_{k,\ell}}\E_{\bx\sim\Ucal\p{\pcc{0,1}^d}}\pcc{\p{\Ncal(\bx)-f_d(\bx)}^2} \ge \varepsilon,
        \]
        if and only if for all $R>0$ and all $a\in\reals$, we have
        \[
            \inf_{\Ncal\in\Ncal_{k,\ell}}\E_{\bx\sim\Ucal\p{\pcc{a,a+R}^d}}\pcc{\p{\Ncal(\bx)-f_d(\bx)}^2} \ge R^2\varepsilon.
        \]
    \end{lemma}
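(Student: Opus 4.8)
The plan is to exhibit an explicit, invertible correspondence on $\Ncal_{k,\ell}$ that converts the approximation problem on $[a,a+R]^d$ into the one on $[0,1]^d$, paying exactly a factor of $R^2$ in the squared error. The correspondence is driven by the two elementary invariances of the maximum, $f_d(R\bx + a\mathbf{1}) = R\,f_d(\bx) + a$ for every $R>0$ and $a\in\reals$ (where $\mathbf{1}=(1,\ldots,1)$), together with the fact that the affine map $\bx\mapsto R\bx+a\mathbf{1}$ pushes $\Ucal([0,1]^d)$ forward to $\Ucal([a,a+R]^d)$.

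Concretely, given $\Ncal\in\Ncal_{k,\ell}$, I would define
\[
    \Ncal^{\sharp}(\bx)\coloneqq \frac{1}{R}\left(\Ncal(R\bx + a\mathbf{1}) - a\right).
\]
Pre-composing $\Ncal$ with $\bx\mapsto R\bx+a\mathbf{1}$ only modifies the weights and biases of the first hidden layer, and post-composing with $t\mapsto (t-a)/R$ only modifies the output neuron; neither operation changes the width or the depth, so $\Ncal^{\sharp}\in\Ncal_{k,\ell}$, and $\Ncal\mapsto\Ncal^{\sharp}$ is a bijection of $\Ncal_{k,\ell}$ onto itself, with inverse $g\mapsto\left[\bx\mapsto R\,g\!\left((\bx-a\mathbf{1})/R\right)+a\right]$. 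Substituting $\by=R\bx+a\mathbf{1}$ and using the invariance of $f_d$,
\begin{align*}
    \E_{\by\sim\Ucal([a,a+R]^d)}\left[\left(\Ncal(\by)-f_d(\by)\right)^2\right]
    &= \E_{\bx\sim\Ucal([0,1]^d)}\left[\left(\Ncal(R\bx+a\mathbf{1})-R\,f_d(\bx)-a\right)^2\right]\\
    &= R^2\,\E_{\bx\sim\Ucal([0,1]^d)}\left[\left(\Ncal^{\sharp}(\bx)-f_d(\bx)\right)^2\right].
\end{align*}

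Taking the infimum over $\Ncal\in\Ncal_{k,\ell}$ on both sides and using that $\Ncal\mapsto\Ncal^{\sharp}$ ranges bijectively over the class, one obtains
\[
    \inf_{\Ncal\in\Ncal_{k,\ell}}\E_{\bx\sim\Ucal([a,a+R]^d)}\left[\left(\Ncal(\bx)-f_d(\bx)\right)^2\right]
    = R^2\inf_{\Ncal\in\Ncal_{k,\ell}}\E_{\bx\sim\Ucal([0,1]^d)}\left[\left(\Ncal(\bx)-f_d(\bx)\right)^2\right].
\]
The claimed equivalence then follows immediately: the right-hand infimum is $\ge\varepsilon$ iff the left-hand infimum is $\ge R^2\varepsilon$ for every $R>0$ and $a\in\reals$ (this is the ``only if'' direction of the lemma), and conversely specializing to $R=1$, $a=0$ recovers the hypothesis on $[0,1]^d$. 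There is no real obstacle here; the only point requiring a line of care is verifying that absorbing the input and output affine maps into the first and last layers genuinely leaves both the width and the depth untouched, so that the correspondence stays inside $\Ncal_{k,\ell}$ in both directions.
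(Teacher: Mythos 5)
Your proof is correct and follows essentially the same route as the paper: an affine change of variables exploiting the invariance $f_d(R\bx+a\mathbf{1})=Rf_d(\bx)+a$ together with the closure of $\Ncal_{k,\ell}$ under affine pre- and post-composition. The only difference is that the paper outsources the scaling step to a cited external theorem and handles the translation separately, whereas you carry out both in one self-contained bijection argument that in fact yields equality of the two infima.
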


    \begin{proof}
        The fact that the latter implies the former is immediate by substituting $R=1$ and $a=0$. For the reverse implication, observing that $f_d(\bx)=R\cdot f_d\p{\frac{1}{R}\bx}$ for all $\bx\in\reals^d$, we have by \citep[Theorem~9]{safran2019depth} that
        \[
            \inf_{\Ncal\in\Ncal_{k,\ell}}\E_{\bx\sim\Ucal\p{\pcc{0,1}^d}}\pcc{\p{\Ncal(\bx)-f_d(\bx)}^2} \ge \varepsilon
        \]
        implies
        \[
            \inf_{\Ncal\in\Ncal_{k,\ell}}\E_{\bx\sim\Ucal\p{\pcc{0,R}^d}}\pcc{\p{\Ncal(\bx)-f_d(\bx)}^2} \ge R^2\varepsilon,
        \]
        for all $R>0$. Writing the above expectation in integral form and performing the change of variables $\bx=\by+(a,\ldots,a)$, $d\bx=d\by$, the lemma follows.
    \end{proof}

\end{document}